\newcommand{\alglinelabel}{%
  \addtocounter{ALC@line}{-1}
  \refstepcounter{ALC@line}
  \label
}
\def\theHALC@line{\thealgorithm-\theALC@line}
\def\theHALC@rem{\thealgorithm-\theALC@rem}
\theoremstyle{plain}
\newtheorem{theorem}{Theorem}[section]
\newtheorem{proposition}[theorem]{Proposition}
\newtheorem{lemma}[theorem]{Lemma}
\newtheorem{corollary}[theorem]{Corollary}
\theoremstyle{definition}
\theoremstyle{remark}
\tikzstyle{data}=[rectangle,draw,fill=white]
\tikzstyle{op}=[circle,draw]
\tikzstyle{fleche}=[->,>=stealth',thick,rounded corners=4pt]
\definecolor{drawGreen}{RGB}{124,179,107}
\definecolor{drawRed}{RGB}{191,84,82}
\definecolor{drawBlue}{RGB}{102,142,189}
\definecolor{drawPurple}{RGB}{153,115,164}
\definecolor{drawOrange}{RGB}{213,154,0}
\begin{document}

\title{Straight-Through meets Sparse Recovery: the Support Exploration Algorithm}




\author[1,2]{Mimoun Mohamed}
\author[4]{François Malgouyres}
\author[1]{Valentin Emiya}
\author[3]{Caroline Chaux}

\affil[1]{Aix Marseille Univ, CNRS, LIS, Marseille, France}
\affil[2]{Aix Marseille Univ, CNRS, I2M, Marseille, France}
\affil[3]{CNRS, IPAL, Singapour}
\affil[4]{Institut de Mathématiques de Toulouse ; UMR5219 , Université de Toulouse ; CNRS , UPS IMT F-31062 Toulouse Cedex 9, France}



\vskip 0.3in



\date{}
\maketitle


\begin{abstract}
The {\it straight-through estimator} (STE) is commonly used to optimize quantized neural networks, yet its contexts of effective performance are still unclear despite empirical successes.
To make a step forward in this comprehension, we apply STE to a well-understood problem: {\it sparse support recovery}. 
We introduce the {\it Support Exploration Algorithm} (SEA), a novel algorithm promoting sparsity, and we analyze its performance in support recovery (a.k.a. model selection) problems. 
SEA explores more supports than the state-of-the-art, leading to superior performance in experiments, especially when the columns of $A$ are strongly coherent.
The theoretical analysis considers recovery guarantees when the linear measurements matrix $A$ satisfies the {\it Restricted Isometry Property} (RIP).
The sufficient conditions of recovery are comparable but more stringent than those of the state-of-the-art in sparse support recovery. Their significance lies mainly in their applicability to an instance of the STE.
 \end{abstract}

\section{Introduction}
\paragraph{Straight-through estimator.}
The use of quantized neural networks spares memory, energy, and computing resources during inference, making them essential for embedding neural networks \cite{yuan2023comprehensive, Sayed2023ASL}. An effective strategy is to learn the quantized weights. Seminal works \cite{courbariaux2015binaryconnect, courbariaux16BNN} rely on full-precision weights $w$ that evolve in the parameter space, while quantized weights $w_q=H_q(w)$ are obtained by applying a piecewise-constant quantization operator $H_q$.

Denoting by $F$ the computational chain from $w_q$ to the loss, the learning procedure $\underset{w}{\minimize} \, F\left(H_q\left(w\right)\right)$ relies on the computational graph:
\begin{center}
\begin{tikzpicture}
\node[data] (X) at (0,0) {$w$};
\node[op] (H) at (1.5,0) {$H_q$};
\node[data] (x) at (3,0) {$w_q$};
\node[op] (F) at (4.5,0) {$F$};
\node[data] (loss) at (6,0) {\textit{loss}};
\draw[fleche] (X) -- (H);
\draw[fleche] (H) -- (x);
\draw[fleche] (x) -- (F);
\draw[fleche] (F) -- (loss);
\end{tikzpicture}
\end{center}
Given a step-size $\eta$, the update of $w$ is performed as $w \leftarrow w - \eta \frac{\partial F}{\partial w_q}|_{w_q}$.
The motivation, as explained in \cite{hinton2012nnml, bengio2013estimating}, is that since $\frac{\partial H_q}{\partial w}|_w$ is either undefined or $0$, we cannot backpropagate using the chain rule $\frac{\partial F\circ H_q}{\partial w}|_w=\frac{\partial F}{\partial w_q}|_{w_q}\frac{\partial H_q}{\partial w}|_{w}$. The STE makes the coarse approximation $\frac{\partial F\circ H_q}{\partial w}|_{w}\approx \frac{\partial F}{\partial w_q}|_{w_q}$ to backpropagate the gradient through the piecewise-constant operator $H_q$. Many subsequent works improve these methods in various aspects \cite{yuan2023comprehensive, Sayed2023ASL}.

Although STE achieves state-of-the-art performance in training quantized weights for neural networks, it is poorly understood and has not been investigated beyond the context of quantization. We introduce an STE principle for sparsification, leading to a novel algorithm named the Support Exploration Algorithm (SEA) and present experimental evidence of its benefits in challenging, coherent settings such as spike deconvolution, as well as in systematic experiments like the phase transition diagram, see \cref{fig:dcv_mass:intro}.
Additionally, we establish theoretical guarantees for the STE-based algorithm.


\begin{figure}[tbh]
    \centering
            \includegraphics[width=0.35\linewidth]{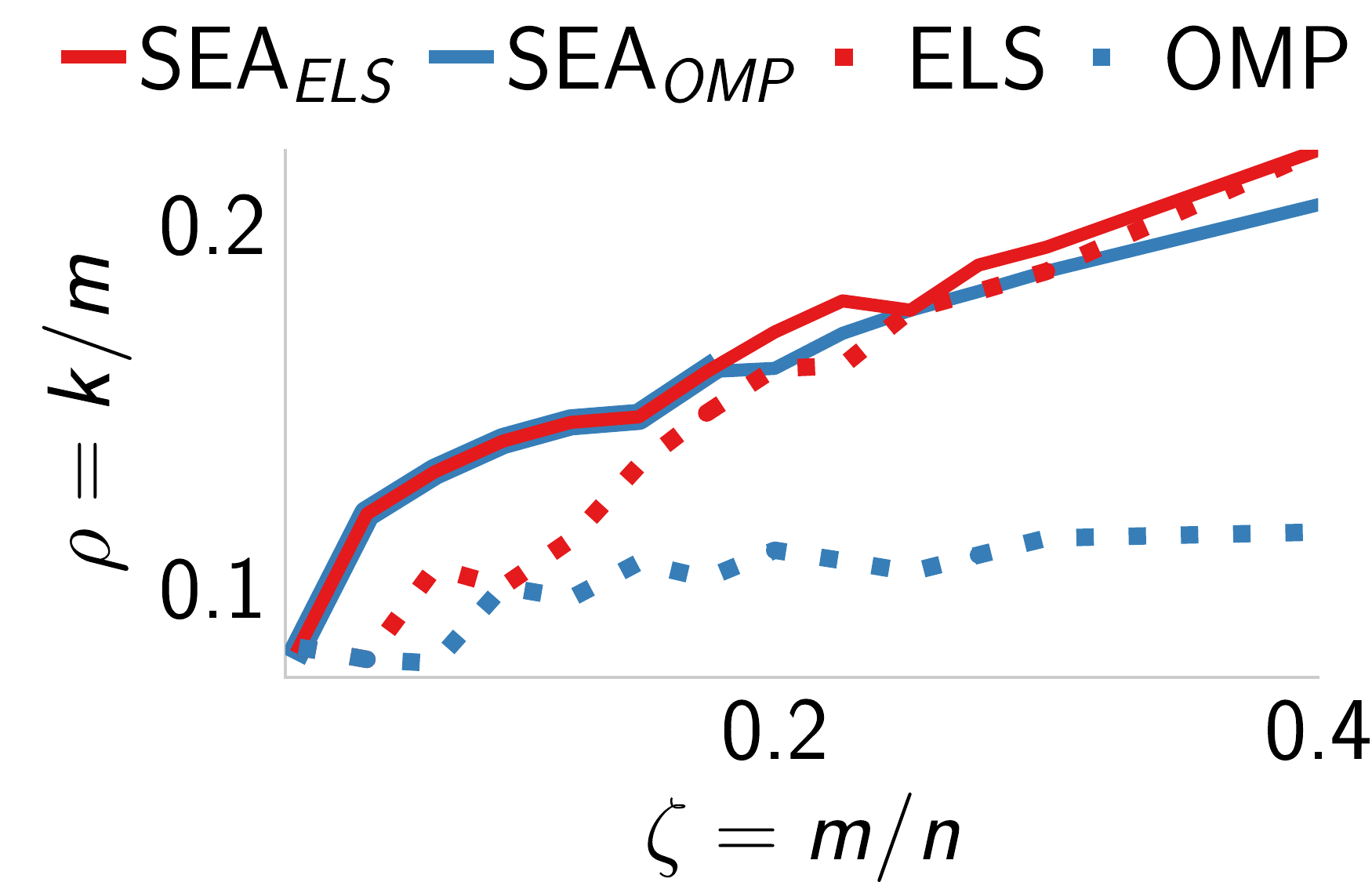}
            \hspace{2cm}
            \includegraphics[width=0.35\linewidth]{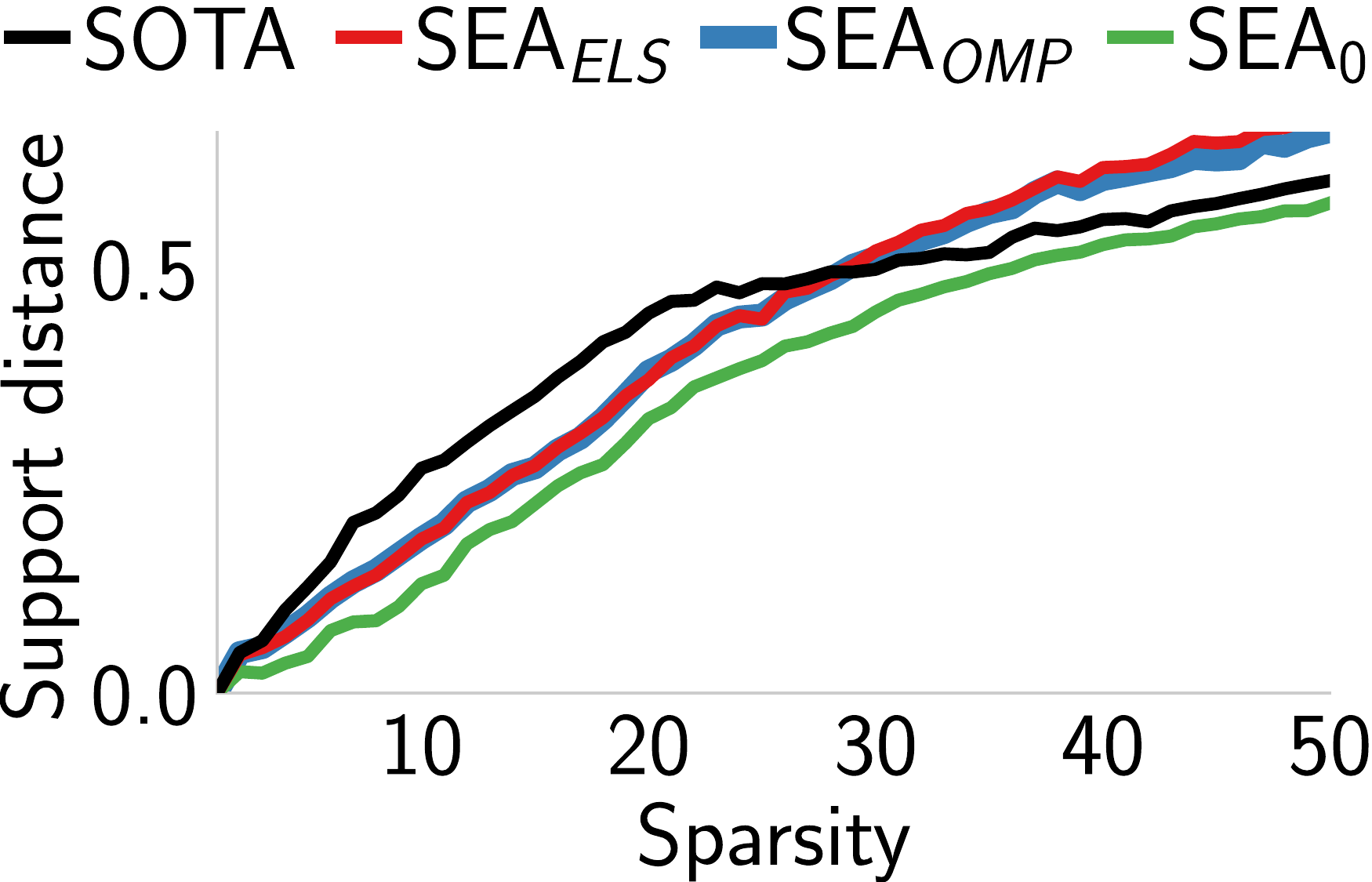}
    \caption{Overview of the main results.
    Left: phase transition diagram showing the recovery limits in dimension $n=500$ while sparsity $k$ and number of observations $m$ varies (the higher, the better, see details in Section~\ref{dt-sec}). Right: spike deconvolution in dimension $m=n=500$ - Average distance between the supports of the solution $\sol$ and the estimations obtained from various algorithms, plotted against the sparsity level $\spars$ (the lower, the better, see details in Section~\ref{deconv-sec}).}
    \label{fig:dcv_mass:intro}
\end{figure}

\paragraph{Sparse support recovery.}

For a sparsity $\spars \in \NN$, we assume $\sol \in \sigspace$ is an unknown sparse vector of unknown support $\suppsol = \SUPP{\sol}$, of sparsity $|\suppsol| \leq \spars$, $\mat \in \matspace $ is a known matrix, and 
\[\obs = \mat \sol + \noise \in \obsspace 
\]
is a linear observation of $\sol$, contaminated with an additive error/noise $\noise\in\obsspace$.

The support recovery objective\footnote{The adaptation of the article to ``signed support recovery'' is possible and is straightforward. We chose to simplify the presentation and not discuss sign recovery.}, also coined variable or model selection, searches for a support $S$ with cardinality at most $\spars$ such that $S^*\subseteq S$.
We say that {\it an algorithm recovers $S^*$} if it finds such an $S$.

Using a least-square criterion 
\[F\left(\sig\right) = \frac{1}{2}\normdd{\mat\sig - \obs},
\]
a famous model for support recovery is the optimization problem with sparsity constraint:
\begin{equation}
\underset{\sig \in \sigspace}{\minimize} \, F\left(\sig\right) \text{ s.t. } \normz{\sig} \leq \spars,
\label{eq:constrained_sparse_problem}
\end{equation}
where $\|x\|_0$ is the $\ell^0$ pseudo-norm of $x$. Problem \eqref{eq:constrained_sparse_problem} is known to be NP-hard \cite{davis1997adaptive} and to recover the correct support under mild conditions \citep[Chapter 5.2.2]{Elad2010}.

\paragraph{Proposed STE-based approach for sparse recovery.}
Let us define an unconstrained optimization problem which is equivalent to  problem~\eqref{eq:constrained_sparse_problem} and whose structure is compatible with the STE. We set
\begin{equation}
\underset{\explo \in \sigspace}{\minimize} \, F\left(H\left(\explo\right)\right) 
\label{eq:sea_problem}
\end{equation}
where $H$ is the sparsification operator\footnote{In scenarios of interest, the minimization problem \eqref{eq:sparsification_operator} has a unique and easy to compute solution.}
\begin{align}
    H\left(\explo\right)  \in 
\underset{\substack{\sig \in \sigspace \\ \SUPP{\sig} \subseteq \largest_\spars\left(\explo\right) }}{\argmin} \frac{1}{2}\normdd{\mat\sig - \obs}.
\label{eq:sparsification_operator}
\end{align}
The equivalence between \eqref{eq:constrained_sparse_problem} and \eqref{eq:sea_problem} is established in Appendix~\ref{app:problem_statement}.

Operator $H$ is piece-wise constant and finds the non-zero values in~$\sig$, the sparse support being induced by dense vector~$\explo$. 
Formulation \eqref{eq:sea_problem} has a similar structure as in the quantization case presented above. It uses the suitable loss $F$ and the sparsification operator $H$ in place of the quantification operator $H_q$.
Here, vector $\explo$ is dense and $\sig=H(\explo)$ is $k$-sparse.
Applying the STE to the new formulation~\eqref{eq:sea_problem}, we obtain the update $\explo \leftarrow \explo - \eta \frac{\partial F}{\partial x}|_{H(\explo)}$, for a step-size $\eta>0$, where we have approximated  $\frac{\partial F\circ H}{\partial \explo}|_{\explo}=\frac{\partial F}{\partial x}|_{H(\explo)} \frac{\partial H}{\partial \explo}|_\explo \approx \frac{\partial F}{\partial \sig}|_{H(\explo)}$. This leads to an original algorithm for sparse recovery, based on the STE, which we analyze in this article.

\paragraph{Contributions.} The first contribution of the article is to adapt the STE to the sparse support recovery problem (as explained above). Doing so, we obtain a new sparsity-inducing algorithm that we call  {\it Support Exploration Algorithm (SEA)}. It uses the full gradient history over iterations as a heuristic in order to select the next support to optimize over.  SEA is supported by support recovery guarantees. In \cref{thm:recovery_RIP} and \cref{cor:recovery_SRIP}, the sufficient hypotheses guaranteeing the support recovery are on the Restricted Isometry Property (RIP) constants of $A$ and  $x^*$. These conditions are comparable to those in the state-of-the-art, albeit slightly more stringent.
Their interest mainly lies in the fact that they apply to an instance of STE, for which very few guarantees of convergence exist.
However, the successes of SEA observed in the experiments extend to coherent problems where the RIP hypothesis is no longer satisfied. 
Additional support recovery statements are in \cref{thm:recovery} and in \cref{cor:recovery_ortho} of \cref{app:proof_RIP}. The proofs are based on the interpretation of SEA as a noisy version of an \lq Oracle algorithm\rq which is analyzed in \cref{oracle-sec}. 

The performances of SEA are compared to those of state-of-the-art algorithms on: 1/ phase-transition synthetic experiments for Gaussian matrices; 2/ spike deconvolution problems; 3/ classification and regression problems for real datasets.
An important feature of  SEA is that it can be used as a post-processing to improve the results of existing algorithms, as shown in the experiments.
Also, because SEA has the ability to explore more supports, it performs remarkably well when the matrix $A$ is coherent. The code is available in the git repository of the project.
\footnote{\url{https://gitlab.lis-lab.fr/valentin.emiya/sea-icml-2024}}

\paragraph{Organization of the article.}

Related works are detailed in \cref{SOTA-sec}. Then, SEA is described in \cref{method-sec}. The theoretical analysis is in \cref{theory-sec}. The experiments are in Section \ref{expe-sec}. Conclusions and perspectives are in Section \ref{discussion-sec}.

In Appendices, a thorough comparison between SEA and the most similar algorithms of the state-of-the-art is detailed in~\cref{app:algorithms}. \cref{app:algorithms} also details an efficient implementation of SEA.
The proofs of the theoretical statements as well as complementary support-recovery statements are in Appendix  \ref{app:proof_RIP}. Complementary experimental results are in Appendices \ref{app:dt}, \ref{app:deconv}, and \ref{app:MLexp}.


\section{Related Works}\label{SOTA-sec}

\paragraph{On the STE.}
Although STE achieves performances defining the state-of-the-art, it is poorly understood. In \cite{li2017training}, the authors show that STE behaves well on convex problems and that a stochastic variant of STE does not on non-convex ones. For a two-layer linear neural network with a quantized activation function,  a well-chosen STE converges to a critical point of the population risk \cite{Yin2019UnderstandingSE} or reproduces a teacher network \cite{long2021learning}.  These are the only known formal guarantees for STE. Other works study the large dimension geometry of binary weights \cite{g.2018the}. In \cite{helwegen2019latent} the authors interpret $w$ as an inertia variable and design a new (related) algorithm. In \cite{chengstraight}, the authors view the STE as a projected Wasserstein gradient flow.

\paragraph{On sparse prior and support recovery.}

Sparse representations and sparsity-inducing algorithms are widely used in statistics and machine learning~\cite{Hastie_T_2015_Statistical_SLwS}, as well as in signal processing~\cite{Elad2010}. For instance, in machine learning, sparse representations are used to select relevant variables. They are also sought to interpret trained models.
In signal processing, linear inverse problems have a wide array of applications. The sparsity assumption is ubiquitous since most real signals can be exactly or approximately represented as sparse signals in some domains, e.g., communication signals in Fourier space, natural images in wavelet space. While sparse models are appealing, they are hard to estimate due to the underlying combinatorial difficulty of identifying the correct sparse support.



Our algorithm has been designed in a support recovery context.
In the noisy case $e\neq 0$, support recovery is a stronger guarantee than the one in the most standard compressed sensing setting, initiated in \cite{candes2006robust,donoho2006compressed}, when the goal is to upper-bound $\normd{x-x^*}$, for a well-chosen $x$.
The first particularity of support recovery is to assume $x^*$ is truly $k$-sparse -- not just compressible. Also, support recovery guarantees always involve a hypothesis on $\min_{i\in S^*} |x^*_i|$, in addition of the incoherence hypothesis on $A$ \cite{wainwright2009sharp,meinshausen2006high,zhao2006model,cai2011orthogonal,NIPS2016_e9b73bcc}. We cannot indeed expect to recover an element $i\in S^*$ if $|x^*_i|$ is negligible when compared to all the other quantities involved in the problem \cite{wainwright2009sharp}.

\paragraph{Support recovery models and algorithms.}

Beyond \eqref{eq:constrained_sparse_problem}, various algorithms were investigated.
There are three main families of algorithms: relaxation, combinatorial approaches, and greedy algorithms.

The most famous relaxed model uses the $\ell^1$ norm and is known as the LASSO \cite{tibshirani1996regression} or Basis Pursuit Algorithm \cite{chen2001atomic}. 
Combinatorial approaches like Branch and Bound algorithms~\cite{benmhenni2021global}
, find the global minimum of \eqref{eq:constrained_sparse_problem} but lack scalability. 
Greedy algorithms can be divided into two categories. Greedy Pursuits like
Matching Pursuit (MP)~\cite{mallat1993matching} and Orthogonal Matching Pursuit (OMP)~\cite{pati1993orthogonal} 
are algorithms that start from an empty support and build up an estimate of $\sol$ by iteratively adding components to the current support and optimizing the components. As for thresholding algorithms like Iterative Hard Thresholding (IHT)~\cite{Blumensath2009}, Normalized Iterative Hard Thresholding (NIHT)~\cite{blumensath2010normalized}, Hard Thresholding Pursuit (HTP)~\cite{foucart2011hard},  Compressive Sampling Matching Pursuit (CoSaMP)~\cite{needell2009cosamp}, OMP with Replacement (OMPR)~\cite{jain2011orthogonal}, Exhaustive Local Search (ELS)~\cite{pmlr-v119-axiotis20a} (a.k.a. Fully Corrective Forward Greedy Selection with Replacement~\cite{shalev2010trading}) and  Subspace Pursuit (SP)~\cite{dai2009subspace}, they start from any vector and add a replacement step in the iterative process. It allows them to explore various supports before stopping at a local optimum. 

\paragraph{Position of the article.}

In this work, we take a different approach and apply the STE to a well-understood problem to compare its behavior, empirical performances, and theoretical guarantees to those of the well-established state-of-the-art. 

Compared to other sparse support recovery algorithms, the algorithm introduced in this article may belong to the family of greedy algorithms. A clear difference is the introduction of a non-sparse vector $\explo^t\in\RR^n$, which evolves during the iterative process and indicates which support should be tested at iteration $t$. We call $\explo^t$ the {\it support exploration variable}. It is the analog of the full-precision weights -- used by  BinaryConnect that also instantiates the STE -- to optimize binary weights of neural networks~\cite{courbariaux2015binaryconnect,hubara2016binarized}. We exhibit that the adaptation of STE for sparsification enables a different exploration/exploitation trade-off compared to the state-of-the-art. It explores more. This permits to obtain better performances than the state-of-the-art on very difficult --coherent-- problems. We establish that it is possible, in the sparse support recovery context, to obtain theoretical guarantees for the STE.


\section{Method}\label{method-sec}


After clarifying the notations in \cref{notation-sec}, SEA is described in detail in \cref{algo-sec}  and its computational complexity is discussed in \cref{complexity-sec}.

\subsection{Notations}\label{notation-sec}

For any $a, b \in \RR$ ($a$ and $b$ can be real numbers), the set of integers between $a$ and $b$ is denoted by $\intint{a, b}$ and $\floor{a}$ denotes the floor of $a$.
For any set $\supp \subseteq \intintn $, we denote the cardinality of $S$ by $\abs{S}$. 
The complement of $\supp$ in $\intintn$ is denoted by $\overline{\supp}$. 

Given $\sig \in \sigspace$ and $\ii \in \intintn$, the $\ii^{th}$ entry of $\sig$ is denoted by $\sig_\ii$. 
The support of $\sig$ is denoted by $\SUPP{\sig} = \{\ii : \sig_\ii \neq 0\}$.
The $\ell^0$ pseudo-norm of $\sig$ is defined by $\normz{\sig} = \abs{\SUPP{\sig}}$.
The set containing the indices of the $\spars$ largest absolute entries of $\sig$ is denoted by $\largest_\spars\left(\sig\right)$. 
When ties lead to multiple possible choices for $\largest_\spars\left(\sig\right)$, we select the solution with the highest indices. For instance, $\text{largest}_k(0)=\llbracket n-k+1, n \rrbracket$.

For any $\supp \subseteq \intint{1, \sigsize}$, $\mat \in \matspace$, and $\sig \in \sigspace$, we define $\truncv{\sig}{\supp} \in \field^{\abs{\supp}}$, the restriction of the vector $\sig$ to the indices in $\supp$, and $\matsupp \in \field^{\obssize \times \abs{\supp}}$, the restriction of the matrix $\mat$ to the set $\supp$ as the matrix composed of the columns of $\mat$ whose indexes are in $\supp$. 
The transpose of $\mat$ is denoted by $\mata \in \matspaceinv$.
The pseudoinverse of $\mat$ is denoted by $\matdag \in \matspaceinv$ and the pseudoinverse of $\matsupp$ by $\matdagsupp = (\matsupp)^\dag \in \field^{\abs{\supp} \times \obssize}$.
For any $d \in \NN$, the identity matrix of size $d$ is denoted by $I_{d}$. The symbol $\odot$ denotes the Hadamard product.

\subsection{The Support Exploration Algorithm}\label{algo-sec}


The proposed Support Exploration Algorithm (SEA) is given in Algorithm~\ref{alg:SEA}. 
In terms of pseudocode, SEA resembles many state-of-the-art algorithms and is close to HTP and IHT (see comparison between SEA, HTP, and IHT in Appendix~\ref{app:algorithms}).
However, it stands out from the others for its exploratory behavior, which stems from the STE principle behind it.

    \begin{algorithm}[tbh]
       \caption{Support Exploration Algorithm \label{alg:SEA}}
    \begin{algorithmic}[1] 
       \STATE {\bfseries Input:} noisy observation $\obs$, sampling matrix $\mat$, sparsity $\spars$, step size $\gradstep$
       \STATE {\bfseries Output:} sparse vector $\sigs$
       \STATE Initialize $\explo^0$ 
       \STATE $\iter \leftarrow 0$
       \REPEAT
       \STATE $\supp^\iter \leftarrow \largest_\spars\left(\explo^\iter\right)$ 
       \alglinelabel{line:SEA:update_S}
       \STATE $\begin{cases}
           \sigs^\iter_i & \leftarrow 0 \text{ for } i \in \overline{\supp^\iter}\\
           \sigs^\iter_{\supp^\iter} & \leftarrow \matdagsuppt \obs
       \end{cases}$
       \alglinelabel{line:SEA:update_x}
       \STATE $\explo^\iterp \leftarrow \explo^\iter - \grad$ \alglinelabel{line:SEA:update_explo}
       \STATE $\iter \leftarrow \iterp$
       \UNTIL{halting criterion is $true$}
       \STATE \alglinelabel{line:SEA:best} $\iterb \leftarrow \underset{\itero \in \intint{0, \iter}}{\argmin} \normdd{\mat\sigs^\itero - \obs}$
       \STATE {\bfseries return} $\sigs^\iterb$
    \end{algorithmic}
    \end{algorithm}

Each iteration begins by the foward pass given by \eqref{eq:sparsification_operator}, in which the current sparse solution $\sig^\iter$ is computed by applying the sparsification operator $H$ to a dense vector $\explo^\iter$:
after selecting the support $S^t$ from~$\explo^\iter$ at line~\ref{line:SEA:update_S},
the sparse solution $\sig^\iter= H(\explo^t)$ is computed at line~\ref{line:SEA:update_x}.
The backward pass uses the STE principle 
\[\frac{\partial F\circ H}{\partial \explo}|_{\explo^\iter}\approx\frac{\partial F}{\partial x}|_{\sig^\iter}=\mata(\mat\sigst - \obs),
\] 
at line~\ref{line:SEA:update_explo}.
To illustrate with the analogy with BinaryConnect \cite{courbariaux2015binaryconnect}, the non-sparse vector $\explo$ is the analog of the full-precision  weights and $H(\explo)$ is the analog of the quantized weights.
To the best of our knowledge, this is the first use of the STE to solve a sparse linear inverse problem.

The key idea is that support $\supp^\iter$ is designated at line~\ref{line:SEA:update_S} by a non-sparse variable $\explo^\iter$ called the \textit{support exploration variable}.
It offers an original mechanism to explore supports in a more diverse way than existing algorithms.
Variable $\mathcal{X}^{t+1}=\mathcal{X}^0-\eta\sum_{t'=0}^{t}A^T(Ax^{t'}-y)$ is actually an accumulation of gradients taken in the sparse iterates and is used to designate the support of the next sparse iterate $x^{t+1}$. Consequently, unlike other descent-based algorithms, $\explo^\iter$ is not confined to the neighborhood of $k$-sparse vectors.
Its evolution is not intended to make the objective function decrease at each iteration.
In this regard, since the algorithm explores supports in a way that allows the functional to sometimes increase, the retained solution is the best one encountered along the iterations (line~\ref{line:SEA:best}).
Illustrations of this phenomenon are given in \cref{app:deconv-precise} where one can see that the behavior of the loss along the iterations shows important variations when a new support is explored. This is an important difference with the aforementioned state-of-the-art algorithms, resulting in increased exploration.

An important feature of SEA is that it can be used as a post-processing of the solution $\hat x$ of another algorithm. This is simply done by initializing $\explo^0 = \hat x$. In this case $S^0 = \SUPP{\hat x}$ (line \ref{line:SEA:update_S}) and $x^0$  improves or is equal to $\hat x$ (line \ref{line:SEA:update_x}). 
In the experiments, we have investigated the initialization with the result of OMP \cite{pati1993orthogonal}, ELS \cite{pmlr-v119-axiotis20a,shalev2010trading} and the initialization $\explo^0 = 0$. We observe that the initialization with ELS is generally preferable except for  difficult problems, when columns of $A$ are very coherent  (see \cref{deconv-sec}).


Finally, as often, there are many possible strategies to design the halting criterion of the 'repeat' loop of Algorithm \ref{alg:SEA}. It is clear that a more permissive criterion allows for more exploration and better results, at the expense of computation time. We have not investigated this aspect in the experiments and leave this study for the future. We preferred to focus our experiments on the illustration of the potential benefits of SEA and, as a consequence, we always used a large  fixed number of passes in the 'repeat' loop of Algorithm \ref{alg:SEA}. 

Similarly, since we have $\explo^\iter=\explo^0-\eta\sum_{t'=0}^{t-1}\mat^T(\mat x^{t'}-\obs)$ for all $t\geq1$, $\eta$ has no impact on $S^t$ and $x^t$ when $\mathcal{X}^0=0$ and therefore on the output $x^{t_{BEST}}$ of Algorithm~\ref{alg:SEA} as $\largest_\spars(\mathcal{X}^{t})$ does not depend on $\eta$.
In this case, indeed, the whole trajectory $(\explo^\iter)_{t\in\NN}$ is dilated by $\eta>0$ and the dilation has no effect on the selected supports $S^t$. When $\explo^0 \neq 0$, the initial support exploration variable is forgotten as the iterations progress. It is forgotten more rapidly when $\eta$ increases. We have not studied the tuning of this parameter in depth, leaving it for future research.

\subsection{Computational Complexity}\label{complexity-sec}
An efficient implementation of SEA is described in Appendix~\ref{app:algorithms:sea_efficient}.
The analysis of the computational complexity of SEA is based on two facts.
First, if the support $\supp^\iter$ obtained at line~\ref{line:SEA:update_S} has already been explored, then the sparse vector $\sigs^\iter$ and the gradient $\grad$ have already been computed. So, if these quantities have been memorized (as in \cref{alg:SEA_efficient}, Appendix~\ref{app:algorithms:sea_efficient}), the cost of the iteration is negligible. The overall cost thus depends on the number of explored supports rather than on the number of iterations.
Second, each time a new support is extracted, the cost of the iteration is dominated by solving the 
(unconstrained) linear system $A_{S^t}^TA_{S^t}x_{S^t}=A_{S^t}^Ty$. While the pseudo-inverse is a convenient notation at line~\ref{line:SEA:update_x}, the solution may be obtained more efficiently, e.g. in $\mathcal{O}(\spars^2 n)$ to compute $A_{S^t}^TA_{S^t}$ and $A_{S^t}^Ty$, and apply the conjugate gradient algorithm.
The overall complexity is thus in $\mathcal{O}(n_{\text{supp}}\spars^2\sigsize)$ where $n_{\text{supp}}$ is the number of supports actually explored. 

The complexity of HTP, OMP, OMPR and ELS is also dominated by the number of times $A_{S}^TA_{S}x_{S}=A_{S}^Ty$ is solved for $S$ such that $\abs{\supp}=\spars$.
 As for SEA, efficient implementations of HTP, OMPR and ELS can save computations by storing all the explored support and related iterates. The HTP, OMP and OMPR then depend on the number of explored supports in a similar way as SEA. The OMP solves $\spars$ instances of them which results in less exploration and less computational cost. The ELS is much more demanding since it explores $(\sigsize-\spars)$ supports at each iteration, many of which are irrelevant. 
IHT has a lower complexity than SEA since it never inverses the system $A_{S}^TA_{S}x_{S}=A_{S}^Ty$.

As we will see in the deconvolution experiments in \cref{deconv-sec}, SEA outperforms ELS (see \cref{fig:dcv_mass}) while exploring two times less supports (see \cref{app:deconv-precise,app:deconv-n_support}).
The possibility of performing a random search with the same computational cost as SEA has been studied in \cref{app:deconv:random}.

\section{Theoretical Analysis}\label{theory-sec}

In this section, we provide the theorem stating that SEA recovers the correct support for some\footnote{In particular it is necessary that $\min_{i\in S^*} |x_i^*|$ is sufficiently large.} $x^*$ when the matrix $A$ satisfies a RIP constraint. Then, we compare the conditions with existing support recovery conditions for state-of-the-art algorithms. In addition to the statements in this section, recovery statements are given in Appendices \ref{app:proof_RIP}. The interest of the theorems lies mainly in the fact that they apply to an instance of STE, for which guarantees are rare. From the practitioner's point of view, the theoretical analysis is not useful since SEA mostly shows promises in coherent scenarios in which the RIP hypothesis is not satisfied. 

In this section, we assume that columns of $\mat$ are normalized: for any $\ii \in \intintn$, $\normd{\mat_\ii} = 1$. As has been standard practice since Candès and Tao first proposed it in \cite{candes2005decoding}, we define for all $l\in \intintn$ the $l$th Restricted Isometry Constant of $\mat$ as the smallest non-negative number $\delta_l$ such that for any $\sig \in \sigspace$, such that $\norm{\sig}_0 \leq l$,
\begin{equation}
\label{eq:RIP}
    (1-\delta_l)\normdd{\sig} \leq \normdd{\mat\sig} \leq (1+\delta_l)\normdd{\sig}.
\end{equation}
If $\delta_l < 1$, $\mat$ is said to satisfy the Restricted Isometry Property of order $l$ or the $l$-RIP.

In this section, we assume that $\mat$ satisfies the $(2\spars + 1)$-RIP. In the scenarios of interest, $\RIPdkp$ is small. We define
\begin{equation}
\label{eq:alpha_gammaRIP}
\alphaRIP = \alphaRIPf \in \RR^*_+ 
\qquad \text{ and }\qquad
\gammaRIP = \gammaRIPf \in \RR^*_+.
\end{equation}


As soon $\delta_k$ is far from $1$ (for example $\delta_k\leq \frac{1}{2}$), 
$\alpha_k^{RIP}$ has the order of magnitude of $\delta_{2k+1}$ (in the example $\delta_{2k+1}\leq\alpha_k^{RIP}\leq 3\delta_{2k+1}$)  and $\gamma_k^{RIP}$ has the order of magnitude of $1+\delta_{2k+1}$ (in the example $(1+\delta_{2k+1})\leq\gamma_k^{RIP}\leq\sqrt{6} (1+\delta_{2k+1})$).

As is typical of support recovery statements, the next theorem includes a condition on $x^*$. 
We call this condition  {\it the Recovery Condition for the RIP case (\CondRIP)}. It is defined by
\begin{equation}\tag{\CondRIP}
\label{eq:HRIP}
\gammaRIP \norm{\noise}_2 < \frac{\min_{i\in S^*} |x_i^*|}{2 k} - \alphaRIP \norm{\sol}_2.
\end{equation}


\begin{theorem}[Recovery - RIP case]
\label{thm:recovery_RIP}
Assume $\mat$ satisfies the $(2\spars+1)$-RIP and\footnote{The normalization aims at simplifying formulas by guaranteeing that $\delta_1=0$. It is done at no expense since, if $A$ is not normalized but satisfies \eqref{eq:RIP} for $l>1$, its normalization only has a small impact on $\delta_l$. Indeed, considering $\Delta\in\mathbb{R}^{n\times n}$ diagonal such that $\Delta_{i,i}=\|A_i\|_2$, $A\Delta^{-1}$ is normalized and for all $l$-sparse vector $x$ 
    $$ (1-\delta_l)\|\Delta^{-1}x\|_2^2\leq\|A\Delta^{-1}x\|_2^2\leq(1+\delta_l)\|\Delta^{-1}x\|_2^2.$$
    Using $1-\delta_1\leq\|A_i\|^2_2 \leq 1+\delta_1$, we can derive $l$-RIP constants for the normalized matrix $A \Delta^{-1}$.} for all $\ii \in \intintn$, $\normd{\mat_\ii} = 1$. 
Assume moreover that $\sol$ satisfies \eqref{eq:HRIP}. 

Then for all initializations $\explo^0$ and all $\eta>0$, there exists $t_s\leq \itermaxRIP$ such that $\suppsol \subseteq S^{t_s}$, where
\begin{equation}
\label{eq:TRIP}
    \itermaxRIP =\frac{2 k \frac{\|\explo^0\|_{\infty}}{\eta} + (k+1) \min_{i\in S^*} |x_i^*| }{\min_{i\in S^*} |x_i^*| - 2k \left(\alphaRIP\normd{\sol} + \gammaRIP\normd{\noise}\right)}.
\end{equation}

If moreover, $x^*$ is such that 
\begin{equation}\label{min_hyp_RIP}
    \min_{i\in S^*} |x^*_i| > \frac{2}{\sqrt{1-\delta_{2k}}} \|e\|_2 
\end{equation}
and SEA performs more than $\itermaxRIP$ iterations, then $S^* \subseteq S^{\iterb}$ and $\|x^{\iterb} - x^*\|_2 \leq \frac{2}{\sqrt{1-\delta_{k}}} \|e\|_2$.
\end{theorem}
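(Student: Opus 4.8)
The plan is to obtain the final statement as a short corollary of the first part of the theorem --- the existence of an iteration $t_s\le\itermaxRIP$ with $\suppsol\subseteq S^{t_s}$ --- combined with the least-squares optimality of the iterates produced at line~\ref{line:SEA:update_x} of Algorithm~\ref{alg:SEA} and the $(2\spars+1)$-RIP. Throughout I use the monotonicity of the restricted isometry constants, so that $\delta_\spars\le\delta_{2\spars}\le\delta_{2\spars+1}<1$, which makes all the square roots below well defined. Since SEA performs more than $\itermaxRIP$ iterations and $t_s\le\itermaxRIP$, the iterate $x^{t_s}$ is among those compared in the minimization at line~\ref{line:SEA:best}. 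Because $\suppsol\subseteq S^{t_s}$, the true vector $\sol$ is feasible for the minimization \eqref{eq:sparsification_operator} that defines $x^{t_s}$, hence $\normd{\mat x^{t_s}-\obs}\le\normd{\mat\sol-\obs}=\normd{\noise}$; by optimality of $\iterb$ at line~\ref{line:SEA:best}, this gives $\normd{\mat x^{\iterb}-\obs}\le\normd{\noise}$.

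Next I would prove $\suppsol\subseteq S^{\iterb}$ by contradiction. Suppose some $i\in\suppsol$ is not in $S^{\iterb}$; then $x^{\iterb}_i=0$ while $\sol_i\neq0$, so the vector $z:=\sol-x^{\iterb}$ is supported on $\suppsol\cup S^{\iterb}$, a set of cardinality at most $2\spars$, and $\normd{z}\ge|z_i|=|\sol_i|\ge\min_{j\in\suppsol}|\sol_j|$. On the other hand, the triangle inequality gives $\normd{\mat z}\le\normd{\mat\sol-\obs}+\normd{\mat x^{\iterb}-\obs}\le2\normd{\noise}$, and the $2\spars$-RIP applied to the $2\spars$-sparse vector $z$ yields $(1-\delta_{2\spars})\normdd{z}\le\normdd{\mat z}\le4\normdd{\noise}$, i.e.\ $\normd{z}\le\frac{2}{\sqrt{1-\delta_{2\spars}}}\normd{\noise}$. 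Comparing the two bounds on $\normd{z}$ contradicts hypothesis \eqref{min_hyp_RIP}, so $\suppsol\subseteq S^{\iterb}$.

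The error bound then follows from the same estimates: now that $\suppsol\subseteq S^{\iterb}$ and $|S^{\iterb}|\le\spars$, the vector $z=\sol-x^{\iterb}$ is $\spars$-sparse, so the $\spars$-RIP gives $(1-\delta_\spars)\normdd{z}\le\normdd{\mat z}\le4\normdd{\noise}$, whence $\normd{x^{\iterb}-\sol}=\normd{z}\le\frac{2}{\sqrt{1-\delta_\spars}}\normd{\noise}$, as claimed.

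I do not expect a genuine obstacle in this part: once the first statement of the theorem is granted, it is the familiar ``the minimal-residual iterate must cover the true support'' deduction, using only the triangle inequality and RIP. The real difficulty of the theorem is the first statement --- showing that the accumulated gradient variable $\explo^t$ drives some explored support $S^t$ to contain $\suppsol$ within $\itermaxRIP$ iterations --- which is carried out via the Oracle-algorithm analysis of \cref{oracle-sec} together with the recovery condition \eqref{eq:HRIP}; the final statement merely harvests it.
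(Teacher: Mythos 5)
Your treatment of the final claim is correct and is essentially identical to the paper's own argument in \cref{subsec:recovery_RIP}: you derive $\normd{\mat x^{\iterb}-\obs}\le\normd{\mat x^{t_s}-\obs}\le\normd{\noise}$ from the least-squares optimality of $x^{t_s}$ on $S^{t_s}\supseteq\suppsol$ and the definition of $\iterb$, then obtain $\suppsol\subseteq S^{\iterb}$ by contradiction using the $2\spars$-RIP on $\sol-x^{\iterb}$ together with \eqref{min_hyp_RIP}, and finally sharpen to $\delta_\spars$ once the difference is known to be $\spars$-sparse. No issue there.

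There is, however, a genuine gap: you do not prove the first (and, as you acknowledge, the central) assertion of the theorem, namely the existence of $t_s\le\itermaxRIP$ with $\suppsol\subseteq S^{t_s}$. Pointing to ``the Oracle-algorithm analysis of \cref{oracle-sec} together with \eqref{eq:HRIP}'' names the strategy but supplies none of its substance. The missing content is the whole quantitative chain of \cref{proof-thm-RIP-sec}: (i) the bound $\normd{(\sigst-\sol)_{\suppt}}\le\frac{\RIPdk}{1-\RIPk}\frac{\normd{\usolt}}{\gradstep}+\frac{\sqrt{1+\RIPk}}{1-\RIPk}\normd{\noise}$ obtained from the pseudo-inverse expression of $x^t_{S^t}$ and the RIP facts (\cref{lem:approximation_error_RIP}); (ii) the case analysis over $\ii\in\suppt$, $\ii\in\suppz$ and $\ii\in\suppx$ yielding $\frac{1}{\gradstep}\norminf{\usolt-\grad}\le\alphaRIP\normd{\sol}+\gammaRIP\normd{\noise}$ (\cref{lem:bound_gradnoise_RIP}) --- this is precisely where the column normalization and the $(2\spars+1)$-RIP, rather than the $2\spars$-RIP, are actually needed; and (iii) the verification that \eqref{eq:HRIP} then forces the general recovery condition \eqref{eq:Hr}, so that the counting argument of \cref{thm:recovery} applies and its $\itermax$ is dominated by $\itermaxRIP$. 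Without these steps the theorem is not established; your proposal only harvests a corollary of a statement it does not prove.
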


The proof is in \cref{app:proof_RIP}. To introduce the proof and provide the main intuition, we first detail in \cref{oracle-sec} a theorem and its proof stating that a variant of the support exploration algorithm, see \cref{alg:oracle}, using the oracle update rule defined by
$\explo^\iterp \leftarrow \explo^\iter - u^t$ where for all $i\in\intintn$
\[\usol^\iter_\ii =
    \begin{cases}
      -\gradstep \soli & \ii \in \suppx \\
      0 & \ii \in \suppxb,
    \end{cases}
\]
always recovers the true support $\suppsol$. This update rule depends on $x^*$ and has no practical application. Its interest lies in providing an \lq ideal\rq~update rule that allows for the fast recovery of the true support, as seen in \cref{thm:recovery:oracle}.

The intuition behind the success of the oracle update rule is that the non-zero entries of $\usol^\iter_\ii$ are for indices $\ii$ from the true support $\suppsol$ but for which $|\explo^\iter_\ii|$ is too small to be selected in $\supp^\iter$ at line~\ref{line:oracle:update_S}.
Whatever the initial content of $\explo^0$, the oracle update rule always adds the same increment to $\explo^\iter_\ii$, for $\ii \in \suppx $, and those for $i\in\overline{\suppsol}$ never change. This guarantees that, at some subsequent iteration $\iter'\geq \iter$,  the true support $\suppsol$ is recovered among the $\spars$ largest absolute entries in $\explo^{\iter'}$, i.e., $\suppsol \subseteq \supp^{\iter'}$.

The proof of \cref{thm:recovery_RIP} relies on measuring the deviation of the trajectory $(\explo^\iter)_{\iter\in\NN}$ defined by \cref{alg:SEA} from the trajectory defined with the oracle update rule. More precisely, in \cref{thm:recovery} of \cref{subsec:genrecovery}, we provide a sufficient condition on the discrepancy between the oracle update and the STE-update guaranteeing that SEA visits the true support $\suppsol$. Then, in \cref{proof-thm-RIP-sec}, we establish that the hypotheses of \cref{thm:recovery_RIP} ensure that the discrepency is sufficiently small to satisfy the hypothesis of \cref{thm:recovery}. 

We also establish in \cref{cor:recovery_ortho} of \cref{annexe-ortho} that when the columns of $\mat$ are orthonormal and $\noise=0$, SEA recovers $\sol$ in less than $k+1$ iterations. Despite being a sanity check with no interest in applications, this result provides a meaningful case where the oracle update rule and the STE update rule coincide.

We emphasize that none of the proofs rely on the fact that a function decays. In particular, as will be illustrated in the experiments, $(F(H(\explo^\iter)))_{\iter \in\NN}$ generally exhibits erratic behavior. This is because, by construction in \cref{alg:SEA}, the next support, designated by $\explo^\iterp$, is not restricted to supports for which $F\circ H$ decays. SEA explores more supports than algorithms with this restriction and, for instance, does not get trapped in local minima. 


Let us now discuss $\itermaxRIP$.

When (\ref{eq:HRIP}) holds, $\itermaxRIP$ increases as  $\min_{i\in S^*} |x_i^*| - 2k \left(\alphaRIP\normd{\sol} + \gammaRIP\normd{\noise}\right) $ decreases. In particular, the number of iterations required by the algorithm  to provide the correct solution increases when the information on some of the columns of $S^*$ diminishes, i.e. when $\min_{i\in S^*} |x_i^*|$ decreases.  Also, the initializations $\explo^0\neq 0$ have an apparent negative impact on the number of iterations required in the worst case. This is because in the worst-case $\explo^0$ is poorly chosen and SEA needs iterations to correct this poor choice.

When the conditions of Theorem~\ref{thm:recovery_RIP} are met, any $\explo^0$ and $\gradstep$ permit the recovery of $S^*$. $\explo^0$ and $\gradstep$ only influence $\itermaxRIP$. In this regard, since the larger $\gradstep$, the faster SEA overrides the initialization $\explo^0$, the choice of $\eta$ is very much related to the question of the quality of the initialization. The latter is often beneficial in practice.

To illustrate (\ref{eq:HRIP}), we provide below a simplified condition which is shown in \cref{cor:recovery_SRIP} to be stronger than (\ref{eq:HRIP}) in the noiseless scenario.
We say $\sol$ satisfies the Simplified Recovery Condition in the RIP case if there exists $\RIPthresh\in(0,1)$ such that
\begin{equation}\tag{\CondSRIP}
\label{eq:HSRIP}
    \RIPthreshf \leq \RIPthresh.
\end{equation}

\begin{corollary}[Noiseless recovery - simplified RIP case]
\label{cor:recovery_SRIP}
Assume $\normd{\noise} = 0$, $\mat$ satisfies the $(2\spars+1)$-RIP and for all $\ii \in \intintn$, $\normd{\mat_\ii} = 1$.

If moreover $\sol$ satisfies (\ref{eq:HSRIP}), then $\sol$ satisfies (\ref{eq:HRIP}).
As a consequence, for $\explo^0 = \zerosig$ and for all $\eta>0$, if SEA performs more than  
    $\itermaxSRIP = \frac{\spars + 1}{1 - \RIPthresh}$
iterations, we have $S^* \subseteq S^{\iterb}$ and $x^{\iterb} = x^*.$
\end{corollary}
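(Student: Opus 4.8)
The plan is to obtain the corollary as a specialization of \cref{thm:recovery_RIP}, the only preliminary point being that in the noiseless regime the simplified condition \eqref{eq:HSRIP} implies \eqref{eq:HRIP}. For that first step, set $\normd{\noise}=0$ in \eqref{eq:HRIP}: the term $\gammaRIP\normd{\noise}$ disappears and \eqref{eq:HRIP} collapses to the single scalar inequality
\[
\alphaRIP\normd{\sol}<\frac{\min_{i\in S^*}|x_i^*|}{2\spars},
\qquad\text{i.e.}\qquad
\frac{2\spars\,\alphaRIP\normd{\sol}}{\min_{i\in S^*}|x_i^*|}<1 .
\]
I would then unfold the definition of the left-hand side $\RIPthreshf$ of \eqref{eq:HSRIP} and check the elementary bound $\RIPthreshf\ge\frac{2\spars\,\alphaRIP\normd{\sol}}{\min_{i\in S^*}|x_i^*|}$ (using if needed $\delta_{\spars}\le\delta_{2\spars+1}$ and the order-of-magnitude estimates on $\alphaRIP$ recalled right after \eqref{eq:alpha_gammaRIP}). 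Since \eqref{eq:HSRIP} provides some $\RIPthresh\in(0,1)$ with $\RIPthreshf\le\RIPthresh$, chaining gives $\frac{2\spars\,\alphaRIP\normd{\sol}}{\min_{i\in S^*}|x_i^*|}\le\RIPthresh<1$, which is exactly \eqref{eq:HRIP} in the noiseless case.

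Next I would apply \cref{thm:recovery_RIP}, which is now available for every $\eta>0$. Taking $\explo^0=\zerosig$ makes the term $2\spars\frac{\|\explo^0\|_{\infty}}{\eta}$ in \eqref{eq:TRIP} vanish, so
\[
\itermaxRIP=\frac{(\spars+1)\min_{i\in S^*}|x_i^*|}{\min_{i\in S^*}|x_i^*|-2\spars\,\alphaRIP\normd{\sol}}=\frac{\spars+1}{1-\tfrac{2\spars\,\alphaRIP\normd{\sol}}{\min_{i\in S^*}|x_i^*|}}\le\frac{\spars+1}{1-\RIPthresh}=\itermaxSRIP,
\]
the denominator being positive and the inequality holding by the first step. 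It remains to check hypothesis \eqref{min_hyp_RIP}: for $\normd{\noise}=0$ it reduces to $\min_{i\in S^*}|x_i^*|>0$, which holds since $x_i^*\neq0$ for all $i\in S^*$ by definition of the support. Hence, if SEA runs more than $\itermaxSRIP$ --- a fortiori more than $\itermaxRIP$ --- iterations, the second conclusion of \cref{thm:recovery_RIP} yields $\suppsol\subseteq S^{\iterb}$ and $\|x^{\iterb}-x^*\|_2\le\frac{2}{\sqrt{1-\delta_{\spars}}}\normd{\noise}=0$, i.e.\ $x^{\iterb}=x^*$, which is the claim.

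I do not anticipate a real obstacle: the only thing requiring care is the first step, i.e.\ matching the macro-level definition of $\RIPthreshf$ in \eqref{eq:HSRIP} with the ratio $\frac{2\spars\,\alphaRIP\normd{\sol}}{\min_{i\in S^*}|x_i^*|}$ so that \eqref{eq:HSRIP} is certified as the more stringent condition, together with the (immediate) monotonicity of $\itermaxRIP$ in that ratio which makes $\itermaxSRIP$ a legitimate upper bound on $\itermaxRIP$. Everything else is a substitution of $\normd{\noise}=0$ and $\explo^0=\zerosig$ into \cref{thm:recovery_RIP}.
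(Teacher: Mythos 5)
Your proposal is correct and follows essentially the same route as the paper's own proof: substitute $\normd{\noise}=0$ to see that \eqref{eq:HSRIP} implies \eqref{eq:HRIP} (the left-hand side of \eqref{eq:HSRIP} is in fact exactly the ratio $\frac{2\spars\,\alphaRIP\normd{\sol}}{\min_{i\in S^*}|x_i^*|}$, so your ``elementary bound'' holds with equality and no appeal to $\delta_k\le\delta_{2k+1}$ is needed), then invoke \cref{thm:recovery_RIP} with $\explo^0=\zerosig$, bound $\itermaxRIP\le\itermaxSRIP$ by monotonicity of $u\mapsto\frac{\spars+1}{1-u}$ on $[0,1)$, and note that \eqref{min_hyp_RIP} is trivially satisfied when $\noise=0$ so that $S^*\subseteq S^{\iterb}$ and $x^{\iterb}=x^*$. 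No gaps.
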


The proof is in \cref{app:proof_SRIP}.

Compared to the support recovery guarantees for the LASSO \cite{wainwright2009sharp,meinshausen2006high,zhao2006model}, the OMP \cite{cai2011orthogonal}, the HTP \cite{foucart2011hard,NIPS2016_e9b73bcc} and the ARHT \cite{pmlr-v119-axiotis20a} the recovery conditions provided in \cref{thm:recovery_RIP} and \cref{cor:recovery_SRIP} for SEA are stronger. All conditions involve a condition on the incoherence of $A$ and a condition similar to \eqref{min_hyp_RIP}. In the case of the LASSO algorithm, the latter is not very explicit. However, none of the support recovery conditions involve a condition like (\ref{eq:HRIP}) and (\ref{eq:HSRIP}). Let elaborate on these two conditions.

One notable limitation of (\ref{eq:HRIP}) and (\ref{eq:HSRIP}) is that if $\alpha_k^{RIP} \neq 0$, there is no guarantee of recovering the support of an $\sol$ such that $\max_{i\in \suppsol} |\soli| \gg \min_{i\in \suppsol} |\soli|$. In fact, by increasing $\sol_j$ for $j\in\suppsol$ such that $|\sol_j| \neq \min_{i\in\suppsol} |\soli|$, we can maintain $\min_{i\in\suppsol} |\soli|$ unchanged while sufficiently increasing $\|\sol\|_2$, causing (\ref{eq:HRIP}) and (\ref{eq:HSRIP}) to fail. We do not have this problem with conditions similar to \eqref{min_hyp_RIP}.
We experimented on the influence of $\frac{\normd{\sol}}{\min_{\ii\in\suppsol}\abs{\soli}}$ on the recovery performances in \cref{app:deconv:u_10}.

Another critical question arises: What (necessary and sufficient) condition on $\alpha_k^{RIP}$, $\gamma_k^{RIP}$ and $\|e\|_2$ ensures the existence of $\sol$ satisfying (\ref{eq:HRIP}) or (\ref{eq:HSRIP})?

To answer this question for the condition (\ref{eq:HRIP}), we first remark that there exists $\sol$ satisfying (\ref{eq:HRIP}) if and only if there exists some constant $c\in\RR$ and $\sol$, such that for all $i\in S^*$, $x^*_i = c$, satisfying (\ref{eq:HRIP}). Indeed, if $\sol$ satisfies (\ref{eq:HRIP}), we take $c=\min_{i\in\suppsol} |\soli|$ and the vector $\tilde x^*$ defined by $\tilde  x^*_i= \min_{i\in\suppsol} |\soli|$ for all $i\in\suppsol$, and $\tilde x^*_i=0$ for all $i\not\in\suppsol$ also satisfies (\ref{eq:HRIP}), since we have $ \min_{i\in\suppsol} |\tilde x^*_i| =  \min_{i\in\suppsol} |\soli|$ and $\|\tilde x^*\|_2\leq \|\sol\|_2$.

We can rewrite  (\ref{eq:HRIP}) when for all $i\in S^*$, $x^*_i = c$ for some constant $c\in\RR$ and obtain
\[ \gammaRIP \|e\|_2 < |c| \left(  \frac{1 - 2 k \alphaRIP  |S^*|^{\frac{1}{2}}}{ 2 k }  \right). 
\]
The existence of $c\in\RR$ such that this condition holds only depends on the sign of $1 - 2 k \alphaRIP |S^*|^{\frac{1}{2}}$. If $1 - 2 k\alphaRIP |S^*|^{\frac{1}{2}}\leq 0$, there does not exist any $c$ satisfying the condition; if $1 - 2 k\alphaRIP |S^*|^{\frac{1}{2}} > 0$, any $c\in\RR$ satisfying
\[|c| \geq \left( \frac{2 k \gammaRIP}{1 - 2k\alphaRIP |S^*|^{\frac{1}{2}}}\right)  \|e\|_2
\]
leads to an $x^*$ that satisfies \eqref{eq:HRIP}. Therefore, when $|\suppsol|=k$, the condition $\alphaRIP < \frac{1}{2} k^{-\frac{3}{2}}$ is necessary and sufficient to guarantee the existence of an $x^*$ satisfying  \eqref{eq:HRIP}. Similar developments concerning (\ref{eq:HSRIP}) are provided in \cref{comments-RIP-sec}.

We remind that $\alphaRIP$ has the order of magnitude of $\delta_{2k+1}$. Therefore, the condition is $\alphaRIP < \frac{1}{2} k^{-\frac{3}{2}}$ is more stringent than the equivalent conditions for other state-of-the-art methods. For instance, the conditions described in \cite{pmlr-v119-axiotis20a} are of the form $\delta_{2k} < C$, for a universal constant $C<1$.

Initializing SEA with the solution of an algorithm enjoying better conditions of recovery is a simple and effective way for SEA to inherit its support recovery guarantee as soon as \eqref{min_hyp_RIP} holds. This can formally be proved using the same proof as in \cref{subsec:recovery_RIP}.


As will be seen later in the experiments of \cref{expe-sec}, SEA performs well even when $A$ is coherent. This is not explained by \cref{thm:recovery_RIP} and \cref{cor:recovery_SRIP} which use the RIP assumption. The main interest of the above theoretical results lies in the fact that they apply to an instance of the STE. Another theory needs to be developed to explain the good behavior of SEA for sparse support recovery when $A$ is coherent.



\section{Experimental Analysis}\label{expe-sec}

We compare SEA to state-of-the-art algorithms on two tasks in the noisy setting:  phase transition diagrams (\cref{dt-sec} and \cref{app:dt}) and spike deconvolution problems for signal processing (\cref{deconv-sec} and \cref{app:deconv}). 
For completeness, additional comparisons between SEA and state-of-the-art algorithms for linear and logistic regression tasks in supervised learning settings are provided in \cref{app:MLexp}.

The tested algorithms are Iterative Hard Thresholding (IHT)~\cite{Blumensath2009}, Hard Thresholding Pursuit (HTP)~\cite{foucart2011hard}, Orthogonal Matching Pursuit~\cite{mallat1993matching,pati1993orthogonal}, OMP with Replacement (OMPR)~\cite{jain2011orthogonal} and Exhaustive Local Search (ELS)~\cite{pmlr-v119-axiotis20a}. 
OMPR and ELS are initialized with the solution of OMP.
Three versions of SEA are studied: the cold-start version \SEAZ, where SEA is initialized with the null vector, and the warm-start versions \SEAELS and \SEAOMP, where SEA is initialized with the solutions of ELS and OMP, respectively. We have also studied HTP and IHT initialized with OMP and ELS. They are called \HTPOMP, \HTPELS, \IHTOMP and \IHTELS.

For all algorithms, each least-square projection for a fixed support, as in Line~\ref{line:SEA:update_x} of \cref{alg:SEA}, is solved using the conjugate gradient descent of SciPy~\cite{2020SciPy-NMeth}.
For all algorithms, $256\spars$ iterations are performed.
The results of HTP and to a lesser extent IHT and SEA depend on the choice of the step size.
For the sake of fairness of the comparison with OMP, OMPR and ELS, we did not optimize the choice of the step size.
The step size of SEA, HTP, and IHT is arbitrarily\footnote{We do not report further experiments for $\eta\in\{\frac{2^l}{L}~|~ \mbox{for }l=\intint{-3,+3}\}$ that do not significantly alter the results in terms of running time, stability, performance, and do not impact our conclusions on phase transition diagrams. Variation of the step size for IHT and HTP in the deconvolution experiment is reported in \cref{app:deconv:step_size} and does not impact our conclusions.} fixed to $\eta=\frac{1.8}{L}$, where $L$ is the spectral radius of $\mat$.
The columns of $A$ are normalized before solving the problem.
The sparse vector $\sol \in \sigspace$ is random.
Indexes of the support are randomly picked, uniformly without replacement.
The non-zero entries of $\sol$ are drawn uniformly in $[-2, -1]\cup[1, 2]$ as in~\cite{Elad2010}.
The noise $\noise$ is drawn uniformly using the same method as described in \cite{blanchard2015performance}. Their detailed descriptions are in the next two sections.
For each experiment, the metrics used for performance evaluation are defined in the corresponding subsection.
The code is implemented in Python 3 and is available in the git repository of the project~\footnote{\url{https://gitlab.lis-lab.fr/valentin.emiya/sea-icml-2024}}.
As explained in~\cref{complexity-sec} and in~\cref{app:algorithms:sea_efficient}, the computational cost of SEA mainly depends on the number of explored supports.
The illustration related to the number of explored supports for a fixed number of iterations and the efficiency of the exploration can be found in \cref{app:deconv-n_support}.

\subsection{Phase Transition Diagram Experiment}\label{dt-sec}

\begin{figure}
    \centering
    \includegraphics[width=0.7\linewidth]{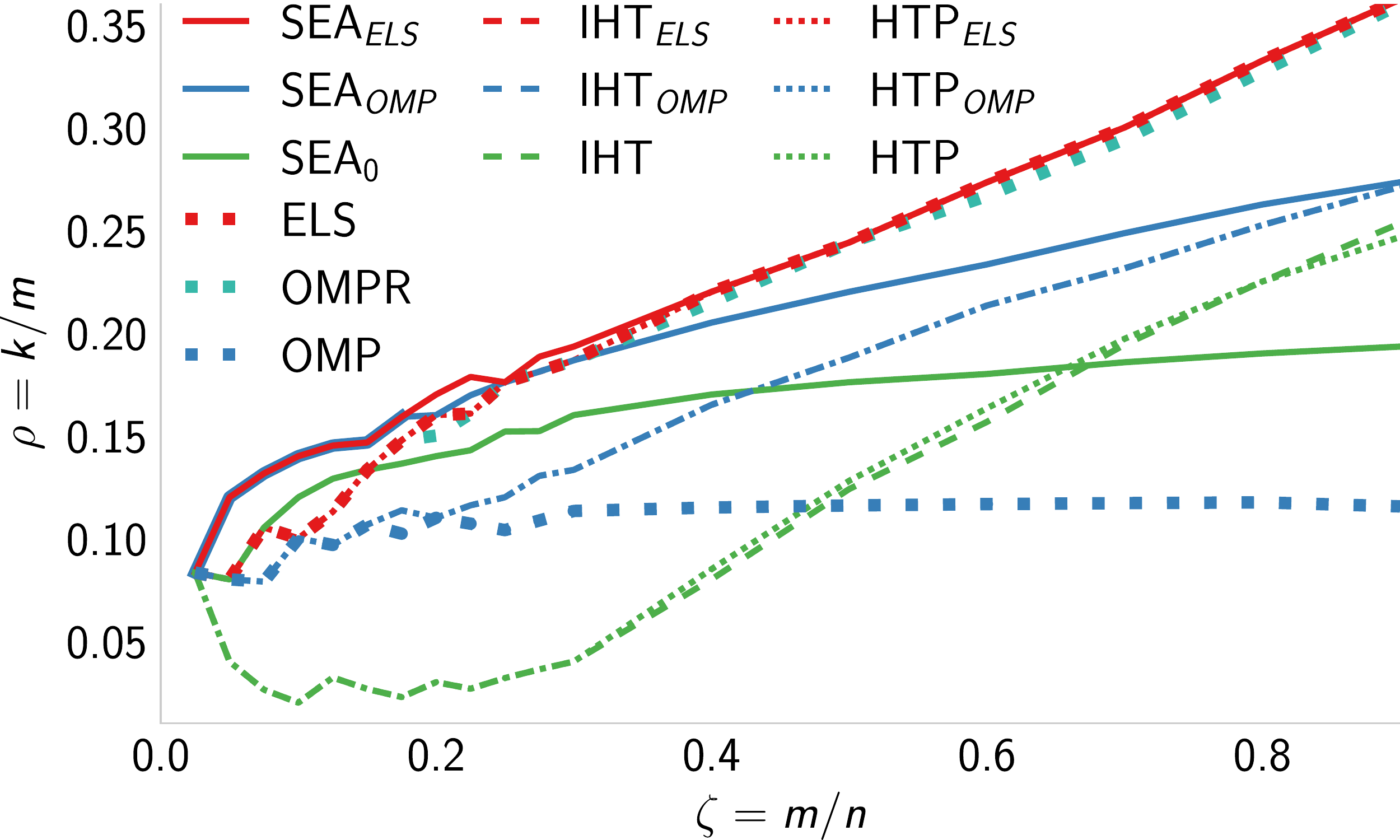} 
    \caption{Phase transition diagram: each curve is the threshold below which the related algorithm recovers at least $95\%$ of the supports. $\zeta$ denotes the ratio between the number of rows and the number of columns in $A$ while $\rho$ denotes the ratio between the sparsity and the number of rows in $A$. Matrix $A$ have i.i.d. standard Gaussian entries and non-zero entries in $\sol$ are drawn uniformly in $[-2, -1]\cup[1, 2]$. $n=500$ is fixed and results are obtained from $1000$ runs.}
    \label{fig:hm}
\end{figure}

Introduced by Donoho and Tanner~\cite{donoho2009observed} and used in compressed sensing \cite{donoho2009message, foucart2013invitation}, phase transition diagrams show the recovery limits of an algorithm depending on the undersampling/indeterminacy $\unsampf = \frac{\obssize}{\sigsize}$ of $\mat$, and the sparsity/density $\densef = \frac{\spars}{\obssize}$ of $\sol$.

We fix $\sigsize = 500$, $m$ takes $18$ values in $\intint{1, n}$ and $k$ all values in $\intint{1,0.5m}$.
For each triplet $(\obssize,\sigsize,\spars)$ and each algorithm, we run $\nruns = 1000$ experiments (described below) to assess the success rate $\frac{\success}{\nruns}$ of the algorithm, where $\success$ is the number of problems successfully solved. 
A problem is considered successfully solved if the support of the output of the algorithm is equal to $\suppsol$.
For each run, the entries of $\mat \in \matspace$ are drawn independently from the standard normal distribution.
The restricted isometry constants are poor when $\unsampf = \frac{\obssize}{\sigsize}$ is small and improve when $\obssize$ grows~\cite{bah2014bounds}.
The noise $\noise$ is drawn uniformly from the sphere of radius $0.01\normd{\mat\sol}$ in $\RR^m$.

\cref{fig:hm} shows results from this experiment.
Each curve indicates the threshold below which the algorithm has a success rate larger than $95\%$. The higher the curve, the better.
We see that OMP, HTP and IHT achieve poor recovery successes.
The smooth, decreasing part of the HTP and IHT curves on the left is an artifact due to the discrete values of $(m,n,k)$ and actually corresponds to a phase transition located at $k=1$.
\SEAZ outperforms OMP, HTP and IHT when $\frac{m}{n} < 0.6$.
All the OMP-initialized algorithms (in blue) improve OMP performance except in the most coherent cases ($\frac{m}{n} < 0.2$) where \HTPOMP and \IHTOMP fail while \SEAOMP exhibits the best improvement.
Contrary to \HTPELS and \IHTELS, \SEAELS (in red) improves further ELS performances and outperforms the other algorithms for all $\frac{\obssize}{\sigsize}$.
The main improvements are when $\frac{\obssize}{\sigsize}$ is small ($\frac{m}{n} < 0.4$), i.e., for the most coherent matrices $\mat$.
Thus, SEA refines a good support candidate into a better one by exploring new supports and achieves recovery for higher values of sparsity $\spars$ than competitors.
The actual superiority of \SEAELS and \SEAOMP for coherent matrices ($\frac{m}{n} < 0.3$) is a major conclusion from this experiment and illustrates its ability to successfully explore supports in difficult problems where competitors fail.
We study the noiseless setup (i.e., $\noise = \zeroobs$) in \cref{app:dt}.

\subsection{Deconvolution Experiment}\label{deconv-sec}

Deconvolution purposes arise in many signal processing areas such as microscopy or remote sensing.
Of particular interest is the deconvolution of sparse signals, also known as point source deconvolution~\cite{Bernstein2019} or spike deconvolution \cite{Duval2015,duval2017sparse}, assuming the linear operator is known (contrary to blind approaches \cite{pmlr-v97-kuo19a}). The objective is to recover spike positions and amplitudes.

We set $\sigsize = 500$, a convolution matrix $\mat$ corresponding to a Gaussian filter with a standard deviation equal to $3$.
The coherence of matrix $\mat$ is $\max_{i\neq j} |\mat_i^T \mat_j | = 0.97$, resulting in very difficult problems for which the support recovery theorems do not apply.
For each sparsity level $\spars\in\intint{1,50}$, every algorithm is tested on $\nruns = 200$ distinct problems corresponding to different $\spars$-sparse vectors $\sol$.
The maximal number of iterations is $1000$, for all algorithms.
The noise $\noise$ is drawn uniformly from the sphere of radius $0.1\normd{\mat\sol}$ of $\RR^m$, aiming for a signal-to-noise ratio of $20$dB.
Cases where sparsity is wrongly estimated and noise is stronger or applied differently are studied in \cref{app:deconv:wrong_k,app:deconv:noise_robustness,app:deconv:noise_on_x}.

\begin{figure}
    \centering
    \includegraphics[width=0.7\linewidth]{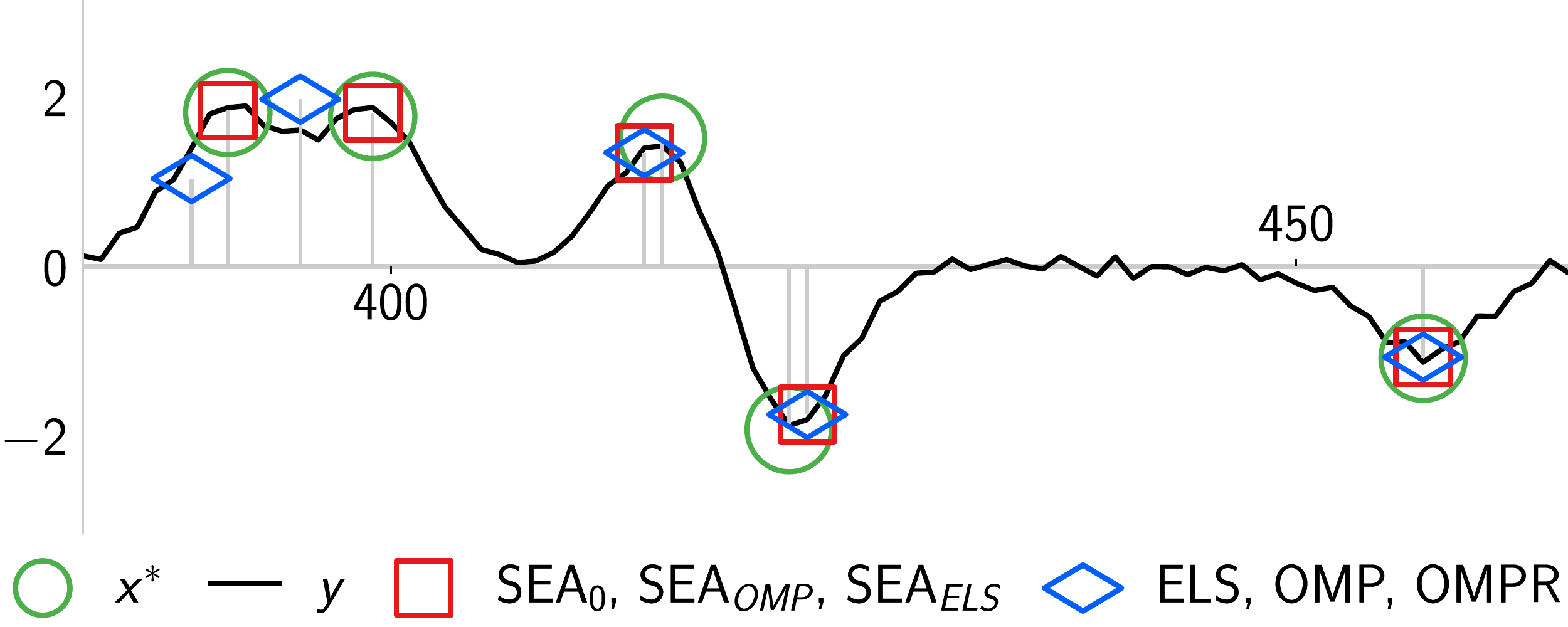}
    \caption{Spike deconvolution: representation of an instance of $\sol$ and $\obs$ with the solutions provided by the algorithms when $\spars = 20$. This is a cropped version of a crowded area (spikes are close).
    }%
    \label{fig:dcv_prec}%
\end{figure}

\cref{fig:dcv_prec} illustrates the results for a $20$-sparse vector $\sol$ restricted to a crowded area of the full signal (the later being depicted in \cref{app:dcv_prec}).
Generally speaking, isolated spikes are recovered by almost all algorithms. 
However, algorithms often fail to accurately identify spikes when they are close to each other.
For instance, ELS, OMP and OMPR falsely detect entries in the highest energetic part of the signal (around position 400) and are trapped in a local minimum. \SEAZ, \SEAOMP, and \SEAELS recover the original signal with a better precision than its competitors. 
It is worth mentioning that only SEA recovers perfectly this signal in the noiseless settings (see \cref{app:deconv:loss}).
To illustrate the exploratory behavior of SEA, we show in \cref{app:deconv-precise}, the evolution of $\normd{\mat \sigst-\obs}$ when $\iter$ and the number of explored supports varies, for the experiment of \cref{fig:dcv_prec}. 

\begin{figure}
    \centering
    \includegraphics[width=0.7\linewidth]{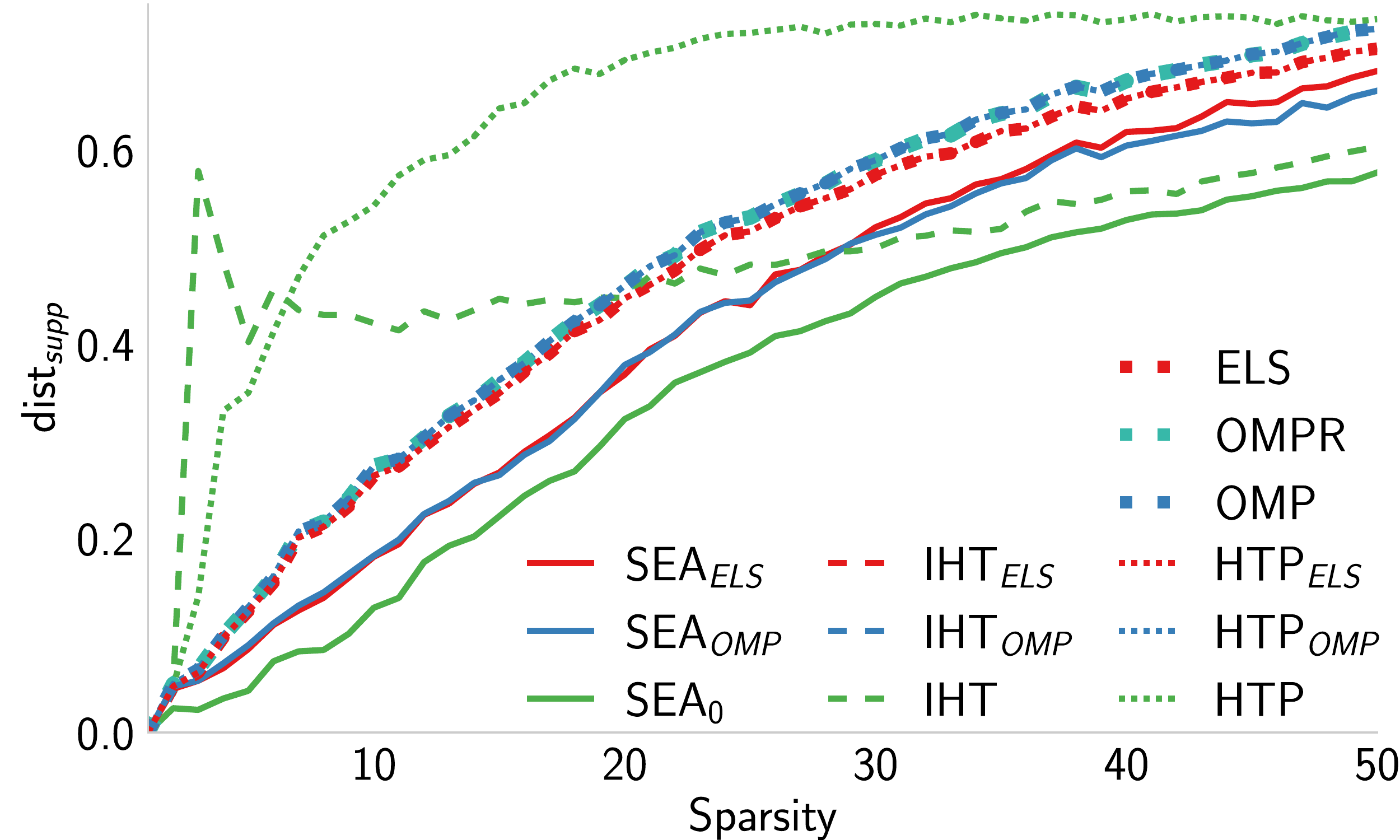}
    \caption{Spike deconvolution: average support distance between $\suppsol$ and the support of the solutions provided by several algorithms as a function of the sparsity level $\spars$.}
    \label{fig:dcv_mass}
\end{figure}

On \cref{fig:dcv_mass}, for each algorithm and for all $\spars\in\intint{1,50}$, we display the support distance metric \cite{Elad2010} averaged over $\nruns=200$ runs and defined by
\begin{equation}
    \text{dist}_{\text{supp}}(\sig) = \frac{\spars - \abs{\suppsol \cap \ \SUPP{\sig}}}{\spars}
    \label{eq:supp_dist}
\end{equation}
(the lower the distance, the better).
For all considered sparsity values, \SEAZ, \SEAOMP, and \SEAELS outperform the other algorithms.
SEA improves OMP and ELS results while they are never enhanced by HTP nor IHT (curves are superimposed).
Note that for small $k$, IHT shows poor performance because it assigns several neighboring elements of the support to the largest peak of $y$ and fails to correct this error afterward.
As $\spars$ increases, due to the increasing difficulty of the problem, the algorithms are gradually becoming unable to recover $\suppsol$.
Using a cold-start strategy, \SEAZ is here the best performing algorithm. The analyses conducted in \cref{app:deconv-n_support} indicate that the exploration carried out by SEA can be more efficient than the support element swaps performed by ELS. These experiments also suggest that a warm-start strategy, such as \SEAELS or \SEAOMP, may lead the algorithm to get trapped in a local minimum. The choice of the best strategy appears to depend on the quality of the initialization. We recommend selecting it based on empirical performance. 
The same conclusions are drawn when using additional metrics (\cref{app:deconv-global}) and in the noiseless case (\cref{app-deconv-noiseless}).

\section{Conclusions and Perspectives}
\label{discussion-sec}

In this article, we proposed SEA: a new principled algorithm for sparse support recovery, based on STE. Experiments show that SEA supplements state-of-the-art algorithms and outperforms them in particular when $A$ is coherent, thanks to its better exploration ability. Indeed, SEA initialized with the output of ELS is generally a good strategy to try to improve recovery results. Nonetheless, the cold-start strategy where SEA is initialized at $0$ may also be profitable: it is the best setting in problems with very coherent matrices like in the deconvolution experiment. Understanding which strategy should be preferred remains an open question.

We established guarantees when the matrix $A$ satisfies the RIP, which we hope gives new insight on the STE.
The theoretical guarantees involve conditions on $\sol$ that are not present for similar statements for other algorithms and that might restrict their applicability.
Improving the theoretical analysis in the following directions are promising perspective. 
The algorithm perform well when $A$ is coherent: this is not explained by the current theoretical analysis which only applies to matrices satisfying the RIP. Checking explicitly RIP conditions being NP-hard \cite{Tillmann2014}, we will investigate theoretical guarantees based on mutual incoherence~\cite{Donoho2001}.
Also it would be interesting to adapt the strategy developed for obtaining the theoretical guarantees to other contexts, such as the optimization of quantized neural networks. 

Finally, this paper opens up broader perspectives. The proposed STE is a deterministic approach for support exploration and may also be compared to or extended by the use of stochastic heuristics. Also, it would be of interest to study, either theoretically or numerically, the behavior of SEA in the compressed sensing setting. There are many perspectives of SEA and STE applications to sparse inverse problems such as sparse matrix factorization, tensor problems, as well as real-world applications such as in biology and astronomy.

\section*{Acknowledgement}
F. Malgouyres gratefully acknowledges the support of IRT Saint Exupéry and the DEEL project\footnote{\url{https://www.deel.ai/}} and thanks Franck Mamalet for all the discussions on the STE.

M. Mohamed was supported by a PhD grant from ``Emploi Jeunes Doctorants (EJD)'' plan which is funded by the French institution ``Conseil régional Provence-Alpes-Côte d'Azur'' and Euranova France. M. Mohamed gratefully acknowledges their financial support.

\bibliography{abbr,bibliography}
\bibliographystyle{plain}

\newpage
\appendix

\section{Problem Statement}
\label{app:problem_statement}
The equivalence between problem~\eqref{eq:constrained_sparse_problem} and problem~\eqref{eq:sea_problem} is established by the following proposition. Before stating the proposition, let us remind
\begin{equation}\label{eq:least_square_appendix}
F\left(\sig\right) = \frac{1}{2}\normdd{\mat\sig - \obs}\qquad, \forall \sig \in \sigspace
\end{equation}
and
\begin{equation}\label{eq:sparsification_operator:appendix}
H\left(\explo\right) \in
\underset{\substack{\sig \in \sigspace \\ \SUPP{\sig} \subseteq \largest_\spars\left(\explo\right) }}{\argmin} \frac{1}{2}\normdd{\mat\sig - \obs}\qquad, \forall \explo \in \sigspace.
\end{equation}
Let us also recall the optimization problem~\eqref{eq:constrained_sparse_problem}
\begin{equation}
\underset{\sig \in \sigspace}{\minimize} \, F\left(\sig\right) \text{ s.t. } \normz{\sig} \leq \spars
\label{eq:constrained_sparse_problem:appendix}
\end{equation}
and the optimization problem~\eqref{eq:sea_problem}
\begin{equation}
\underset{\explo \in \sigspace}{\minimize} \, F\left(H\left(\explo\right)\right).
\label{eq:sea_problem:appendix}
\end{equation}
\begin{proposition}[Optimization problem equivalence]
\label{thm:problem_statement}
For all $m,n,\spars \in \NN$, $\mat \in \matspace $ and $\obs\in\obsspace$. Problem \eqref{eq:constrained_sparse_problem:appendix} is equivalent to problem \eqref{eq:sea_problem:appendix}, in the sense that 
\begin{enumerate}
\item for any solution $\explo^* \in \argmin_{\explo \in \RR^n} F(H(\explo))$ of \eqref{eq:sea_problem:appendix},  $H(\explo^*)$ is solution of \eqref{eq:constrained_sparse_problem:appendix}. 
\item  for any minimizer $x'$ of \eqref{eq:constrained_sparse_problem:appendix}, we have $x' \in \argmin_{\explo \in \RR^n} F(H(\explo))$, i.e.,  $x'$ is solution of \eqref{eq:sea_problem:appendix}. 
\end{enumerate}

\end{proposition}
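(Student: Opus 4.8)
The plan is to prove that the two problems share the same optimal value $v^\star := \min_{\normz{z}\le\spars} F(z)$, that $H$ sends minimizers of \eqref{eq:sea_problem:appendix} to minimizers of \eqref{eq:constrained_sparse_problem:appendix}, and that every minimizer of \eqref{eq:constrained_sparse_problem:appendix} is already a minimizer of \eqref{eq:sea_problem:appendix}. Everything rests on two elementary facts: (a) $H(\explo)$ is always feasible for \eqref{eq:constrained_sparse_problem:appendix}; (b) any optimal point $x'$ of \eqref{eq:constrained_sparse_problem:appendix} is an admissible competitor in the minimization \eqref{eq:sparsification_operator:appendix} that defines $H$ when one feeds it its own value $\explo=x'$ as the support-exploration vector.

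First I would record fact (a): $\largest_\spars(\explo)$ has cardinality at most $\spars$, so $\SUPP{H(\explo)}\subseteq\largest_\spars(\explo)$ forces $\normz{H(\explo)}\le\spars$; hence $F(H(\explo))\ge v^\star$ for every $\explo\in\RR^n$, and therefore $\inf_{\explo} F(H(\explo))\ge v^\star$. Along the way I would observe that $v^\star$ is attained and $H$ is well defined: $F$ restricted to any of the finitely many coordinate subspaces $\{z:\SUPP{z}\subseteq S\}$ with $\abs{S}\le\spars$ is a nonnegative convex quadratic whose infimum is attained (ordinary least squares), so the minima in both \eqref{eq:constrained_sparse_problem:appendix} and \eqref{eq:sparsification_operator:appendix} exist; I would also note that $F$ is constant on the argmin set defining $H(\explo)$, so $F\circ H$ is single-valued even though $H$ need not be.

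Then I would take a minimizer $x'$ of \eqref{eq:constrained_sparse_problem:appendix}, so $F(x')=v^\star$ and $\normz{x'}\le\spars$. Since $x'$ has at most $\spars$ nonzero entries, every active coordinate of $x'$ lies among its $\spars$ largest absolute entries, hence $\SUPP{x'}\subseteq\largest_\spars(x')$; feeding $\explo=x'$ into \eqref{eq:sparsification_operator:appendix} shows $x'$ is admissible there, so $F(H(x'))\le F(x')=v^\star$. Combined with the reverse inequality from fact (a), this gives $\inf_\explo F(H(\explo))=v^\star$ and $F(H(x'))=v^\star$, i.e. $x'\in\argmin_\explo F(H(\explo))$, which is statement~2. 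For statement~1, if $\explo^\star$ minimizes \eqref{eq:sea_problem:appendix} then $F(H(\explo^\star))=v^\star$; since $H(\explo^\star)$ is feasible for \eqref{eq:constrained_sparse_problem:appendix} by fact (a) and attains $v^\star$, it solves \eqref{eq:constrained_sparse_problem:appendix} (and the same holds for every element of the argmin set, $F$ being constant on it).

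The main — really the only — subtlety is the tie-breaking claim $\SUPP{x'}\subseteq\largest_\spars(x')$ whenever $\normz{x'}\le\spars$: one must check that the ``select the highest indices on ties'' convention cannot push an active coordinate out of the top-$\spars$ set, which is immediate once one notes that active coordinates have strictly larger absolute value than the inactive (zero) ones, so any tie that is broken is a tie among zero entries only. The remaining steps are bookkeeping with the definition of $H$ and the solvability of least squares; no RIP or incoherence assumption on $\mat$ is used, and the statement holds for arbitrary $\mat\in\matspace$, $\obs\in\obsspace$ and any $\spars$, the case $\spars\ge n$ being trivial since then $\largest_\spars(\explo)=\intintn$ and $H$ is just the (unconstrained) least-squares map.
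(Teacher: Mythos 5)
Your proposal is correct and follows essentially the same route as the paper's proof: both arguments hinge on the two facts that $H(\explo)$ is always a feasible ($k$-sparse) point of \eqref{eq:constrained_sparse_problem:appendix} and that any $k$-sparse $x'$ satisfies $\SUPP{x'}\subseteq\largest_\spars(x')$ so that $F(H(x'))\leq F(x')$; you merely organize the inequalities through the common optimal value $v^\star$ instead of chaining them directly. Your extra remarks on tie-breaking, attainment of the least-squares minima, and single-valuedness of $F\circ H$ are sound refinements of details the paper leaves implicit, not a different approach.
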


\begin{proof}
To establish the first item, we consider a solution $\explo^* \in \argmin_{\explo \in \RR^n} F(H(\explo))$  of \eqref{eq:sea_problem:appendix}. By definition of $H$, in \eqref{eq:sparsification_operator:appendix}, $H(\explo^*)$ is $k$-sparse. To prove that it minimizes \eqref{eq:constrained_sparse_problem:appendix}, consider $x\in\RR^n$ such that $\|x\|_0 \leq k$, we have
\begin{equation}\label{eqrongoun}
F(H(\explo^*)) \leq F(H(x)) \leq F(x),
\end{equation}
where the first inequality is due to the hypothesis on $\explo^*$, and the last inequality to the definition of $H$. Finally, since $H(\explo^*)$ is $k$-sparse and \eqref{eqrongoun} holds for all $k$-sparse vector $x$, we conclude that $H(\explo^*)$ is solution of \eqref{eq:constrained_sparse_problem:appendix}. 

To prove the second item, consider a minimizer $x'$ of \eqref{eq:constrained_sparse_problem:appendix} and $\explo\in\RR^n$. By definition of $H$, $H(\explo)$ is $k$-sparse. Using that $x'$ is solution of \eqref{eq:constrained_sparse_problem:appendix}, we therefore have
\begin{equation}\label{zrgonetg}
F(x') \leq F(H(\explo)).
\end{equation}
Moreover, since $x'$ is $k$-sparse, we have $\SUPP{\sig'} \subseteq \largest_\spars\left(x'\right)$, and by the definition of $H$,
\[F(H(x')) \leq F(x').
\]
Combining with \eqref{zrgonetg}, we obtain $F(H(x')) \leq F(H(\explo))$, for all $\explo\in\RR^n$,  and conclude that $x'$ is solution of \eqref{eq:sea_problem:appendix}.

\end{proof}

\section{Additional Algorithms}
\label{app:algorithms}
In this appendix, more details are given about SEA pseudo-code: the main differences with state-of-the-art algorithms HTP and IHT are discussed in Section~\ref{app:algorithms:sota} and tricks for an efficient implementation of SEA are given in Section~\ref{app:algorithms:sea_efficient}.

\subsection{State-of-the-Art Algorithms}
\label{app:algorithms:sota}

In terms of pseudo-code, SEA looks similar to Hard Thresholding Pursuit (Algorithm~\ref{alg:HTP}, \citep{foucart2011hard}) and to a less extent to Iterative Hard Thresholding (Algorithm~\ref{alg:IHT}, \citep{Blumensath2009}).
In this section, we highlight the differences between these algorithms. In particular, in \cref{alg:SEA:bis}, \cref{alg:HTP} and \cref{alg:IHT} distinctions are pointed out in red.

Both HTP and IHT are projected descent algorithms that alternate a gradient step at a sparse estimate $\sigst$ and a projection of the resulting variable $\explo^\iter$ onto the set of sparse vectors.
The whole difference with SEA lies in the introduction of the support exploration variable $\explo^\iter$ and its interaction with the sparse vector $\sigst$.
HTP and IHT perform a regular gradient step $\explo^\iterp \leftarrow \sigs^\iter - \grad$ (where $\explo$ denotes an intermediate variable here, not a support exploration variable) while SEA uses an STE update $\explo^\iterp \leftarrow \explo^\iter - \grad$ of the support exploration variable itself ($\explo^\iter$) with a gradient computed at $\sigs^\iter$.
As a consequence, the vector $\explo^{t}$ in HTP or IHT is always one gradient step away from sparse vector $\sigs^{t}$. They do not explore much. This is not the case with SEA. The support exploration variable $\mathcal{X}^{t}$ is not expected to minimize the objective: it rather accumulates all the gradient iterates, where the gradient is computed at $x^t$. This is the whole point of the STE. In particular, $(\mathcal{X}^{t})_{t \in \mathbb{N}}$ is not restricted to a small portion of $\RR^n$ in the vicinity of sparse vectors. It can explore much more than in HTP and IHT. This explains why SEA has a different exploration/exploitation trade-off. It explores more. As can also be seen from the experiments in \cref{app:deconv:loss}, the loss oscillates a lot during SEA's iterative process, but SEA retains the best solution $x^{\iterb}$ encountered during the exploration. SEA is not based on a descent principle as IHT, HTP and such.

Finally, one may also notice that HTP stops as soon as the gradient is small enough such that the support does not change during two successive iterations. On the contrary, SEA keeps accumulating gradients so that the support may remain unchanged for many iterations before a new support is explored. This is clearly visible in the illustrations of \cref{app:deconv:loss}.

\begin{multicols}{3}
\begin{algorithm}[H]
   \caption{SEA\\ (copy of Algorithm~\ref{alg:SEA})}
   \label{alg:SEA:bis}
\begin{algorithmic}[1] 
    \STATE {\bfseries Inputs:}\\
    noisy observation $\obs$,\\
    sampling matrix $\mat$,\\
    sparsity $\spars$,\\
    step size $\gradstep$
   \STATE {\bfseries Output:} sparse vector $\sigs$
   \STATE Initialize $\explo^0$ 
   \STATE $\iter \leftarrow 0$
   \REPEAT
   \STATE $\supp^\iter \leftarrow \largest_\spars\left(\explo^\iter\right)$ 
   \alglinelabel{line:SEA:bis:update_S}
   \STATE $\begin{cases}
       \sigs^\iter_i & \leftarrow 0 \text{ for } i \in \overline{\supp^\iter}\\
       \sigs^\iter_{\supp^\iter} & \leftarrow \matdagsuppt \obs\\
   \end{cases}$ \alglinelabel{line:SEA:bis:update_x}
   \STATE $\explo^\iterp \leftarrow \textcolor{red}{\explo^\iter} - \grad$ \alglinelabel{line:SEA:bis:update_explo}
   \STATE $\iter \leftarrow \iterp$
   \UNTIL{halting criterion is $true$}
   \STATE \alglinelabel{line:SEA:bis:best} \textcolor{red}{$\iterb \leftarrow \underset{\itero \in \intint{0, \iter}}{\argmin} \normdd{\mat\sigs^\itero - \obs}$}
   \STATE {\bfseries return} $\textcolor{red}{\sigs^\iterb}$
\end{algorithmic}
\end{algorithm}

\begin{algorithm}[H]
    \caption{HTP\\ \citep{foucart2011hard}}
    \label{alg:HTP}
\begin{algorithmic}[1] 
    \STATE {\bfseries Inputs:}\\
    noisy observation $\obs$,\\
    sampling matrix $\mat$,\\
    sparsity $\spars$,\\
    step size $\gradstep$
    \STATE {\bfseries Output:} sparse vector $\sigs$
    \STATE Initialize $\explo^0$ 
    \STATE $\iter \leftarrow 0$
    \REPEAT
        \STATE $\supp^\iter \leftarrow \largest_\spars\left(\explo^\iter\right)$ \alglinelabel{line:HTP:update_S}
        \STATE $\begin{cases}
            \sigs^\iter_i & \leftarrow 0 \text{ for } i \in \overline{\supp^\iter}\\
            \sigs^\iter_{\supp^\iter} & \leftarrow \matdagsuppt \obs\\
        \end{cases}$
        \alglinelabel{line:HTP:update_x}
        \STATE $\explo^\iterp \leftarrow \textcolor{red}{\sigs^\iter} - \grad$ \alglinelabel{line:HTP:step_x}
        \STATE $\iter \leftarrow \iterp$
    \UNTIL{halting criterion is $true$}\\
    $\phantom{\iterb \leftarrow \underset{\itero \in \intint{0, \iter}}{\argmin} \normdd{\mat\sigs^\itero - \obs}}$
    \STATE {\bfseries return} $\textcolor{red}{\sigs^\iter}$
\end{algorithmic}
\end{algorithm}

\begin{algorithm}[H]
    \caption{IHT\\ \citep{Blumensath2009}}
    \label{alg:IHT}
\begin{algorithmic}[1] 
    \STATE {\bfseries Inputs:}\\
    noisy observation $\obs$,\\
    sampling matrix $\mat$,\\
    sparsity $\spars$,\\
    step size $\gradstep$
    \STATE {\bfseries Output:} sparse vector $\sigs$
    \STATE Initialize $\explo^0$ 
    \STATE $\iter \leftarrow 0$
    \REPEAT
        \STATE $\supp^\iter \leftarrow \largest_\spars\left(\explo^\iter\right)$ \alglinelabel{line:IHT:update_S}
        \STATE $\begin{cases}
            \sigs^\iter_i & \leftarrow 0 \text{ for } i \in \overline{\supp^\iter}\\
            \textcolor{red}{\sigs^\iter_{\supp^\iter}} & \textcolor{red}{\leftarrow \explo^\iter_{\supp^\iter}}\\
        \end{cases}$
        \alglinelabel{line:IHT:update_x}
        \STATE $\explo^\iterp \leftarrow \textcolor{red}{\sigs^\iter} - \grad$ \alglinelabel{line:IHT:step_x}
        \STATE $\iter \leftarrow \iterp$
    \UNTIL{halting criterion is $true$}
    $\phantom{\iterb \leftarrow \underset{\itero \in \intint{0, \iter}}{\argmin} \normdd{\mat\sigs^\itero - \obs}}$
    \STATE {\bfseries return} $\textcolor{red}{\sigs^\iter}$
\end{algorithmic}
\end{algorithm}
\end{multicols}

\subsection{Efficient Implementation of SEA}
\label{app:algorithms:sea_efficient}

Algorithm~\ref{alg:SEA} is presented in a way that favors clarity and simplifies the theoretical analysis.
In practice, one can notice that if the support $\supp^\iter$ does not change (line~\ref{line:SEA:update_S}), then the sparse vector $\sigs^t$ and the gradient $\grad$ do not change either.
Algorithm~\ref{alg:SEA_efficient} is an equivalent pseudo-code for a computationally-efficient implementation.
The most expensive computations ---the sparse projection at line~\ref{line:SEA_efficient:update_x} and the gradient at line~\ref{line:SEA_efficient:grad}--- are only required when the support has never been explored before.
Also, the best sparse vector can be memorized on the fly (line~\ref{line:SEA_efficient:xbest}).
Hence, the remaining operations, that are performed at each iteration, have a low computational cost: support extraction (line~\ref{line:SEA_efficient:update_S}), search for a previous, identical support (line~\ref{line:SEA_efficient:if_unseen_support}) and STE update (line~\ref{line:SEA_efficient:update_explo}).
This computationally-efficient version of SEA has a larger spatial complexity due to the memorization of all the supports and gradients seen along the iterations. However, this overhead is limited since 1/ for each explored support, only two vectors are memorized, one of them being sparse; and 2/ the number of explored supports is generally much lower than the number of iterations. For instance, in the deconvolution experiment, on average, less than $1000$ vectors of size $500$ (including $500$ $k$-sparse vectors) are stored during the running time of SEA.

In addition, solving the (unconstrained) linear system $A_{S^t}^TA_{S^t}x_{S^t}=A_{S^t}^Ty$ can also be performed efficiently. While the pseudo-inverse is a convenient notation at line~\ref{line:SEA_efficient:update_x}, the solution may be obtained more efficiently, e.g. in $\mathcal{O}(\spars^2 n)$ to compute $A_{S^t}^TA_{S^t}$ and $A_{S^t}^Ty$, and apply the conjugate gradient algorithm.
This complexity is a worst-case scenario so in practice, the solution is generally obtained more quickly.

\def\allsupports{\boldsymbol{\mathcal{S}}}
\def\allg{\boldsymbol{{g}}}
\begin{algorithm}[tbh]
    \caption{Support Exploration Algorithm: efficient implementation \label{alg:SEA_efficient}
    }
\begin{algorithmic}[1] 
    \STATE {\bfseries Input:} noisy observation $\obs$, sampling matrix $\mat$, sparsity $\spars$, step size $\gradstep$
    \STATE {\bfseries Output:} sparse vector $\sigs$
    \STATE Initialize $\explo^0$ 
    \STATE $F^{BEST} \leftarrow +\infty$ 
    \STATE $\iter \leftarrow 0$
    \STATE $\allsupports \leftarrow \left\lbrace\right\rbrace, \allg \leftarrow \left\lbrace\right\rbrace$
    \REPEAT
        \STATE $\supp \leftarrow \largest_\spars\left(\explo^\iter\right)$  \alglinelabel{line:SEA_efficient:update_S}\\
        ~\\
        \COMMENT{Compute sparse vector and gradient only for unseen supports}
        \IF{$\supp \notin \allsupports$}\alglinelabel{line:SEA_efficient:if_unseen_support}
        \STATE $\begin{cases}
            \sigs^S_i & \leftarrow 0 \text{ for } i \in \overline{\supp}\\
            \sigs^S_{\supp} & \leftarrow \matdagsupp\obs\\
        \end{cases}$
        \alglinelabel{line:SEA_efficient:update_x}
            \STATE $loss^\supp \leftarrow \frac{1}{2}\normdd{\mat\sig^\supp - \obs}$\\~
            \STATE $g^\supp \leftarrow \gradstep \mata\left(\mat\sigs^\supp - \obs\right) $\alglinelabel{line:SEA_efficient:grad}\\
            ~\\
            \COMMENT{Memorize support and gradient}
            \STATE $\allsupports \leftarrow \allsupports \cup \left\lbrace \supp \right\rbrace$,  $\allg \leftarrow \allg \cup \left\lbrace g^S \right\rbrace$\\
            ~\\
            \COMMENT{Memorize best iterate}
            \IF{$loss^\supp < F^{BEST}$}
               \STATE $F^{BEST} \leftarrow loss^\supp$
               \STATE $\sigs^{BEST} \leftarrow \sigs^\supp$ \alglinelabel{line:SEA_efficient:xbest}
            \ENDIF
        \ELSE
            \STATE Retrieve $g^\supp$ in $\allg$
        \ENDIF\\
        ~\\
        \COMMENT{Update support exploration variable}
        \STATE $\explo^\iterp \leftarrow \explo^\iter - g^\supp$
        \alglinelabel{line:SEA_efficient:update_explo}
    \STATE $\iter \leftarrow \iterp$
    \UNTIL{halting criterion is $true$}
    \STATE {\bfseries return} $\sigs^{BEST}$
\end{algorithmic}
\end{algorithm}

\clearpage

\section{Proofs and Complements of the Theoretical Analysis}
\label{app:proof_RIP}

The proof of \cref{thm:recovery_RIP} relies on the fact that when the hypotheses of the theorem are satisfied, the trajectory $(\explo^t)_{t\in\NN}$ is close to the trajectory of an algorithm that has access to an oracle update. The appendix first contains a description and an analysis of this algorithm in \cref{oracle-sec}. Then, in \cref{subsec:genrecovery}, we analyze how much the STE-update can deviate from the Oracle Update Rule so that the true support $\suppsol$ is still recovered. Finally, we prove \cref{thm:recovery_RIP} in \cref{proof-thm-RIP-sec}. We prove \cref{cor:recovery_SRIP} in \cref{app:proof_SRIP} and conclude with comments on the conditions \eqref{eq:HRIP} and \eqref{eq:HSRIP} in \cref{comments-RIP-sec}.

\subsection{Support Exploration Algorithm Using the Oracle Update Rule}\label{oracle-sec}

The theoretical analysis of SEA and the understanding of the underlying behavior of the algorithm rely on the introduction of an oracle case where the true solution $\sol$
and its support $\suppsol$ are known by the algorithm.
In that case, at iteration $t$, we can use the oracle update rule $\explo^\iterp \leftarrow \explo^\iter-\usol^\iter$, using the direction $\usol^\iter$ defined for any index $\ii\in\intintn$ by

\begin{equation}
    \label{eq:usol}
    \usol^\iter_\ii =
    \begin{cases}
      -\gradstep \soli & \ii \in \suppx \\
      0 & \ii \in \suppxb,
    \end{cases}
\end{equation}

where $\supp^\iter=\largest_\spars\left(\explo^\iter\right)$ contains the indices of the $\spars$ largest absolute entries in $\explo^\iter$ and $\gradstep>0$ is an arbitrary step size.
The resulting pseudo-code is given by Algorithm~\ref{alg:oracle} and we show the important supports in \cref{fig:sea-supports}.

\begin{algorithm}[tbh]
   \caption{Support Exploration Algorithm using the Oracle Update Rule}
   \label{alg:oracle}
\begin{algorithmic}[1] 
   \STATE {\bfseries Input:} true solution $x^*$, true support $\suppsol$, sparsity $\spars$, step size $\gradstep$, noisy observation $\obs$, sampling matrix $\mat$
   \STATE {\bfseries Output: }sparse vector $\sigs$
   \STATE Initialize $\explo^0$ 
   \STATE $\iter \leftarrow 0$
   \REPEAT
   \STATE $\supp^\iter \leftarrow \largest_\spars\left(\explo^\iter\right)$ 
   \alglinelabel{line:oracle:update_S}
  \STATE $\explo^\iterp \leftarrow \explo^\iter - u^t$ \alglinelabel{line:oracle:update_explo}
   \STATE $\iter \leftarrow \iterp$
   \UNTIL{$u^{t-1} = 0$}
  \STATE $\begin{cases}
       \sigs_i & \leftarrow 0 \text{ for } i \in \overline{\supp^{\iter-1}}\\
       \sigs_{\supp^{\iter-1}} & \leftarrow \matdag_{\supp^{\iter-1}} \obs\\
   \end{cases}$ 
   \STATE {\bfseries return} $\sigs$
\end{algorithmic}
\end{algorithm}

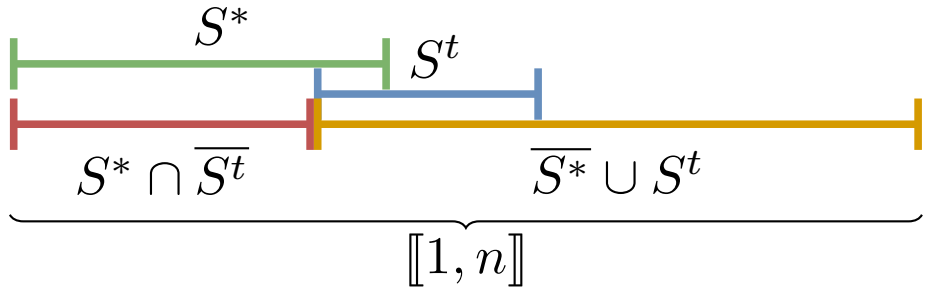
\begin{figure}[htb]
    \centering
    \scalebox{2}{
    \begin{tikzpicture}
    \node[] at (3.4,0.65) {$\suppsol$};
    \draw [|-|,line width=0.5mm,drawGreen] (2,0.4) -- (4.5,0.4);
    \node[] at (4.8,0.45) {$\supp^\iter$};
    \draw [|-|,line width=0.5mm,drawBlue] (4,0.2) -- (5.5,0.2);
    \draw [|-|,line width=0.5mm,drawRed] (2,0) -- (4,0);
    \node[] at (3,-0.3) {$\suppx$};
    \draw [|-|,line width=0.5mm,drawOrange] (4,0) -- (8,0);
    \node[] at (6,-0.3) {$\suppxb$};
    \path[draw,decorate,decoration={brace,mirror}] (2,-0.6) -- (8,-0.6)
    node[midway,below]{$\intintn$};
    \end{tikzpicture}}
    \caption{Visual representation of the main sets of indices encountered in the article.}
    \label{fig:sea-supports}
\end{figure}

Notice $\usol^\iter_\ii$ is non-zero for indices $\ii$ from the true support $\suppsol$ but for which $|\explo^\iter_\ii|$ is too small to be selected in $\supp^\iter$ at line~\ref{line:oracle:update_S}.
Whatever the initial content of $\explo^0$, the oracle update rule always adds the same increment to $\explo^\iter_\ii$, for $\ii \in \suppx $. This guarantees that, at some subsequent iteration $\iter'\geq \iter$,  the true support $\suppsol$ is recovered among the $\spars$ largest absolute entries in $\explo^{\iter'}$, i.e., $\suppsol \subseteq \supp^{\iter'}$, the intersection is empty, $u^{t'}=0$ and \cref{alg:oracle} stops.

In the following theorem, we formalize this statement and give an upper bound on the number of iterations required by the support exploration algorithm using the oracle update rule.

\begin{theorem}[Recovery - Oracle Update Rule]
\label{thm:recovery:oracle}
For all matrices $A$, error vectors $e$, initializations $\explo^0$ and for all $\eta>0$, there exists 
\[t_s\leq \itermax^{{\scriptscriptstyle oracle}} = k \left( 1 + \frac{2  \|\explo^0\|_\infty}{\eta \min_{i\in\suppsol}|\soli|} \right)
\]
such that $\suppsol \subseteq S^{t_s}$, where $S^t$ is defined in \cref{alg:oracle} line \ref{line:oracle:update_S}. 

Moreover, $u^{t_s} = 0$ and \cref{alg:oracle} returns $\underset{\substack{\sig \in \sigspace \\ \SUPP{\sig} \subseteq \supp^{t_s}}}{\argmin} \normdd{\mat\sigs - \obs}$.

\end{theorem}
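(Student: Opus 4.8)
The plan is to analyze the trajectory $(\explo^\iter)_{\iter\ge 0}$ produced by \cref{alg:oracle} in a purely combinatorial way, exploiting that the oracle update rule \eqref{eq:usol} involves neither $\mat$ nor $\noise$: at step $\iter$ it simply adds the fixed increment $\gradstep\,\soli$ to the coordinate $\explo^\iter_i$ for every $i\in\suppx=\suppsol\setminus S^\iter$, and leaves all other coordinates unchanged. (If $\sol=0$ then $\suppsol=\emptyset$, $u^0=0$, the loop halts with $t_s=0$ and the bound holds trivially, so I assume $\sol\neq 0$ and abbreviate $M=\|\explo^0\|_\infty$, $c=\min_{i\in\suppsol}|\soli|>0$.)

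First I would record two bookkeeping facts. (i) For $i\notin\suppsol$ the coordinate is never modified, hence $\explo^\iter_i=\explo^0_i$ and $|\explo^\iter_i|\le M$ for all $\iter$. (ii) For $i\in\suppsol$, writing $n_i(\iter)=\#\{\iter'<\iter:i\notin S^{\iter'}\}$ for the number of updates already applied to coordinate $i$, one has $\explo^\iter_i=\explo^0_i+n_i(\iter)\,\gradstep\,\soli$, so $|\explo^\iter_i|\ge n_i(\iter)\,\gradstep\,|\soli|-M$. The heart of the argument is then the claim: \emph{if $i\in\suppsol$ and $i\notin S^\iter$, then $|\explo^\iter_i|\le M$}. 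I would prove it by contradiction: if $|\explo^\iter_i|>M$, then since $S^\iter=\largest_\spars(\explo^\iter)$ collects the $\spars$ largest absolute entries, every $j\in S^\iter$ satisfies $|\explo^\iter_j|\ge|\explo^\iter_i|>M$ (being selected while $i$ is not, $j$ must have $|\explo^\iter_j|>|\explo^\iter_i|$, or $|\explo^\iter_j|=|\explo^\iter_i|$ with $j>i$ under the tie-breaking convention); by (i) no such $j$ can lie outside $\suppsol$, so $S^\iter\subseteq\suppsol$, which is impossible because $|S^\iter|=\spars\ge|\suppsol|$ while $i\in\suppsol\setminus S^\iter$. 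Combining the claim with (ii): whenever $i\notin S^\iter$ one gets $n_i(\iter)\,\gradstep\,|\soli|-M\le|\explo^\iter_i|\le M$, i.e. $n_i(\iter)\le \frac{2M}{\gradstep\,|\soli|}\le\frac{2M}{\gradstep\,c}$.

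Next I would convert this per-coordinate bound into the stated iteration count. For fixed $i\in\suppsol$, let $B_i=\{\iter\ge 0:i\notin S^\iter\}$; since $n_i$ increases by exactly one on each element of $B_i$, the $j$-th smallest element $\iter_j$ of $B_i$ has $n_i(\iter_j)=j-1$, and the bound above forces $j-1\le \frac{2M}{\gradstep\,|\soli|}$, hence $\#B_i\le 1+\frac{2M}{\gradstep\,|\soli|}$. Since $u^\iter\neq 0$ iff $\suppsol\not\subseteq S^\iter$, i.e. $\iter\in\bigcup_{i\in\suppsol}B_i$, this yields
\[
\#\{\iter\ge 0:u^\iter\neq 0\}\ \le\ \sum_{i\in\suppsol}\#B_i\ \le\ \sum_{i\in\suppsol}\Bigl(1+\tfrac{2M}{\gradstep\,|\soli|}\Bigr)\ \le\ |\suppsol|\,\Bigl(1+\tfrac{2M}{\gradstep\,c}\Bigr)\ \le\ \itermax^{{\scriptscriptstyle oracle}}.
\]
In particular this set is finite, so $t_s:=\min\{\iter:u^\iter=0\}$ exists; then $\{0,\dots,t_s-1\}\subseteq\{\iter:u^\iter\neq 0\}$ gives $t_s\le\itermax^{{\scriptscriptstyle oracle}}$, and $u^{t_s}=0$ is precisely $\suppsol\subseteq S^{t_s}$. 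For the ``moreover'' part, when the loop exits (with counter $t_s+1$) the returned vector is supported on $S^{t_s}$ and coincides with $\matdag_{S^{t_s}}\obs$ there, which by definition of the pseudoinverse is a minimizer of $\sig\mapsto\normdd{\mat\sig-\obs}$ among vectors supported in $S^{t_s}$.

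The main obstacle I anticipate is the claim in the second paragraph: one must be careful with the tie-breaking rule of $\largest_\spars$ to ensure that $i\notin S^\iter$ together with $|\explo^\iter_i|>M$ genuinely forces $S^\iter\subseteq\suppsol$. Everything else is elementary counting, and it is worth stressing that nothing in this argument uses $\mat$, $\noise$, or any monotone decrease of the objective.
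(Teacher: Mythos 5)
Your proposal is correct and follows essentially the same route as the paper: your counters $n_i(\iter)$ are the paper's $c_i^\iter$, your key claim (that $i\in\suppsol\setminus S^\iter$ forces $|\explo^\iter_i|\le\|\explo^0\|_\infty$, hence $n_i(\iter)\le 2\|\explo^0\|_\infty/(\eta|\soli|)$) is the contrapositive of the paper's Lemma~\ref{tmp1} combined with Lemma~\ref{tmp2}, and your count of $\bigcup_i B_i$ replaces the paper's inequality $t_s\le\sum_{i\in\suppsol}c_i^{t_s}$. The tie-breaking concern you flag is handled exactly as you suggest, and the argument is sound.
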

\subsubsection{Proof of \texorpdfstring{\cref{thm:recovery:oracle}}{Theorem~\ref{thm:recovery:oracle}}}
\label{app:proof_oracle}

We denote, for all $t\in\NN^*$ and all $\ii \in \intintn$, and $S^t$ defined in \cref{alg:oracle}, line \ref{line:oracle:update_S}
\begin{equation}\label{eqrutnbt}
     \countingit = \abs{\{ \itero \in \intintt: \ii \in \suppx[\itero] \}}.
\end{equation}   
We extend the definition to $t=0$ and set, for all $\ii \in \intintn$, $c_i^0=0$.

We can prove by induction on $t$ that, given the definition of $\explo^t$ in \cref{alg:oracle} and $u^t$ in \eqref{eq:usol}, for all $t\in\NN$,
\begin{equation}\label{explot_oracle}
    \explo^t =  \explo^0 + \eta ~c^t \odot x^*,
\end{equation}
where $\odot$ denotes the Hadamard product.

The following lemma states that if $c_i^t$ is large then $i$ is always selected by \cref{alg:oracle}.
\begin{lemma}\label{tmp1}
For all $i \in \suppsol$ and all $t\in \NN^*$
    \begin{equation*} 
    \mbox{ if }\quad c_i^t > \frac{2\|\explo^0\|_\infty}{\eta |\soli|}\quad\mbox{ then }\quad\forall t'\geq t, \quad i\in\supp^{t'}.
\end{equation*}
\end{lemma}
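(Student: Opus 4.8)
The plan is to use the closed form \eqref{explot_oracle}, $\explo^t=\explo^0+\eta\,c^t\odot x^*$, which reduces the statement to elementary scalar estimates on the counters $c^t_i$. I would first record two preliminary facts. First, the oracle direction of \eqref{eq:usol} is supported in $\suppsol$, so every index $j\notin\suppsol$ satisfies $c^t_j=0$ for all $t$, whence $\explo^t_j=\explo^0_j$ by \eqref{explot_oracle} and thus $|\explo^t_j|\le\|\explo^0\|_\infty$. Second, $t\mapsto c^t_i$ is non-decreasing for each $i$, being a running count of the iterations at which index $i$ receives an increment.

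Next I would fix $i\in\suppsol$ and $t\in\NN^*$ with $c^t_i>\tfrac{2\|\explo^0\|_\infty}{\eta|\soli|}$, and take an arbitrary $t'\ge t$. Monotonicity gives $c^{t'}_i\ge c^t_i>\tfrac{2\|\explo^0\|_\infty}{\eta|\soli|}$, so by \eqref{explot_oracle} and the reverse triangle inequality,
\[
|\explo^{t'}_i|=\bigl|\explo^0_i+\eta\,c^{t'}_i\soli\bigr|\ \ge\ \eta\,c^{t'}_i|\soli|-|\explo^0_i|\ >\ 2\|\explo^0\|_\infty-\|\explo^0\|_\infty\ =\ \|\explo^0\|_\infty .
\]
Combined with the first fact, $|\explo^{t'}_i|$ is then strictly larger than $|\explo^{t'}_j|$ for every $j\notin\suppsol$.

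From here I would conclude that $i\in\supp^{t'}=\largest_\spars(\explo^{t'})$ by a counting argument. Let $T=\{\,j\in\intintn:\ |\explo^{t'}_j|\ge|\explo^{t'}_i|\,\}$. The previous step shows $T\subseteq\suppsol$, hence $|T|\le|\suppsol|\le\spars$. Since every index outside $T$ has strictly smaller absolute entry than every index of $T$, and $|T|\le\spars$, the set of the $\spars$ largest absolute entries of $\explo^{t'}$ must contain all of $T$, in particular $i$. As $t'\ge t$ was arbitrary, this is the claim.

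The only step requiring genuine care is this last one: one must check that the tie-breaking convention built into $\largest_\spars$ (ties broken toward higher indices) cannot evict $i$. This is exactly what $|T|\le\spars$ prevents, since it makes ``select all of $T$'' compatible with returning exactly $\spars$ indices, regardless of how remaining ties are resolved. Everything else is immediate from \eqref{explot_oracle} and the two preliminary facts, so I expect no further obstacle.
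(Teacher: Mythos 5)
Your proof is correct and follows essentially the same route as the paper's: both rely on the closed form $\explo^{t'}=\explo^0+\eta\,c^{t'}\odot x^*$, the monotonicity of $t\mapsto c^t_i$, and the reverse triangle inequality to show $|\explo^{t'}_i|>\|\explo^0\|_\infty\ge|\explo^{t'}_j|$ for all $j\notin\suppsol$, then conclude by counting that $i$ must be among the $\spars$ largest entries. Your explicit set $T$ and the remark about tie-breaking merely formalize the paper's one-line observation that $|\explo^{t'}_i|$ exceeds at least $n-\spars$ of the entries.
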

\begin{proof}
    Let $i \in \suppsol$ and $t\in \NN^*$ be such that $c_i^t > \frac{2\|\explo^0\|_\infty}{\eta |\soli|}$. Consider $t'\geq t$.

Since $t \mapsto c_i^t$ is non-decreasing, we have
\[c_i^{t'} \geq c_i^{t} > \frac{2\|\explo^0\|_\infty}{\eta |\soli|}.
\]
Therefore, for all $j\in \overline{\suppsol}$,
\[ |\explo_i^{t'}| = |\explo_i^0 + \eta c_i^{t'} x_i^*| \geq |\eta c_i^{t'} x^*_i| - |\explo^0_i| > 2\|\explo^0\|_\infty - |\explo^0_i| \geq \|\explo^0\|_\infty \geq |\explo^0_j| = |\explo^{t'}_j|.
\]
Therefore $|\explo_i^{t'}|$ is larger than at least $n-k$ elements of $\{|\explo_j^{t'}| : j\in\intintn\}$. Said differently, $i\in \largest_\spars(\explo^{\iter'}) = S^{t'}$. 

This concludes the proof of \cref{tmp1}.
\end{proof}

This leads to the following upper bound.
\begin{lemma}\label{tmp2}
For all $i \in \suppsol$ and all $t\in \NN^*$
    \begin{equation*} 
     c_i^t \leq \frac{2\|\explo^0\|_\infty}{\eta |\soli|} + 1.
\end{equation*}
\end{lemma}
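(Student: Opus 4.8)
The plan is to read off \cref{tmp2} from \cref{tmp1}. Fix $i\in\suppsol$ and abbreviate $\tau:=\frac{2\|\explo^0\|_\infty}{\eta|\soli|}$, so that \cref{tmp1} reads: if $c_i^{t}>\tau$ for some $t\in\NN^*$, then $i\in\supp^{t'}$ for every $t'\ge t$. From its definition \eqref{eqrutnbt}, $t\mapsto c_i^t$ is non-decreasing, $c_i^0=0$, and $c_i^{t}-c_i^{t-1}\in\{0,1\}$, with the increment equal to $0$ whenever $i$ is selected at the relevant iteration (for $i\in\suppsol$, membership in $\suppx$ just means $i$ is not selected). Combining this last point with \cref{tmp1} yields the key observation: as soon as $c_i^{t_0}>\tau$, the index $i$ is selected at all iterations $t'\ge t_0$, so $c_i$ stops increasing, i.e.\ $c_i^{t'}=c_i^{t_0}$ for all $t'\ge t_0$.

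It then remains to bound $c_i^t$. If $c_i^t\le\tau$ for every $t$, the claim $c_i^t\le\tau+1$ is immediate. Otherwise, let $t_0\in\NN^*$ be the first iteration with $c_i^{t_0}>\tau$; by minimality $c_i^{t_0-1}\le\tau$, and since the increment is at most $1$ we get $c_i^{t_0}\le c_i^{t_0-1}+1\le\tau+1$. By the key observation, $c_i^{t'}=c_i^{t_0}\le\tau+1$ for all $t'\ge t_0$, while for $t'<t_0$ we have $c_i^{t'}\le\tau<\tau+1$ by minimality of $t_0$. In all cases $c_i^{t}\le\tau+1=\frac{2\|\explo^0\|_\infty}{\eta|\soli|}+1$, as claimed.

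I expect no genuine obstacle here; \cref{tmp2} is essentially a counting consequence of \cref{tmp1}. The one point deserving care is why the bound is $\tau+1$ and not $\tau$: the counter can take one final unit step to cross the threshold $\tau$ before \cref{tmp1} freezes it, so the argument must account for exactly this single overshoot; this is why I isolate the first iteration $t_0$ at which the threshold is exceeded and treat the ranges $t'<t_0$ and $t'\ge t_0$ separately.
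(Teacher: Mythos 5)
Your proof is correct and follows essentially the same route as the paper's: both isolate the first iteration at which the counter crosses the threshold, use the unit-increment structure of $c_i^t$ together with \cref{tmp1} to show the counter can overshoot $\frac{2\|\explo^0\|_\infty}{\eta|\soli|}$ by at most one step and then freezes. The only difference is presentational — the paper argues by contradiction from a first crossing of $\tau+1$, whereas you argue directly from the first crossing of $\tau$ — and your handling of the single overshoot is exactly the point the paper's proof also turns on.
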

\begin{proof}
If \cref{tmp2} is false, there exists $i\in\suppsol$ and $t\in\NN^*$ such that $c_i^{t} > \frac{2\|\explo^0\|_\infty}{\eta |\soli|} + 1$.

We denote 
\[t' = \min\{t\in\NN^* : c_i^t > \frac{2\|\explo^0\|_\infty}{\eta |\soli|} + 1\}.
\]
We have $c_i^{t'} >\frac{2\|\explo^0\|_\infty}{\eta |\soli|} + 1 \geq 1$ and therefore $t'>1$. As a consequence, $t'-1\in\NN^*$ and  $c_i^{t'-1}$ is defined by \eqref{eqrutnbt}. Because of the definitions of $t'$ and $c_i^{t'}$, we must have $c_i^{t'} = c_i^{t'-1}+1$. Therefore, $c_i^{t'-1} > \frac{2\|\explo^0\|_\infty}{\eta |\soli|}$. Since $i\in\suppsol$ and $t'-1\in\NN^*$, using \cref{tmp1}, we conclude that $i\in\supp^{t'-1}$ and, using the definition of $c_i^{t'}$ in \eqref{eqrutnbt}, that $c_i^{t'} = c_i^{t'-1}$. 

This is impossible and we conclude that \cref{tmp2} holds.
\end{proof}

{\bf Proof of \cref{thm:recovery:oracle}:}
We denote
\begin{equation}\label{def_ts}
    t_s = \min\{t\in\NN : \suppsol \subseteq \suppt\}.
\end{equation}
By convention, if for all $t\in\NN$, $\suppsol \not\subseteq \suppt$, we set $t_s = +\infty$. The first statement of \cref{thm:recovery:oracle} is obvious if $t_s=0$. We assume below that $t_s\geq 1$.

Consider $t\in \intint{0, t_s-1}$, using of the definition of $t_s$ in \eqref{def_ts}, there exists $i\in\suppsol \cap \overline{\suppt}$. Using the definition of $c_i^{t+1}$ in \eqref{eqrutnbt}, we obtain $c_ i^{t+1}=c_ i^{t}+1$. Since for all $j\in\intintn$, $t \mapsto c_j^t$ is non-decreasing, we conclude that
\[\mbox{for all } t\in \intint{0, t_s-1},\qquad \sum_{i\in\suppsol} c_i^{t+1} \geq \sum_{i\in\suppsol} c_i^{t} +1.
\]
We therefore obtain
\begin{eqnarray*}
\sum_{i\in\suppsol} c_i^{t_s} & = &  \sum_{i\in\suppsol} \left(\sum_{i=0}^{t_s-1} (c_i^{t+1} - c_i^t) \right) \\
& = & \sum_{i=0}^{t_s-1} \left((\sum_{i\in\suppsol}c_i^{t+1}) -(\sum_{i\in\suppsol}c_i^{t}) \right) \\
& \geq &\sum_{i=0}^{t_s-1} 1 = t_s.
\end{eqnarray*}
Using \cref{tmp2}, we obtain
\[t_s \leq \sum_{i\in\suppsol} c_i^{t_s} \leq \sum_{i\in\suppsol} \left(\frac{2\|\explo^0\|_\infty}{\eta |\soli|} + 1\right) \leq k \left( 1 + \frac{2  \|\explo^0\|_\infty}{\eta \min_{i\in\suppsol}|\soli|} \right).
\]
To conclude the proof, we simply remark that, since $S^*\subseteq S^{t_s}$, by definition of $u^{t_s}$ in \eqref{eq:usol}, $$u^{t_s} = 0.$$
\hfill{$\Box$}

Since $\sol$ and $\suppsol$ are not available in practice, we replace in \cref{alg:oracle} the oracle update $\usol^\iter$ by the surrogate $\grad$ (line~\ref{line:SEA:update_explo}).
The choice of this surrogate is an application of STE and is natural. For instance, one can show that $\usol^\iter=\grad$ in the simple case where $\mat$ has orthonormal columns and the observation is noiseless (see Corollary~\ref{cor:recovery_ortho} and \cref{lem:ortho} in Appendix \ref{app:proof_ortho}). In the general setting, SEA can be interpreted as a noisy version of the support exploration algorithm using the oracle update. Theorem \ref{thm:recovery_RIP} and its proof in Appendix \ref{app:proof_RIP} are based on the fact that $\usol^\iter - \grad$ is small, under suitable hypotheses on $x^*$ and the RIP constants of $A$.

\subsection{If the STE-Update is Sufficiently Close to the Oracle Update, SEA Visits \texorpdfstring{$S^*$}{the True Support}}
\label{subsec:genrecovery}

To prove \cref{thm:recovery_RIP}, we first provide a general recovery theorem here. 
The theorem states that if the discrepancy between the Oracle update and the STE update is sufficiently small, SEA visits $S^*$.

To do so, we define, for all $\iter \in \NN$,  the gradient noise: $\gradnoise^\iter \in \sigspace$ as
\begin{equation}
\label{eq:gradnoise}
    \gradnoise^\iter = \usolt - \grad.
\end{equation}
We define the maximal gradient noise norm
\begin{equation}
\label{eq:gradnoisemax}
    \gradnoisemax = \gradnoisemaxf \in \field.
\end{equation}

We define the Recovery Condition (\Condr) as 
\begin{equation}\tag{\Condr}
\label{eq:Hr}
\gradnoisemax < \frac{\eta}{2k} \min_{i\in S^*} |x_i^*|.
\end{equation}

\begin{theorem}[Recovery - General case]
\label{thm:recovery}
If (\ref{eq:Hr}) holds, then for all initializations $\explo^0$ and all $\eta>0$, there exists $t_s\leq T_{max}$ such that $\suppsol \subseteq S^{t_s}$, where $S^t$ is defined in \cref{alg:SEA} line \ref{line:SEA:update_S} and
\begin{equation}
\label{eq:Tmax}
    \itermax = \frac{2 k \|\explo^0\|_{\infty} + (k+1)\eta \min_{i\in S^*} |x_i^*| }{\eta \min_{i\in S^*} |x_i^*| - 2 k \gradnoisemax }.
\end{equation}

\end{theorem}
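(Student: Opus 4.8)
The plan is to mimic the proof of \cref{thm:recovery:oracle}, treating the STE trajectory $(\explo^\iter)$ as a perturbation of the oracle trajectory and tracking how long the perturbation can prevent the recovery of $S^*$. Write the STE update as $\explo^\iterp = \explo^\iter - \grad = \explo^\iter - \usolt + \gradnoise^\iter$, so that unrolling gives $\explo^t = \explo^0 + \eta\, c^t \odot x^* + \sum_{t'=0}^{t-1}\gradnoise^{t'}$, where $c^t_i = \abs{\{t'\in\intint{0,t-1} : i\in\overline{S^{t'}}\}}$ counts how many times coordinate $i\in S^*$ has been "missed" up to time $t$, exactly as in \eqref{eqrutnbt}. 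The key point is that, since $u^{t'}_i = 0$ for $i\in\overline{S^*}$ and $i\notin S^{t'}$ contributes $-\eta x^*_i$ to the missed coordinates of $S^*$ only, the signal part of $\explo^t_i$ for $i\in S^*$ grows linearly in $c^t_i$, while the noise part is controlled uniformly by $\sum_{t'=0}^{t-1}\gradnoise^{t'}$, whose $\ell^\infty$-norm is at most $t\,\gradnoisemax$ by the triangle inequality and \eqref{eq:gradnoisemax}.

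Next I would prove the analogues of \cref{tmp1} and \cref{tmp2}. For \cref{tmp1}: suppose $\suppsol\not\subseteq S^{t'}$ for all $t'$ in some range (i.e.\ $t_s$, defined as in \eqref{def_ts}, is large), and fix $i\in S^*$ with $c^t_i$ large. Then $\abs{\explo^t_i} \ge \eta c^t_i \abs{x^*_i} - \abs{\explo^0_i} - t\,\gradnoisemax$, while for $j\in\overline{S^*}$ we have $\abs{\explo^t_j} \le \abs{\explo^0_j} + t\,\gradnoisemax \le \norm{\explo^0}_\infty + t\,\gradnoisemax$; comparing the two, $i$ is guaranteed to be among the $k$ largest entries once $\eta c^t_i\abs{x^*_i} > 2\norm{\explo^0}_\infty + 2t\,\gradnoisemax$. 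Since by definition of $t_s$ a missed coordinate of $S^*$ has its counter incremented at every step, we have $\sum_{i\in S^*} c^t_i \ge t$ for all $t\le t_s$, which forces some $i$ to have $c^t_i \ge t/k$. Plugging this in, for $t = \itermax$ the inequality $\eta (t/k)\min_i\abs{x^*_i} > 2\norm{\explo^0}_\infty + 2t\,\gradnoisemax$ becomes $t\bigl(\eta\min_i\abs{x^*_i}/k - 2\gradnoisemax\bigr) > 2\norm{\explo^0}_\infty$, and the hypothesis \eqref{eq:Hr} makes the bracket positive; solving for $t$ yields exactly the threshold \eqref{eq:Tmax} (the "$(k+1)\eta\min$" term in the numerator accounts for the fact that, once $i$ is permanently selected, one extra round of accounting per coordinate of $S^*$ is needed, just as the "$+1$" appears in \cref{tmp2}). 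Once every $i\in S^*$ is permanently selected, the loop can no longer increment $\sum_{i\in S^*} c^t_i$ without contradiction, so $t_s$ cannot exceed $\itermax$.

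The main obstacle is the bookkeeping of constants: one must be careful that the factor $2k$ rather than $k$ appears (because the accumulated gradient noise enters both in lifting $\abs{\explo^t_i}$ for $i\in S^*$ \emph{and} in inflating $\abs{\explo^t_j}$ for $j\in\overline{S^*}$), and that the worst coordinate argument ($c^t_i \ge t/k$ for some $i$, combined with $\gradnoisemax$ bounding \emph{every} coordinate) is applied to the right quantity. A cleaner route, which I would actually follow, is to avoid the "worst coordinate" step and instead bound $\sum_{i\in S^*} c^{t_s}_i$ from above by summing a per-coordinate bound of the form $c^{t_s}_i \le \bigl(2\norm{\explo^0}_\infty + 2 t_s\,\gradnoisemax\bigr)/(\eta\abs{x^*_i}) + 1 \le \bigl(2\norm{\explo^0}_\infty + 2 t_s\,\gradnoisemax\bigr)/(\eta\min_j\abs{x^*_j}) + 1$ (the analogue of \cref{tmp2}, proved by the same minimality argument), then combining with the lower bound $\sum_{i\in S^*}c^{t_s}_i \ge t_s$ to get $t_s \le k + k\bigl(2\norm{\explo^0}_\infty + 2t_s\gradnoisemax\bigr)/(\eta\min_j\abs{x^*_j})$, and finally isolating $t_s$; the condition \eqref{eq:Hr} is precisely what guarantees the coefficient of $t_s$ on the right is $<1$, and rearranging gives \eqref{eq:Tmax}. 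The delicate point in this variant is that the per-coordinate bound $c^{t_s}_i \le \cdots$ itself uses $t_s$ on the right-hand side (through $t_s\,\gradnoisemax$), so the argument is genuinely an implicit inequality in $t_s$ that must be solved at the end rather than a clean telescoping; making sure this circularity resolves correctly, and that the $t_s = +\infty$ case is excluded by \eqref{eq:Hr}, is where I expect to spend the most care.
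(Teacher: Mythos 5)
Your proposal is correct and follows essentially the same route as the paper: unroll the update into $\explo^t = \explo^0 + \eta\, c^t\odot x^* + \beta^t$, prove a per-coordinate upper bound on $c_i^t$ by the "once the counter exceeds a threshold, $i$ stays selected" argument, combine with the lower bound $\sum_{i\in S^*}c_i^t\geq t$ for $t\leq t_s$, and resolve the resulting implicit inequality in $t_s$. The only cosmetic difference is that the paper first proves a sharper intermediate result (Theorem~\ref{thm:recovery'}) keeping the per-coordinate weights $1/|x_i^*|$ and a fixed threshold built from $T_{max}'$ (handling the $t_s=+\infty$ case by contradiction at $t=\lfloor T_{max}'\rfloor$), then coarsens to $\min_{i}|x_i^*|$ to get \eqref{eq:Tmax}, whereas you coarsen immediately — both resolve the circularity you flag in the same way.
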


The proof is in \cref{app:proof_general}, right after the comments below.

The main interest of \cref{thm:recovery} is to formalize quantitatively that, when $u^t - \grad$ is sufficiently small, SEA visits the correct support.
However, the condition (\ref{eq:Hr}) is difficult to use and interpret since it involves both $A$, $x^*$, and all the sparse iterates $x^t$. This is why we provide in \cref{cor:recovery_ortho}, \cref{thm:recovery_RIP} and \cref{cor:recovery_SRIP} sufficient conditions on $A$, $e$ and $\sol$ guaranteeing that (\ref{eq:Hr}) holds.

The conclusion of \cref{thm:recovery} is that the iterative process of SEA visits the correct support at some iteration $t$. We have in general no guarantee that this time-step $t$ is equal to $\iterb$. We are however guaranteed that SEA returns a sparse solution such that $\normd{Ax^{\iterb} - y} \leq \normd{Ax^{t_s} - y} \leq \normd{Ax^* - y}$. This does not give a guarantee on the support recovery but on the reconstruction error. In machine learning, this upper bound can be used to derive an upper bound of the risk. 

Concerning the value of $\itermax$, a quick analysis shows that $\itermax$ increases with  $\gradnoisemax$, when (\ref{eq:Hr}) holds. In other words, the number of iterations required by the algorithm to provide the correct solution increases with the discrepancy between $u^t$ and $\grad$. 

The initializations $\explo^0\neq 0$ have an apparent negative impact on the number of iterations required in the worst case. This is because in the worst-case $\explo^0$ would be poorly chosen and SEA needs iterations to correct this poor choice. 

Concerning $\gradstep$, notice that, since $u^t$ is proportional to $\gradstep>0$, $\gradnoisemax$ is proportional to $\gradstep>0$ and therefore (\ref{eq:Hr}) is independent of $\gradstep$. When possible, any $\gradstep$ permits the recovery of $S^*$. The only influence of $\gradstep$ is on $\itermax$. In this regard, since $\gradnoisemax$ is proportional to $\gradstep>0$, the denominator of \eqref{eq:Tmax} is proportional to  $\gradstep$. 
In the numerator, we see that the larger $\gradstep$ is, the faster SEA will override the initialization $\explo^0$. The choice of $\eta$ is very much related to the question of the quality of the initialization discussed above.

\subsubsection{Proof of \texorpdfstring{\cref{thm:recovery}}{Theorem~\ref{thm:recovery}}}
\label{app:proof_general}

To prove \cref{thm:recovery}, we need to adapt a closed formula for the exploratory variable $\explo^\iter$ already encountered in the proof of \cref{thm:recovery:oracle}. Then, we will study the properties of this closed formula through the counting vector $\countingt$ in \cref{sec:genP1}. to find a sufficient condition of support recovery. Then we prove \cref{thm:recovery} in \cref{thm_recovery_proof_subsec}.

\subsubsection{Preliminaries}  \label{sec:genP1}



We remind Figure \ref{fig:sea-supports} on which the mains supports are represented and we remind, for each iteration $\iter \in \NN$ and $\ii \in \intintn$,  the Oracle Update already defined in \eqref{eq:usol}
\begin{equation*}
    \usolt_\ii =
    \begin{cases}
      -\gradstep \soli & \ii \in \suppx \\
      0 & \ii \in \suppxb .
    \end{cases}       
\end{equation*}
We also remind the gradient noise, already defined in \eqref{eq:gradnoise}, $\gradnoise^\iter = \usolt - \grad$. 

We first remark that, for any $\ii \in \suppt$,  
\begin{equation}\label{romnrevjnj}
    \gradnoise^\iter_\ii = 0.
\end{equation} 
To prove this equality, we remark that, for all $\ii \in \suppt$, $u_i^t = 0$ and prove that $(\mata(\mat x^t-\obs))_i = 0$. Indeed, the latter holds because $\ii \in \suppt$ and $x_{\suppt}^t = \matdagsuppt y$ (see \cref{alg:SEA}, line \ref{line:SEA:update_x}). As is well-known for the Moore-Penrose inverse, $\matsuppt\matdagsuppt$ is the orthogonal projector onto $\text{colspan}(\matsuppt)$. Therefore, $\matsuppt\matdagsuppt y-y$ is orthogonal to $\text{colspan}(\matsuppt)$ and for all $x'\in{\mathbb R}^k$, $0= \langle \matsuppt \matdagsuppt y-y, \matsuppt x' \rangle = \langle \matsuppt^T (\matsuppt \matdagsuppt y-y),  x' \rangle$. Therefore, $\matsuppt^T (\matsuppt \matdagsuppt y-y)=0$. Since, $x_i^t=0$ for all $i\in \overline{S^t}$, we also have $\matsuppt \matdagsuppt y=\matsuppt x_{S^t}^t=\mat x^t $ and we deduce that for all $\ii\in \suppt$, $(\mat^T(\mat x^t - y ) )_i =(\matsuppt^T (\matsuppt \matdagsuppt y-y) )_j = 0$, where the line $j\in\llbracket 1,k\rrbracket$ of $\matsuppt^T$ corresponds to the line $\ii\in \suppt$ of $\mat^T$. This concludes the proof of \eqref{romnrevjnj}.

As a consequence of the definition of $\gradnoise^\iter$ and SEA, line \ref{line:SEA:update_explo}, for any $\iter \in \NN$, 
\begin{equation}
\label{eq:iterations}
    \explo^\iterp
    =
    \explo^\iter + \gradnoise^\iter - \usolt. 
\end{equation}

The gradient noise $\gradnoise^\iter$ is the error preventing the gradient from being in the direction of the oracle update $\usol^\iter$. 
At each iteration, this error is accumulating in $\explo^\iter$. 
With $\gradnoiseacc^0 = 0$, 
for any $\iter \in \NN^*$, we define this accumulated error by
\begin{equation}
\label{eq:accumulation}
    \gradnoiseacc^\iter = \sum_{\itero = 0}^{\iter - 1} \gradnoise^\itero \in \sigspace.
\end{equation}
As already done in the proof of \cref{thm:recovery:oracle} for the support sequence defined in \cref{alg:oracle}, we define counting vectors. However, this time they are defined for the sequence defined in \cref{alg:SEA}. We keep the same notations for simplicity. We set $\counting^0 = 0$, 
for any $\iter \in \NN^*$ and $\ii \in \intintn$, we also define the counting vector by
\begin{equation}
\label{eq:counting_def}
    \countingit = \abs{\{ \itero \in \intintt: \ii \in \suppx[\itero] \}}.
\end{equation}

We will use the recursive formula for $\counting^\iter$: For any $\iter \in \NN$, $\ii \in \intintn$
\begin{equation}
\label{eq:counting}
    \countingitp =
    \begin{cases}
      \countingit + 1 & \mbox{if }\ii \in \suppx\\
      \countingit & \mbox{if }\ii \in \suppxb.
    \end{cases}       
\end{equation}

For any $\ii \in \intintn$, the sequence $(\countingit)_{\iter \in \NN}$ is non-decreasing.

Using (\ref{eq:iterations}), (\ref{eq:accumulation}) and (\ref{eq:counting_def}), we can establish by induction on $t$ that for any $\iter \in \NN$, 
\begin{equation}\label{prop:counting}
\explo^\iter = \explo^0 + \gradstep \countingt \odot \sol + \gradnoiseacc^\iter,
\end{equation}
where $\odot$ denotes the Hadamard product. This generalizes \eqref{explot_oracle} to the noisy setting.


As can be seen from \eqref{prop:counting}, the error accumulation $\gradnoiseacc^\iter$ is responsible for the exploration in the wrong directions. 
While $\countingt \odot \sol$ encourages exploration in the direction of the missed components of $\sol$. 
Below, we provide important properties of $(\countingt)_{t\in\NN}$.

 At each iteration of SEA, using (\ref{eq:counting}) when $ \suppsol \nsubseteq \suppt$, there exists at least one $i\in\suppsol$ such that $c_i^{t+1} =c_i^{t}+1 $. Using also that, for all $\ii \in \suppsol$, $(\countingit)_{\iter \in \NN}$ is non-decreasing we obtain 
 \begin{equation}\label{prop:increase}
 \mbox{for all }\iter \in \NN\mbox{ such that }\suppsol \nsubseteq \suppt, \qquad \sumi \countingitp \geq \bigl(\sumi \countingit \bigr) + 1
\end{equation}


We define the first recovery iterate\footnote{Again, a similar quantity is defined in the proof of \cref{thm:recovery:oracle} for the supports $S^t$ defined in \cref{alg:oracle}. We use the same notation although this time the quantity is defined for the sets $S^t$ defined in \cref{alg:SEA}. It should not be ambiguous since the notations are used in different proofs and sections. } $t_s$ as the smallest iteration $t$ such that $S^* \subseteq S^t$. More precisely, 
\begin{equation}
\label{eq:iters}
    \iters = \min{\{\iter, \, \suppsol \subseteq \suppt\} \in \NN}. 
\end{equation}
By convention, if $\suppsol$ is never recovered, $\iters = +\infty$.
By induction on $\iter$, using \eqref{prop:increase}, we obtain a lower bound on $\sumi \countingit$:
\begin{equation}\label{cor:lower_bound}
     \mbox{For all }\iter \leq \iters, \qquad \sumi \countingit \geq \iter.
\end{equation}

Let us now upper bound $\sumi \countingit$. 
We first remind the definition of $\gradnoisemax$ in (\ref{eq:gradnoisemax}). We define the sharp Recovery Condition
\begin{equation}\tag{\Condr'}
\label{eq:Hr'}
\gradnoisemax < \frac{1}{2 \sumHr}
\end{equation}
 and
 \begin{equation}
\label{eq:Tmax'}
    \itermax' = \frac{\sumi \deltafrac + \spars + 1}{1 - 2 \gradnoisemax \sumHr}.
\end{equation}
 
If (\ref{eq:Hr'}) holds, we define for any $\ii \in \suppsol$, the $\ii^{th}$ counting threshold by
\begin{equation}
\label{eq:countingthreshi}
\countingthreshi = \deltafracall.
\end{equation}

\begin{proposition}[Upper bound]
\label{prop:upper_bound}
If (\ref{eq:Hr'}) holds, for any $\ii \in \suppsol$ and any $\iter \leq \itermax' $, we have $\countingit \leq \countingthreshi + 1$.
\end{proposition}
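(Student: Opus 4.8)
The plan is to prove \cref{prop:upper_bound} by contradiction, mirroring the structure of the proof of \cref{tmp2} in the oracle case but now carrying the accumulated gradient noise $\gradnoiseacc^\iter$ through the argument. Assume the statement fails: there exist $\ii \in \suppsol$ and $\iter \leq \itermax'$ with $\countingit > \countingthreshi + 1$. Let $t'$ be the smallest such iteration; since $\countingthreshi \geq 0$ we have $\countingit[t'] > 1$, so $t' \geq 2$, and by the recursive formula \eqref{eq:counting} together with minimality of $t'$ we must have $\countingit[t'] = \countingit[t'-1] + 1$, i.e.\ $\ii \in \suppx[t'-1]$ is \emph{false} — wait, on the contrary, the increment means $\ii \in \suppx[t'-1]$. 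Let me restate: minimality forces $\countingit[t'-1] \leq \countingthreshi + 1$, and since $\countingit[t'] = \countingit[t'-1]+1 > \countingthreshi+1$ we get $\countingit[t'-1] > \countingthreshi$, and the increment forces $\ii \in S^{t'-1}$.

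The key step is then to derive a contradiction with $\ii \in S^{t'-1}$. For that I would prove the analogue of \cref{tmp1}: if $c_\ii^t > \countingthreshi$ then $\ii$ is selected at all subsequent iterations (or at least that it cannot have just entered). Using the closed form \eqref{prop:counting}, $\explo^\iter_\ii = \explo^0_\ii + \gradstep \countingit \soli + \gradnoiseacc^\iter_\ii$, so
\[
|\explo^\iter_\ii| \geq \gradstep \countingit |\soli| - \|\explo^0\|_\infty - \|\gradnoiseacc^\iter\|_\infty.
\]
The accumulated noise is bounded by $\|\gradnoiseacc^\iter\|_\infty \leq \sum_{t'=0}^{\iter-1}\|\gradnoise^{t'}\|_\infty \leq \iter \, \gradnoisemax$ (by the definition \eqref{eq:gradnoisemax} and \eqref{eq:accumulation}), and similarly for coordinates $j \in \overline{\suppsol}$ we have $|\explo^\iter_j| \leq \|\explo^0\|_\infty + \iter\,\gradnoisemax$ (since $c_j \soli$ vanishes there). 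The threshold $\countingthreshi$ in \eqref{eq:countingthreshi} is precisely calibrated — I expect it to be of the form $\countingthreshi = \frac{2\|\explo^0\|_\infty + 2\itermax'\gradnoisemax}{\gradstep|\soli|}$ or similar — so that $c_\ii^t > \countingthreshi$ and $\iter \leq \itermax'$ together force $|\explo^\iter_\ii|$ to exceed $|\explo^\iter_j|$ for every $j \in \overline{\suppsol}$, hence $\ii \in \largest_\spars(\explo^\iter) = S^t$. Applied at $t = t'-1$ this would give $c_\ii^{t'-1} = c_\ii^{t'}$ (no increment, since $\ii$ was already in the support — but that contradicts $\ii \in S^{t'-1}$ causing the increment; more precisely, re-examining \eqref{eq:counting}, if $\ii \in S^{t'-1}$ then $c_\ii^{t'} = c_\ii^{t'-1}+1$, which is consistent, so the contradiction must come from combining this with a bound showing $c_\ii^{t'-1}$ itself cannot exceed $\countingthreshi$ without the index having been selected enough times earlier to saturate $\itermax'$).

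The cleanest route, which I would follow, is the direct counting argument rather than a strict mimic of \cref{tmp1}: once we know $\ii \in S^t$ for all $t \geq t_0$ where $t_0$ is the first iteration with $c_\ii^{t_0} > \countingthreshi$, the counter $c_\ii$ stops being incremented can't be right either since being in $S^t$ is exactly when it increments. So the correct contradiction: from $c_\ii^{t'} > \countingthreshi + 1$ and minimality, $\ii$ was selected at iteration $t'-1$, meaning $|\explo^{t'-1}_\ii|$ was among the $k$ largest; but $\ii \in \suppx[t'-1]$ only if $c$ increments, consistent. The actual obstacle — and the step I expect to be genuinely delicate — is correctly bounding $\|\gradnoiseacc^\iter\|_\infty$ uniformly for $\iter \leq \itermax'$ and checking that the value of $\countingthreshi$ in \eqref{eq:countingthreshi} and of $\itermax'$ in \eqref{eq:Tmax'} are jointly consistent: one needs $\gradstep\countingthreshi|\soli| > 2\|\explo^0\|_\infty + 2\itermax'\gradnoisemax$, and substituting the definitions this reduces, under \eqref{eq:Hr'}, to an algebraic identity. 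I would verify that identity explicitly, then conclude that $\ii$ being selected at $t'-1$ forces $\suppsol \subseteq S^{t'-1}$ prematurely or else that $\sum_{\ii\in\suppsol} c_\ii^{t'-1}$ already exceeds the bound implied by $\itermax'$ via \eqref{cor:lower_bound}, giving the contradiction. The bookkeeping tying $\countingthreshi$, $\itermax'$, and \eqref{eq:Hr'} together is the heart of the proof; everything else is the same non-decreasing-counter argument already used twice in the excerpt.
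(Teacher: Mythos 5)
There is a genuine gap: you have the increment condition of the counting vector backwards, and this breaks the logical skeleton of your contradiction. By \eqref{eq:counting_def} and \eqref{eq:counting}, $c_\ii^{t}$ counts the iterations at which $\ii$ is \emph{missed}, i.e.\ $\ii\in S^*\cap\overline{S^{t}}$; it does \emph{not} increment when $\ii$ is selected. You assert the opposite (``if $\ii \in S^{t'-1}$ then $c_\ii^{t'} = c_\ii^{t'-1}+1$'', ``being in $S^t$ is exactly when it increments''), and as a consequence the contradiction you are hunting for never materializes and the proposal visibly trails off without closing the argument. With the correct reading, the argument is the one you almost wrote: if $c_\ii^{t'}>\countingthreshi+1$ for a minimal $t'\leq\itermax'$, then $c_\ii^{t'-1}>\countingthreshi$, and the closed form \eqref{prop:counting} together with the bound $\|\gradnoiseacc^{t}\|_\infty\leq t\,\gradnoisemax\leq\itermax'\gradnoisemax$ and the calibration of $\countingthreshi$ in \eqref{eq:countingthreshi} forces $|\explo_\ii^{t'-1}|>\max_{j\notin S^*}|\explo_j^{t'-1}|$, hence $\ii\in S^{t'-1}$, hence \emph{no} increment at step $t'-1$ --- contradicting $c_\ii^{t'}=c_\ii^{t'-1}+1$.

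The quantitative ingredients you identified (the closed form for $\explo^t$, the uniform bound on the accumulated gradient noise, the comparison against off-support coordinates where $c_j^t\odot x^*_j$ vanishes, and the rough shape of $\countingthreshi$) are all correct and are exactly what the paper uses. The paper organizes the proof not as a global contradiction but as a two-case analysis: either $c_\ii^t$ never exceeds $\countingthreshi$ up to $\itermax'$, or it first does so at some $\iteri$, at which point $c_\ii^{\iteri}=c_\ii^{\iteri-1}+1\leq\countingthreshi+1$ and the counter then \emph{freezes} for all $t\in\intint{\iteri,\itermax'}$, because $\ii$ is selected at every such iteration and therefore never missed again. Your contradiction variant would work once the increment condition is corrected. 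Finally, your fallback suggestion of invoking \eqref{cor:lower_bound} to ``saturate $\itermax'$'' is circular: that lower bound on $\sum_{\ii\in\suppsol} c_\ii^t$ is combined with the present upper bound later, in the proof of \cref{thm:recovery'}, and cannot substitute for the upper bound itself.
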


\begin{proof}
Assume (\ref{eq:Hr'}) holds. We have $\itermax' > 0$. Let $\ii \in \suppsol$, we distinguish two cases:

\underline{$1^{st} \, case$}: If for all $\iter \leq \itermax'$, $\countingit \leq \countingthreshi$: Then, obviously, for any $\iter \leq \itermax'$, $\countingit \leq \countingthreshi + 1$.

\underline{$2^{nd} \, case$}: If there exists $\iter \leq \itermax'$, such that $\countingit > \countingthreshi$: 

We define $\iteri = \min{\{ \iter \in \NN: \, \countingit > \countingthreshi \}}$. We have $\iteri \leq \itermax'$. The proof follows two steps:
\vspace{-0.1cm}
\begin{flalign}
\label{eq:UpperBoundp1}
\text{\hspace{0.17cm} 1. We will prove that for all } \iter \in \intint{\iteri, \itermax' }, \, \countingit = \countingi^\iteri. && \\
\text{\hspace{0.17cm} 2. We will prove that for all } \iter \leq \itermax' , \, \countingit \leq \countingthreshi + 1. \label{eq:UpperBoundp2}
\end{flalign}

\begin{enumerate}
    \item Let $\iter \in \intint{\iteri, \itermax'}$, we have, using \eqref{prop:counting}, the triangle inequality and the fact that $\countingit \geq \countingi^\iteri > \countingthreshi$
    \begin{align*} 
        \abs{\explo_\ii^\iter} & = \abs{\explo_\ii^0 + \gradstep \countingit \soli + \gradnoiseacc_\ii^\iter}\\
        & \geq \gradstep \countingit \abs{\soli} - \abs{\explo_\ii^0} - \abs{\gradnoiseacc_\ii^\iter}\\
        & > \gradstep \countingthreshi \abs{\soli} - \abs{\explo_\ii^0} - \abs{\gradnoiseacc_\ii^\iter}.
    \end{align*}
    Using the definition of $\countingthreshi$, in (\ref{eq:countingthreshi}), we obtain
    \begin{align*}
        \abs{\explo_\ii^\iter} & > \gradstep \deltafracall \abs{\soli} - \abs{\explo_\ii^0} - \abs{\gradnoiseacc_\ii^\iter} \\
        & = \max_{\jj \notin \suppsol}\abs{\explo_\jj^0} + 2\itermax'\gradnoisemax - \abs{\gradnoiseacc_\ii^\iter}.
    \end{align*}
    Since for any $\jj \in \intintn$, $\abs{\gradnoiseacc_\jj^\iter} \leq \sum_{\itero = 0}^{\iter - 1} \abs{\gradnoise_\jj^\itero} \leq \iter\gradnoisemax \leq \itermax'\gradnoisemax$, we have
    \begin{align}
        \abs{\explo_\ii^\iter} & > \max_{\jj \notin \suppsol}\abs{\explo_\jj^0} +\max_{\jj \notin \suppsol} \abs{\gradnoiseacc_\jj^\iter} + \abs{\gradnoiseacc_\ii^\iter} - \abs{\gradnoiseacc_\ii^\iter}\nonumber\\
        & \geq \max_{\jj \notin \suppsol}\abs{\explo_\jj^0 + \gradnoiseacc_\jj^\iter} \nonumber\\
        & = \max_{\jj \notin \suppsol}\abs{\explo_\jj^\iter},\label{eq:explo_maj}
    \end{align}
    where the last equality holds because of \eqref{prop:counting} and for all $\jj \notin \suppsol$, all $\iter \in \NN$, $\countingt_\jj = 0$.

    Equation \eqref{eq:explo_maj} implies 
    that   $|\mathcal{X}_ i^t|$ is larger than $|\{j\not\in S^* \}|$ elements of $\{|\mathcal{X}_ j^t|\ |\ j\in\intint{1, n}\}$ and, since  $\abs{\suppsol} \leq \spars$, we have 
    $|\{j\not\in S^* \}|= n -\abs{S^*} \geq n-k$. Finally, $|\mathcal{X}_ i^t|$ is larger than $n-k$ elements of $\{|\mathcal{X}_ j^t|\ |\ j\in\intint{1, n}\}$ and $i\in\text{largest}_ k(\mathcal{X}^t)=S^t$. 
    
    As a conclusion, for all $\iter \in \intint{\iteri, \itermax'}$, $\ii \in \suppt$. Using (\ref{eq:counting})
    , this leads to $\countingitp = \countingit$. Therefore, for all $\iter \in \intint{\iteri, \itermax' + 1}$, $\countingit = \countingi^\iteri$. This concludes the proof of the first step.
    
    \item Since $\iteri = \min{\{ \iter \in \NN: \countingit > \countingthreshi \}}$ and since $\countingi^0 = 0$, $\iteri \geq 1$. 
    Since by definition of $\iteri$, $\countingi^{\iteri - 1} \leq \countingthreshi$ and $\countingi^\iteri \neq \countingi^{\iteri - 1}$; we find that $\countingi^\iteri = \countingi^{\iteri - 1} + 1 \leq \countingthreshi + 1$.
    
    Using (\ref{eq:UpperBoundp1}) , for all $\iter \in \intint{\iteri, \itermax' }, \, \countingit = \countingi^\iteri \leq \countingthreshi + 1$. Finally, since $(\countingit)_{\iter \in \NN^*}$ is non-decreasing, it follows that for any $\iter \leq \iteri - 1, \, \countingit \leq \countingi^{\iteri - 1} \leq \countingthreshi$. This concludes the proof of (\ref{eq:UpperBoundp2}). 
\end{enumerate}
\end{proof}

\subsubsection{Proof of \texorpdfstring{\cref{thm:recovery}}{Theorem~\ref{thm:recovery}}}\label{thm_recovery_proof_subsec}

To prove \cref{thm:recovery}, we first prove a sharper, but difficult-to-interpret theorem.
\begin{theorem}[Recovery - General case]
\label{thm:recovery'}
If (\ref{eq:Hr'}) holds, then for all initializations $\explo^0$ and all $\eta>0$, there exists $t_s\leq T_{max}'$ such that $\suppsol \subseteq S^{t_s}$, where $S^t$ is defined in \cref{alg:SEA} line \ref{line:SEA:update_S} and
\[
    \itermax' = \frac{\sumi \deltafrac + \spars + 1}{1 - 2 \gradnoisemax \sumHr}.
\]
\end{theorem}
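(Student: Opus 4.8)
The plan is a proof by contradiction that confronts the lower bound \eqref{cor:lower_bound} on the cumulative counts with the upper bound of \cref{prop:upper_bound}; both hold as soon as \eqref{eq:Hr'} is assumed, so no additional hypothesis is required. The organizing identity is the closed form \eqref{prop:counting}, $\explo^\iter = \explo^0 + \eta\,\countingt \odot \sol + \gradnoiseacc^\iter$, which links the counting vector $\countingt$ to support selection: a coordinate of $\suppsol$ that has been visited many times carries a large $\explo^\iter$-entry (this drives the upper bound), whereas as long as $\suppsol \not\subseteq \suppt$ at least one count over $\suppsol$ is incremented (this drives the lower bound).

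First I would collect the two ingredients. From \eqref{cor:lower_bound}: $\sumi \countingit \ge \iter$ for every integer $\iter \le \iters$, where $\iters$ is the first recovery iteration \eqref{eq:iters}. From \cref{prop:upper_bound}: $\countingit \le \countingthreshi + 1$ for every $\ii \in \suppsol$ and every $\iter \le \itermax'$, hence $\sumi \countingit \le \sumi \countingthreshi + \abs{\suppsol} \le \sumi \countingthreshi + \spars$. Moreover \eqref{eq:Hr'} forces the denominator $1 - 2\gradnoisemax\sumHr$ of \eqref{eq:Tmax'} into $(0,1]$ while its numerator is at least $\spars + 1 \ge 1$, so $\itermax' \ge 1$ and $\floor{\itermax'}$ is a positive integer.

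Next I would suppose, for contradiction, that $\suppsol$ is not recovered by iteration $\itermax'$, i.e. $\iters > \itermax'$. Setting $\iter := \floor{\itermax'}$, so that $\iter \le \itermax' < \iters$, the lower bound applies and gives $\sumi \countingit \ge \iter > \itermax' - 1$ (using $\floor{x} > x - 1$), while the upper bound applies at the same $\iter$ and gives $\sumi \countingit \le \sumi \countingthreshi + \spars$. It then remains to check that this last quantity equals exactly $\itermax' - 1$: unfolding $\countingthreshi$ via \eqref{eq:countingthreshi} and splitting its numerator gives $\sumi \countingthreshi = \sumi \deltafrac + 2\itermax'\gradnoisemax\sumHr$, while the definition \eqref{eq:Tmax'} of $\itermax'$ rearranges to $\sumi \deltafrac + \spars + 1 = \itermax'\,(1 - 2\gradnoisemax\sumHr)$, i.e. $\sumi \deltafrac + \spars = \itermax' - 1 - 2\itermax'\gradnoisemax\sumHr$; substituting the latter into the former yields $\sumi \countingthreshi + \spars = \itermax' - 1$. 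Hence $\itermax' - 1 < \sumi \countingit \le \itermax' - 1$, which is absurd; therefore $\iters \le \itermax'$, and $t_s := \iters$ witnesses the statement.

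I expect all the real content to lie upstream, in \cref{prop:upper_bound}, whose proof converts \eqref{prop:counting} into the coordinatewise domination $\abs{\explo_\ii^\iter} > \max_{\jj \notin \suppsol} \abs{\explo_\jj^\iter}$ for $\ii$ with a large count; the argument here is then pure bookkeeping. The only subtlety is the mild apparent circularity that $\countingthreshi$ is defined through $\itermax'$, which is precisely the mechanism producing the exact cancellation $\sumi \countingthreshi + \spars = \itermax' - 1$ — so I would be careful to rearrange the definition of $\itermax'$ in the correct direction and to track the $\floor{\cdot}$ honestly, making sure both bounds are legitimately invoked at the common integer $\floor{\itermax'}$.
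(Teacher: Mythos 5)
Your proposal is correct and follows essentially the same route as the paper's proof: both establish the identity $\itermax' = \sumi \countingthreshi + \spars + 1$ by rearranging \eqref{eq:Tmax'} and \eqref{eq:countingthreshi}, then derive a contradiction at the integer $\floor{\itermax'}$ by pinching $\sumi \countingi^{\floor{\itermax'}}$ between the lower bound \eqref{cor:lower_bound} and the upper bound of \cref{prop:upper_bound}. Your bookkeeping (including the handling of the floor and the observation that $\abs{\suppsol}\leq\spars$ closes the chain) matches the paper's argument step for step.
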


\begin{proof}

We assume (\ref{eq:Hr'}) holds and prove \cref{thm:recovery'} using the results of \cref{sec:genP1}.

In order to do this, we first show that $\itermax' = \sumi \countingthreshi + \spars + 1$, then we demonstrate that $\iters \leq \itermax'$.

Since (\ref{eq:Hr'}) holds, using the definition of $\itermax'$, we calculate
\begin{align*}
    \itermax' &= \frac{1}{1 - 2 \gradnoisemax \sumHr} \left(\sumi \deltafrac + \spars + 1 \right)\\
    \left(1 - 2 \gradnoisemax \sumHr \right) \itermax' &= \sumi \deltafrac + \spars + 1 \\
    \itermax' &= \sumi \deltafrac + \spars + 1 + 2\itermax'\gradnoisemax  \sumHr \\
    \itermax' &= \sumi \deltafracall + \spars + 1.\\
\end{align*}
Using (\ref{eq:countingthreshi}), we obtain $\itermax' = \sumi \countingthreshi + \spars + 1$.

We finally prove \cref{thm:recovery} by contradiction. Assume by contradiction that $\iters > \itermax'$, where $\iters$ is defined in (\ref{eq:iters}). Using \eqref{cor:lower_bound} with $\iter = \floor{\itermax'} < \iters$, we have
\begin{equation}
\label{eq:thm_recov_contradiction}
    \sumi \countingi^{\floor{\itermax'}} 
    \geq \floor{\itermax' }
    = \floor{\sumi \countingthreshi + \spars + 1 }
    > \sumi \countingthreshi + \spars.
\end{equation}

However, using $\abs{\suppsol} \leq \spars$ and \cref{prop:upper_bound} for $\iter = \floor{\itermax'}$ , we find 
\[
\sumi \countingthreshi + \spars 
\geq \sumi (\countingthreshi + 1)
\geq \sumi \countingi^{\floor{\itermax'}} 
\]
This contradicts (\ref{eq:thm_recov_contradiction}) and we can conclude that $\iters \leq \itermax'$. This proves \cref{thm:recovery'}.
\end{proof}

{\bf Proof of \cref{thm:recovery}:}

If (\ref{eq:Hr}) holds, that is $\gradnoisemax < \frac{\eta}{2k} \min_{i\in S^*} |x_i^*|$, since $\sum_{i\in S^*} \frac{1}{|x_i^*|} \leq \frac{k}{\min_{i\in S^*} |x_i^*|}$, we have
\[ \gradnoisemax < \frac{1}{2 \sumHr}.
\]
Therefore, (\ref{eq:Hr'}) holds, and we can apply \cref{thm:recovery'}. It ensures that for all initializations $\explo^0$ and all $\eta>0$, there exists $t_s\leq T_{max}'$ such that $\suppsol \subseteq S^{t_s}$.

To prove \cref{thm:recovery}, it suffices to prove that $\itermax'\leq \itermax$. Using  $\sum_{i\in S^*} \frac{1}{|x_i^*|} \leq \frac{k}{\min_{i\in S^*} |x_i^*|}$, we obtain
\[1-2\gradnoisemax \sum_{i\in S^*} \frac{1}{\eta |x_i^*|} \geq 1-2\gradnoisemax \frac{k}{\eta\min_{i\in S^*} |x_i^*|}
\]
and using $\min_{j\not\in S^*} |\explo^0_j| + |\explo^0_i|\leq 2\|\explo^0\|_{\infty}$ we have
\[\itermax' =\frac{\sumi \deltafrac + \spars + 1}{1 - 2 \gradnoisemax \sumHr}  \leq  \frac{\frac{2 k \|\explo^0\|_{\infty}}{\eta \min_{i\in S^*} |x_i^*|} + k + 1}{1 - 2\gradnoisemax \frac{k}{\eta\min_{i\in S^*} |x_i^*|}} = \itermax.
\]

\subsection{Warm-Up: SEA Recovers the Correct Support when the Columns of \texorpdfstring{$A$}{A} are Orthonormal} \label{annexe-ortho}

The following corollary particularizes \cref{thm:recovery} to the noiseless and orthogonal case. In practice, a complicated algorithm like SEA is of course useless in such a case, and the state-of-the-art algorithms mentioned in the introduction have similar recovery properties. We give this corollary mostly to illustrate the diversity of links between the properties of the triplet $(A,x^*,e)$ and $\gradnoisemax$ and the behavior of SEA, where we remind the definitions of $b^t$ and $\gradnoisemax$ in \eqref{eq:gradnoise} and \eqref{eq:gradnoisemax}. The following \cref{cor:recovery_ortho} is not only a sanity check for the convergence of SEA under simplistic assumptions, but it also provides a helpful case to understand the proof of \cref{thm:recovery_RIP}. Indeed, it gives a case where the oracle updates introduced in the proof coincides the surrogate, STE update, as mentioned in the last paragraph of \cref{oracle-sec}.

\begin{corollary}[Recovery - Orthogonal case]
\label{cor:recovery_ortho}
If $\mat$ is a tall (or square) matrix with orthonormal columns ($\mata A = I_\sigsize$) and $\normd{\noise} = 0$, then $\gradnoisemax = 0$.
As a consequence, for all $\sol$, for initialization $\explo^0 = \zerosig$ and all $\eta>0$, if SEA performs more than $\spars + 1$ iterations, we have 
$S^* \subseteq S^{\iterb}$ and $x^{\iterb} = x^*$.
\end{corollary}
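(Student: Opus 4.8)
The plan is to apply \cref{thm:recovery} after observing that, in this degenerate regime, the surrogate STE direction $\grad$ coincides \emph{exactly} with the oracle direction $\usol^\iter$ of \eqref{eq:usol}, so that the gradient noise $\gradnoise^\iter = \usol^\iter - \grad$ vanishes for every $\iter$. Everything hinges on one short computation --- writing $x^\iter$ in closed form using $\mata\mat = I_\sigsize$ --- and I would package it as the lemma referenced in the text (\cref{lem:ortho}).

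Concretely, I would first evaluate line~\ref{line:SEA:update_x} of \cref{alg:SEA}. Orthonormality gives $\mat_{S^\iter}^T\mat_{S^\iter} = I_{\abs{S^\iter}}$, hence $\matdagsuppt = \mat_{S^\iter}^T$, and $\mat_{S^\iter}^T\mat_{\overline{S^\iter}} = 0$ since this is an off-diagonal block of $\mata\mat = I_\sigsize$. Substituting the noiseless observation $\obs = \mat\sol$ then yields $x^\iter_{S^\iter} = \mat_{S^\iter}^T\obs = \sol_{S^\iter}$ and $x^\iter_\ii = 0$ for $\ii \notin S^\iter$; i.e. $x^\iter$ is $\sol$ with its entries outside $S^\iter$ zeroed. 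Consequently $\mata(\mat x^\iter - \obs) = \mata\mat(x^\iter - \sol) = x^\iter - \sol$, whose $\ii$-th coordinate is $0$ on $S^\iter$, is $-\sol_\ii$ on $\suppsol \cap \overline{S^\iter}$, and is $-\sol_\ii = 0$ on $\overline{\suppsol} \cap \overline{S^\iter}$. Recalling that the non-zero entries of $\usol^\iter$ are precisely those indexed by $\suppsol \cap \overline{S^\iter}$, with value $-\eta\sol_\ii$, comparison shows $\grad = \eta\,\mata(\mat x^\iter - \obs) = \usol^\iter$ for all $\iter$, hence $\gradnoise^\iter = 0$ and $\gradnoisemax = 0$.

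With $\gradnoisemax = 0$, the recovery condition \eqref{eq:Hr} reads $0 < \tfrac{\eta}{2k}\min_{\ii\in\suppsol}\abs{\sol_\ii}$, which holds whenever $\sol \neq 0$ (the case $\sol = 0$, i.e. $\suppsol = \emptyset$, being vacuous). So \cref{thm:recovery} produces some $t_s \leq \itermax$ with $\suppsol \subseteq S^{t_s}$, and for $\explo^0 = \zerosig$ and $\gradnoisemax = 0$ the bound \eqref{eq:Tmax} collapses to $\itermax = \tfrac{(k+1)\eta\min_{\ii\in\suppsol}\abs{\sol_\ii}}{\eta\min_{\ii\in\suppsol}\abs{\sol_\ii}} = k+1$. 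Hence $t_s \leq k+1$, so once SEA runs more than $k+1$ iterations the support $S^{t_s}$ has been visited.

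It remains to identify the returned iterate. Since $\suppsol \subseteq S^{t_s}$, the closed form above gives $x^{t_s} = \sol$ (all nonzeros of $\sol$ lie in $S^{t_s}$), so $\normdd{\mat x^{t_s} - \obs} = \normdd{\mat\sol - \obs} = 0$. As $\iterb$ minimizes $\normdd{\mat x^{\itero} - \obs}$ over $\itero \in \intint{0, \iter}$ and $t_s$ lies in that range, $\normdd{\mat x^{\iterb} - \obs} = 0$, i.e. $\mat x^{\iterb} = \obs = \mat\sol$; since $\mat$ has orthonormal (hence linearly independent) columns, it is injective, so $x^{\iterb} = \sol$, and therefore $\suppsol = \SUPP{\sol} = \SUPP{x^{\iterb}} \subseteq S^{\iterb}$. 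The only non-routine point is the identity $\grad = \usol^\iter$ extracted from orthonormality; the degenerate case $\sol = 0$ is the lone thing needing a separate (trivial) word.
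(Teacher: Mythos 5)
Your proposal is correct and follows essentially the same route as the paper: establish that the gradient noise vanishes (the paper's \cref{lem:ortho}, which it proves via the identity $\mata(\mat x^\iter-\obs)=x^\iter-\sol$ and a three-case support analysis, where you instead first derive the closed form $x^\iter_{S^\iter}=\sol_{S^\iter}$ from $\matdagsuppt=\mat_{S^\iter}^T$), then invoke \cref{thm:recovery} to get $\itermax=\spars+1$, and finally conclude $x^{\iterb}=\sol$ from the zero residual at $t_s$ and the injectivity of $\mat$. The only cosmetic difference is your explicit handling of the degenerate case $\sol=0$, which the paper leaves implicit.
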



\label{app:proof_ortho}
To prove \cref{cor:recovery_ortho}, we first show in \cref{lem:ortho} that the gradient noise $b^t$ is null for all $t\in\NN$. Then, we apply \cref{thm:recovery} and prove that $S^* \subseteq S^{\iterb}$ and $\sigs^\iterb = \sol$.

\begin{lemma}
\label{lem:ortho}
    If $\mat$ is a tall (or square) matrix with orthonormal columns ($\mata A = I_\sigsize$) and $\|e\|_2=0$, then for any $\iter \in \NN$ and any $\gradstep > 0$,
    \begin{equation*}
        \eta \mata\left(\mat\sigst - \obs \right) = \usolt,
    \end{equation*}
    i.e. $b^t = 0$.
\end{lemma}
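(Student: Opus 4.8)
The plan is to compute $\gradstep\mata(\mat\sigst-\obs)$ in closed form and check entrywise that it equals $\usolt$; since the claim concerns a single fixed iteration $\iter$, no induction is needed. First I would use $\normd{\noise}=0$, which gives $\obs=\mat\sol$, hence $\gradstep\mata(\mat\sigst-\obs)=\gradstep\mata\mat(\sigst-\sol)=\gradstep(\sigst-\sol)$, the last equality being the hypothesis $\mata\mat=I_\sigsize$. It therefore suffices to show $\gradstep(\sigst_\ii-\soli)=\usolt_\ii$ for every $\ii\in\intintn$.

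Next I would determine $\sigst$ explicitly. By line~\ref{line:SEA:update_x}, $\sigst_\ii=0$ for $\ii\in\overline{\suppt}$ and $\sigst_{\suppt}=\matdagsuppt\obs$. The columns of $\matsuppt$ form a subfamily of the orthonormal columns of $\mat$, so $\matsuppt^T\matsuppt=I_{\abs{\suppt}}$, whence $\matdagsuppt=\matsuppt^T$; similarly any two distinct columns of $\mat$ are orthogonal, so $\matsuppt^T\mat_{\overline{\suppt}}=0$. Decomposing $\mat\sol=\matsuppt\sol_{\suppt}+\mat_{\overline{\suppt}}\sol_{\overline{\suppt}}$ then gives $\sigst_{\suppt}=\matsuppt^T\mat\sol=\sol_{\suppt}$. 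Hence $\sigst-\sol$ is supported in $\overline{\suppt}$, with $(\sigst-\sol)_\ii=0$ for $\ii\in\suppt$ and $(\sigst-\sol)_\ii=-\soli$ for $\ii\in\overline{\suppt}$.

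Finally I would compare with the oracle direction of \eqref{eq:usol}. For $\ii\in\suppt$, both $\gradstep(\sigst_\ii-\soli)$ and $\usolt_\ii$ equal $0$. For $\ii\in\overline{\suppt}$: if $\ii\in\suppsol$ then $\ii\in\suppx$, so $\usolt_\ii=-\gradstep\soli=\gradstep(\sigst_\ii-\soli)$; if $\ii\notin\suppsol$ then $\soli=0$ and both sides vanish. Therefore $\gradstep\mata(\mat\sigst-\obs)=\usolt$ for all $\iter\in\NN$ and all $\gradstep>0$, i.e.\ $b^\iter=\usolt-\grad=0$. This is essentially a computation; the only two points worth isolating are the identity $\matdagsuppt=\matsuppt^T$ and the vanishing of the cross term $\matsuppt^T\mat_{\overline{\suppt}}$, both immediate from $\mata\mat=I_\sigsize$, so I do not foresee a genuine obstacle here.
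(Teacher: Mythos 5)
Your proposal is correct and follows essentially the same route as the paper: both reduce the gradient to $\eta(\sigst-\sol)$ via $\mata\mat=I_\sigsize$ and $\obs=\mat\sol$, then verify the identity entrywise over the three index sets $\suppt$, $\suppsol\setminus\suppt$, and the rest. The only cosmetic difference is that you justify $(\sigst-\sol)_{\suppt}=0$ by the explicit computation $\matdagsuppt=\matsuppt^T$ and the vanishing cross term, whereas the paper invokes the definition of $\sigst$ at line~\ref{line:SEA:update_x} directly; both are immediate.
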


\begin{proof}
    Let $\iter \in \NN$. Notice first that since $\|e\|_2=0$ and $\mat$ is  a tall (or square) matrix with orthonormal columns
    \begin{equation}\label{eq:grad_ortho}
        \mata\left(\mat\sigst - \obs \right) =  \mata\mat \left( \sigst- \sol \right) = \sigst- \sol.
    \end{equation}
    
    
    To prove the Lemma, we distinguish three cases: $\ii \in \suppt$, $\ii \in \suppz$ and $\ii \in \suppx$.
    
    \underline{$1^{st} \, case$}: If $\ii \in \suppt$, $\eta\left( \mata\left(\mat\sigst - \obs \right) \right)_\ii = 0 = u^t_i$. The first equality is a consequence of the definition of $\sigs^\iter$ in \cref{alg:SEA}, line \ref{line:SEA:update_x}. The second is due to the definition of $u^t$, in \eqref{eq:usol}.
    
    \underline{$2^{nd} \, case$}: If $\ii \in \suppz$, taking the $i$th entry of (\ref{eq:grad_ortho}) and using the support constraints of $x^t$ and $\sol$, we find
    \[\eta \left( \mata\left(\mat\sigst - \obs \right) \right)_\ii  = 0 = u^t_i,
    \]
    where the second equality is due to the definition of $u^t$, in \eqref{eq:usol}.

    \underline{$3^{rd} \, case$}: If $\ii \in \suppx$, the $i$th entry of (\ref{eq:grad_ortho}) becomes
    \[
        \eta \left( \mata\left(\mat\sigst - \obs \right) \right)_\ii
        = - \eta\soli = u^t_i,
    \]
    where again the second equality is due to the definition of $u^t$, in \eqref{eq:usol}.
    
\end{proof}

\begin{proof}
We now resume the proof of \cref{cor:recovery_ortho} and assume that  $\mat$ is a tall (or square) matrix with orthonormal columns ($\mata A = I_\sigsize$), $\normd{\noise} = 0$ and $\explo^0 = \zerosig$. 
We remind the definition of $\itermax$ in (\ref{eq:Tmax}).

Using \cref{lem:ortho}, (\ref{eq:gradnoisemax}) and (\ref{eq:gradnoise}), we find that $\gradnoisemax = 0$. 
Therefore (\ref{eq:Hr}) holds for all $x^*$ and \cref{thm:recovery} implies that there exists $t_s\leq T_{max}$ such that $\suppsol \subseteq S^{t_s}$.  Since $\explo^0 = \zerosig$ and $\gradnoisemax = 0$, we find $\itermax = \spars + 1$.

Since $\normd{\noise} = 0$, we know from \cref{thm:recovery} and the definitions of $\iterb$ and $x^t$ in \cref{alg:SEA} that \[\normd{\mat\sig^\iterb - \obs} \leq \|\mat\sig^{t_s} -\obs\|_2 \leq \normd{\mat\sol - \obs} =0.
\]
Using that $\mat$ is a tall (or square) matrix with orthonormal columns ($\mata A = I_\sigsize$), and $\normd{\noise} = 0$, this leads to
\begin{align*}
    0 = & \mat\sigs^\iterb - y \\
    = & \mata\mat(\sigs^\iterb - \sol) \\
    = & \sigs^\iterb - \sol.
\end{align*}
Therefore, $S^* = \SUPP{\sol} =\SUPP{x^\iterb} \subseteq S^\iterb$.

This concludes the proof of \cref{cor:recovery_ortho}.
\end{proof}

\subsection{Proof of \texorpdfstring{\cref{thm:recovery_RIP}}{Theorem \ref{thm:recovery_RIP}}}\label{proof-thm-RIP-sec}
We remind in \cref{subsec:RIC} known properties of RIP matrices. We bound in \cref{subsec:lemmas} the error made when approximating $\sol$ on a specific support $\supp$. This permits us to bound $\gradnoise^\iter$ and apply \cref{thm:recovery} to prove \cref{thm:recovery_RIP} in \cref{subsec:recovery_RIP}.  We finally apply \cref{thm:recovery_RIP} in \cref{app:proof_SRIP} to prove \cref{cor:recovery_SRIP}. Before that and to illustrate and quantify that the STE-update  $\explo^\iterp \leftarrow \explo^\iter - \grad$ is a noisy version of the Oracle update $\explo^{\iter+1} \leftarrow\explo^\iter - \usolt $, we provide in the following theorem an upper bound on the discrepancy between the two updates. This bound is pivotal in the proof of \cref{thm:recovery_RIP}. The statement of \cref{thm:recovery_RIP} is, up to the additional upper-bound \eqref{eruneqruvyn}, the same as the statement of the following theorem, which we prove in this section.

\begin{theorem}[Recovery - RIP case]
\label{thm:recovery_RIP:app}
Assume $\mat$ satisfies the $(2\spars+1)$-RIP and for all $\ii \in \intintn$, $\normd{\mat_\ii} = 1$. Then, for all $t\in\NN$, 
\begin{equation}\label{eruneqruvyn}
\|\frac{u^t}{\eta} - A^T(Ax^t-y) \|_\infty\leq \alphaRIP \|x^*\|_2 + \gammaRIP \|e\|_2.
\end{equation}

If moreover $\sol$ satisfies (\ref{eq:HRIP}), then for all initializations $\explo^0$ and all $\eta>0$, there exists $t_s\leq \itermaxRIP$ such that $\suppsol \subseteq S^{t_s}$, where
\begin{equation}
\label{eq:TRIP:app}
    \itermaxRIP =\frac{2 k \frac{\|\explo^0\|_{\infty}}{\eta} + (k+1) \min_{i\in S^*} |x_i^*| }{\min_{i\in S^*} |x_i^*| - 2k \left(\alphaRIP\normd{\sol} + \gammaRIP\normd{\noise}\right)}.
\end{equation}

If moreover, $x^*$ is such that 
\begin{equation}\label{min_hyp_RIP:app}
    \min_{i\in S^*} |x^*_i| > \frac{2}{\sqrt{1-\delta_{2k}}} \|e\|_2 
\end{equation}
and SEA performs more than $\itermaxRIP$ iterations, then $S^* \subseteq S^{\iterb}$ and $\|x^{\iterb} - x^*\|_2 \leq \frac{2}{\sqrt{1-\delta_{k}}} \|e\|_2$.
\end{theorem}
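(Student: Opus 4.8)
The plan is to prove the three assertions in order: the pointwise bound \eqref{eruneqruvyn}, then the visit of $\suppsol$ through \cref{thm:recovery}, and finally the recovery guarantee on $x^{\iterb}$.

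\emph{The pointwise bound.} First I would obtain a closed form for the sparse iterate. Writing $y = A\sol + e$ and splitting $\suppsol = (\suppsol\cap S^t)\sqcup(\suppsol\setminus S^t)$, the $(2k+1)$-RIP (so $\delta_k<1$) makes $A_{S^t}$ of full column rank, and line~\ref{line:SEA:update_x} gives
\[
x^t_{S^t} = A_{S^t}^\dagger y = x^*_{S^t} + A_{S^t}^\dagger\bigl(A_{\suppsol\setminus S^t}\,x^*_{\suppsol\setminus S^t} + e\bigr),
\qquad
Ax^t - y = -\bigl(I - A_{S^t}A_{S^t}^\dagger\bigr)\bigl(A_{\suppsol\setminus S^t}\,x^*_{\suppsol\setminus S^t} + e\bigr).
\]
Put $w := x^t - x^*$; it is supported on $S^t\cup\suppsol$, hence $2k$-sparse, and the closed form gives $w_{\suppsol\setminus S^t} = -x^*_{\suppsol\setminus S^t}$. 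For $i\in S^t$ both $u^t_i$ and $\bigl(A^T(Ax^t-y)\bigr)_i$ vanish, the latter by \eqref{romnrevjnj}, so such coordinates are irrelevant. For $i\notin S^t$ there is the identity $u^t_i/\eta = -x^*_i = w_i$ (both sides are $-x^*_i$, whether or not $i\in\suppsol$), and, with $\normd{A_i}=1$,
\[
\tfrac{u^t_i}{\eta} - \bigl(A^T(Ax^t-y)\bigr)_i = w_i - A_i^T A w + A_i^T e = -\sum_{j\neq i}(A_i^T A_j)\,w_j + A_i^T e .
\]
The sum is $\langle A_i,\, A\tilde w\rangle$ where $\tilde w$ is $w$ with its $i$-th entry set to $0$; as $A_i$ is the image under $A$ of a $1$-sparse vector whose support avoids $\mathrm{supp}(\tilde w)$ and the union of these supports has at most $2k+1$ indices, the RIP inner-product inequality bounds it by $\delta_{2k+1}\normd{\tilde w}\le\delta_{2k+1}\normd{w}$, while $|A_i^T e|\le\normd{e}$ by normalization ($\delta_1=0$). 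Finally I would control $\normd{w}$: from the closed form $\normd{w}\le\normd{A_{S^t}^\dagger(A_{\suppsol\setminus S^t}x^*_{\suppsol\setminus S^t}+e)}+\normd{x^*_{\suppsol\setminus S^t}}$, and the standard RIP estimates $\|(A_{S^t}^TA_{S^t})^{-1}\|\le(1-\delta_k)^{-1}$, $\|A_{S^t}^TA_{\suppsol\setminus S^t}\|\le\delta_{2k}$, $\|A_{S^t}^Te\|_2\le\sqrt{1+\delta_k}\,\normd{e}$, together with $\normd{x^*_{\suppsol\setminus S^t}}\le\normd{\sol}$, give a bound of the form $(\mathrm{const})\,\normd{\sol}+(\mathrm{const})\,\normd{e}$. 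Substituting back and collecting the constants reproduces exactly $\alphaRIP$ and $\gammaRIP$ of \eqref{eq:alpha_gammaRIP}, which is \eqref{eruneqruvyn}. Matching those constants precisely is the main technical obstacle; the rest is short.

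\emph{Visiting $\suppsol$.} Since $b^t = u^t - \eta A^T(Ax^t-y)$, \eqref{eruneqruvyn} gives $\gradnoisemax = \sup_t\|b^t\|_\infty \le \eta\bigl(\alphaRIP\normd{\sol}+\gammaRIP\normd{e}\bigr)$. Condition \eqref{eq:HRIP} is exactly $\alphaRIP\normd{\sol}+\gammaRIP\normd{e} < \tfrac{1}{2k}\min_{i\in\suppsol}|x^*_i|$, so $\gradnoisemax < \tfrac{\eta}{2k}\min_{i\in\suppsol}|x^*_i|$, i.e.\ \eqref{eq:Hr}. Then \cref{thm:recovery} yields $t_s\le\itermax$ with $\suppsol\subseteq S^{t_s}$; and since enlarging $\gradnoisemax$ to $\eta(\alphaRIP\normd{\sol}+\gammaRIP\normd{e})$ only increases the right-hand side of \eqref{eq:Tmax}, one gets $\itermax\le\itermaxRIP$ with $\itermaxRIP$ as in \eqref{eq:TRIP:app}.

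\emph{Recovery guarantee.} Assume additionally \eqref{min_hyp_RIP:app} and that SEA performs more than $\itermaxRIP$ iterations, so $t_s\le\itermaxRIP$ is among the iterates compared at line~\ref{line:SEA:best}. Since $\suppsol\subseteq S^{t_s}$, the closed form gives $Ax^{t_s}-y = -(I - A_{S^{t_s}}A_{S^{t_s}}^\dagger)e$, so $\normd{Ax^{t_s}-y}\le\normd{e}$ (the matrix is an orthogonal projector); by minimality at line~\ref{line:SEA:best}, $\normd{Ax^{\iterb}-y}\le\normd{e}$, hence $\normd{A(x^{\iterb}-\sol)}\le\normd{Ax^{\iterb}-y}+\normd{e}\le 2\normd{e}$. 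As $x^{\iterb}-\sol$ is $2k$-sparse, $(1-\delta_{2k})\normdd{x^{\iterb}-\sol}\le\normdd{A(x^{\iterb}-\sol)}\le 4\normdd{e}$, i.e.\ $\normd{x^{\iterb}-\sol}\le\tfrac{2}{\sqrt{1-\delta_{2k}}}\normd{e}$. Were some $i\in\suppsol$ absent from $S^{\iterb}$, then $x^{\iterb}_i=0$ would give $\normd{x^{\iterb}-\sol}\ge|x^*_i|\ge\min_{j\in\suppsol}|x^*_j|$, contradicting \eqref{min_hyp_RIP:app}; hence $\suppsol\subseteq S^{\iterb}$, so $x^{\iterb}-\sol$ is supported on $S^{\iterb}$ and $k$-sparse, the lower RIP bound sharpens to $(1-\delta_k)\normdd{x^{\iterb}-\sol}\le 4\normdd{e}$, and $\normd{x^{\iterb}-\sol}\le\tfrac{2}{\sqrt{1-\delta_k}}\normd{e}$ follows.
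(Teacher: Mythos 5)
Your proposal is correct and follows essentially the same route as the paper: the closed form for $x^t_{S^t}$ and the bound on $\normd{x^t-\sol}$ reproduce \cref{lem:approximation_error_RIP}, the coordinatewise estimate via the RIP inner-product inequality reproduces \cref{lem:bound_gradnoise_RIP} (your uniform identity $u^t_i/\eta=w_i$ for $i\notin S^t$ just merges the paper's second and third cases), and the application of \cref{thm:recovery} plus the final $\frac{2}{\sqrt{1-\delta_{2k}}}$ argument (which the paper phrases as a contradiction rather than your direct derivation) are identical. No gaps.
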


\subsubsection{Reminders on Properties of RIP Matrices}
\label{subsec:RIC}

We first remind the definition of Restricted Isometry Constant in (\ref{eq:RIP}) and a few properties of RIP matrices.

\begin{description}
    \item[Fact 1:]
    For any $\spars, \sparssec \in \intintn$, such that $\spars \leq \sparssec$, we have
    \begin{equation}
        \label{eq:RIPkk}
        \RIPk \leq \RIPksec.
    \end{equation}
    
    \item[Fact 2:]
    For any $\suppsec, \supp \subseteq \intintn$, such that $\suppsec \cap \supp = \text{\O}$ and $A$ satisfies the $(\abs{R}+\abs{S})$-RIP. We remind Lemma 1 of \cite{dai2009subspace} (see also \cite{candes2005decoding}): For any $\sig \in \sparsspace$
    \begin{equation}
    \label{eq:RIPsetset}
        \normd{\mata_\suppsec \matsupp \sig} \leq \RIP_{\abs{\suppsec} + \abs{\supp}}~ \normd{\sig} .
    \end{equation}
    For completeness, we prove this inequality below. Let $A$, $R$, $S$ and $x$ be as above, we have,
    \[  \|A_R^T A_S \frac{x}{\|x\|_2}\|_2 = \max_{x' : \|x'\|_2=1} \langle x' ,A_R^T A_S \frac{x}{\|x\|_2} \rangle.
    \]
    Using $\langle x' ,A_R^T A_S \frac{x}{\|x\|_2} \rangle =\langle A_Rx' , A_S \frac{x}{\|x\|_2} \rangle \leq \frac{1}{2} \|A_Rx' + A_S \frac{x}{\|x\|_2}\|_2^2 $, the fact that $\suppsec \cap \supp = \text{\O}$, and that $A$ satisfies the $(\abs{R}+\abs{S})$-RIP defined in \eqref{eq:RIP}, we obtain for all $x'\in\RR^{\abs{R}}$ such that $\|x'\|_2=1$
    \[\langle x' ,A_R^T A_S \frac{x}{\|x\|_2} \rangle\leq \frac{1}{2} (1+\delta_{\abs{R}+\abs{S}}) \left(\|x'\|_2^2 + \|\frac{x}{\|x\|_2} \|_2^2\right) = (1+\delta_{\abs{R}+\abs{S}}) . \] 
    This concludes the proof of \eqref{eq:RIPsetset}.
    \item[Fact 3:]  Let us assume that $A$ satisfies the $\abs{\supp}$-RIP. Taking inspiration of Proposition 3.1 of \cite{needell2009cosamp}, for any singular value $\vp \in \field$ of $\matsupp$, and the corresponding right singular vector $\Vp \in \sparsspace$, we have $\normd{\matsupp\Vp} = \vp$. 
    Using (\ref{eq:RIP}), $1 - \RIPsupp \leq \vp^2 \leq 1 + \RIPsupp$. 
    All singular values of $\matsupp$ and $\matasupp$ lie between $\sqrt{1-\RIPsupp}$ and $\sqrt{1+\RIPsupp}$.
  
    As a consequence, for any $\siga \in \obsspace$, we have
    \begin{equation}
        \label{eq:RIPtranspose}
        \normd{\matasupp\siga} \leq \sqrt{1+\RIPsupp} ~ \normd{\siga}.
    \end{equation}
    
    \item[Fact 4:]
    Let us assume that $A$ satisfies the $\abs{\supp}$-RIP. 
    Using the same reasoning, we find that the eigenvalues of $\matasupp\matsupp$ lie between $1-\RIPsupp$ and $1+\RIPsupp$.
    This implies that $\matasupp\matsupp$ is non-singular and that the eigenvalues of $(\matasupp\matsupp)^{-1}$ lie between $\frac{1}{1 + \RIPsupp}$ and $\frac{1}{1 - \RIPsupp}$. 
    Then $\matsupp$ is full column rank and for any $\sig \in \sparsspace$
    \begin{equation}
        \label{eq:RIPinverse}
        \normd{(\matasupp\matsupp)^{-1}\sig} \leq \frac{1}{1-\RIPsupp}\normd{\sig}.
    \end{equation}
    
    \item[Fact 5:]
    Let us assume that $A$ satisfies the $\abs{\supp}$-RIP.
    By using one last time the same reasoning, we find that the eigenvalues of $\matasupp\matsupp - \id_{\abs{\supp}}$ lie between $-\RIPsupp$ and $\RIPsupp$. Finally, for any $\sig \in \sparsspace$,
    \begin{equation}
        \label{eq:RIPid}
        \normd{(\matasupp\matsupp - \id_{\abs{\supp}})\sig} \leq \RIPsupp\normd{\sig}.
    \end{equation}
\end{description}

\subsubsection{Preliminaries}
\label{subsec:lemmas}
In this section, the facts from \cref{subsec:RIC} are used to bound from above the error $\normd{\truncv{(\sigst - \sol)}{\suppt}}$, where $\truncv{(.)}{\suppt}$ is the restriction of the vector to the support $\suppt$ and $\suppt$ is defined in Algorithm \ref{alg:SEA}, line \ref{line:SEA:update_S}. 
This bound will lead to an upper bound on $\normd{\gradnoise^\iter}$.
Throughout the section, we assume $\mat$ satisfies the $(2\spars+1)$-RIP.
Figure \ref{fig:sea-supports} might help visualize the different sets of indices considered in the proof.
\begin{lemma}
\label{lem:approximation_error_RIP}
    If $\mat$ satisfies the $(2\spars+1)$-RIP, for any $\iter \in \NN$, 
    \begin{equation*}
        \normd{\truncv{(\sigst - \sol)}{\suppt}}
        \leq
        \frac{\RIPdk}{1-\RIPk}\frac{\normd{\usolt}}{\gradstep} + \frac{\sqrt{1+\RIPk}}{1-\RIPk}\normd{\noise}.
    \end{equation*}
\end{lemma}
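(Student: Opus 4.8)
The plan is to start from the explicit form of the sparse iterate and then invoke the RIP facts of \cref{subsec:RIC}. Since $\mat$ satisfies the $(2k+1)$-RIP, Fact~1 gives that it satisfies the $k$-RIP, so $\delta_k<1$ and, by Fact~4, $A_{S^t}^T A_{S^t}$ is invertible; hence Line~\ref{line:SEA:update_x} of \cref{alg:SEA} reads $\sigst_{\suppt} = (A_{S^t}^T A_{S^t})^{-1}A_{S^t}^T y$ and $\sigst_i = 0$ for $i\in\overline{\suppt}$. Writing $y = \mat\sol + \noise = A_{S^t}x^*_{S^t} + A_{\overline{S^t}}x^*_{\overline{S^t}} + \noise$ and using $(A_{S^t}^T A_{S^t})^{-1}A_{S^t}^T A_{S^t}=I_k$, I obtain the decomposition
\[
\truncv{(\sigst-\sol)}{\suppt} = (A_{S^t}^T A_{S^t})^{-1}A_{S^t}^T A_{\overline{S^t}}x^*_{\overline{S^t}} + (A_{S^t}^T A_{S^t})^{-1}A_{S^t}^T \noise .
\]

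Next I would bound the two summands separately. For the first, apply the inverse bound \eqref{eq:RIPinverse} with $|S^t|=k$ to extract the factor $\tfrac{1}{1-\delta_k}$, and note that since $\sol$ is supported on $\suppsol$ we have $A_{\overline{S^t}}x^*_{\overline{S^t}} = A_{S^*\setminus S^t}x^*_{S^*\setminus S^t}$, where $S^t$ and $S^*\setminus S^t$ are disjoint with $|S^t|+|S^*\setminus S^t|\le k+k = 2k \le 2k+1$; inequality \eqref{eq:RIPsetset} together with Fact~1 then yields $\normd{A_{S^t}^T A_{S^*\setminus S^t}x^*_{S^*\setminus S^t}} \le \delta_{2k}\,\normd{x^*_{S^*\setminus S^t}}$. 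For the second summand, \eqref{eq:RIPinverse} again contributes $\tfrac{1}{1-\delta_k}$ and \eqref{eq:RIPtranspose} (with $|S^t|=k$) gives $\normd{A_{S^t}^T\noise}\le\sqrt{1+\delta_k}\,\normd{\noise}$. Adding the two bounds via the triangle inequality produces the right-hand side of the lemma, once one identifies $\normd{x^*_{S^*\setminus S^t}}$ with $\normd{\usolt}/\gradstep$: by the definition \eqref{eq:usol}, $\usolt$ equals $-\eta x^*_i$ on $S^*\cap\overline{S^t}=S^*\setminus S^t$ and $0$ elsewhere, while $\sol$ vanishes off $S^*$, so $\normd{x^*_{\overline{S^t}}} = \normd{x^*_{S^*\setminus S^t}} = \normd{\usolt}/\gradstep$.

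I do not expect a genuine obstacle here: the argument is routine linear algebra once the decomposition above is written down. The only points requiring care are the bookkeeping on support cardinalities --- checking that $|S^t|=k$ exactly and $|S^*\setminus S^t|\le k$, hence that every Restricted Isometry Constant invoked ($\delta_k$ and $\delta_{2k}$) has order at most $2k+1$ so that the $(2k+1)$-RIP hypothesis legitimately applies through Fact~1 --- and the observation that $x^*_{\overline{S^t}}$ is $k$-sparse, which is exactly what makes \eqref{eq:RIPsetset} applicable and what links the resulting bound back to $\normd{\usolt}$. This last identity is also the bridge that will let the subsequent lemmas translate this estimate into the bound on $\normd{\gradnoise^\iter}$ needed to feed \cref{thm:recovery}.
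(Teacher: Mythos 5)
Your proposal is correct and follows essentially the same route as the paper's proof: the same decomposition of $y$ over $S^t$ and $S^*\setminus S^t$, the same use of $\matdagsuppt = (\matasuppt\matsuppt)^{-1}\matasuppt$ with Facts 2--4, and the same identification $\normd{\truncv{\sol}{S^*\setminus S^t}} = \normd{\usolt}/\gradstep$. The only cosmetic difference is that the paper phrases the cancellation of the on-support part via $\matdagsuppt \rest{\sol}{\suppcap} = \matdagsuppt\rest{\sol}{\suppt} = \truncv{\sol}{\suppt}$ rather than via $(\matasuppt\matsuppt)^{-1}\matasuppt\matsuppt = I_k$, which is the same fact.
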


\begin{proof}
    For any $\iter \in \NN$
    , using the definition of $\sigst$ in \cref{alg:SEA} and (\ref{eq:usol}), we find 
    \begin{align}
        \truncv{\sigst}{\suppt} &= \matdagsuppt\obs \nonumber\\
        &= \matdagsuppt(\rest{\sol}{\suppsol} + \noise) \nonumber\\
        &= \matdagsuppt \rest{\sol}{\suppcap} - \frac{1}{\gradstep} \matdagsuppt \rest{\usolt}{\suppx} + \matdagsuppt \noise \label{eq:sigst_before_RIP}.
    \end{align}
    
    We also have
    \begin{align}
        \matdagsuppt \rest{\sol}{\suppcap} 
        &= \matdagsuppt 
        \begin{bmatrix}
            \mat_{\suppcap} & \mat_{\suppt \setminus \suppsol}
        \end{bmatrix}
        \begin{bmatrix}
            \truncv{\sol}{\suppcap}\\
            0
        \end{bmatrix}\nonumber\\
        &= \matdagsuppt \rest{\sol}{\suppt}. \label{eq:a_dag_min_error_support}
    \end{align}
    
    Since $\RIPdkp < 1$, (\ref{eq:RIPkk}) implies that $\RIPk \leq \RIPdkp < 1$ and the smallest singular value of $\matsuppt$ is larger than $\sqrt{1 - \RIPk} \geq \sqrt{1 - \RIPdkp} >0$. Therefore $\matsuppt$ is full column rank and
    \begin{equation}
    \label{eq:matadag}
        \matdagsuppt = (\matasuppt\matsuppt)^{-1}\matasuppt.
    \end{equation}
    
    Combining (\ref{eq:sigst_before_RIP}), (\ref{eq:a_dag_min_error_support}) and (\ref{eq:matadag}), we obtain
    \begin{equation*}
        \truncv{\sigst}{\suppt} = \truncv{\sol}{\suppt} - \frac{1}{\gradstep} \matdagsuppt \rest{\usolt}{\suppx} + \matdagsuppt\noise.
    \end{equation*}
    
    Using (\ref{eq:matadag}), we find
    \begin{align*}
        \normd{\truncv{(\sigst - \sol)}{\suppt}} 
        &= \normd{\frac{1}{\gradstep} \matdagsuppt \rest{\usolt}{\suppx} - \matdagsuppt\noise}\\
        &\leq \frac{1}{\gradstep} \normd{(\matasuppt\matsuppt)^{-1}\matasuppt\rest{\usolt}{\suppx}} + \normd{(\matasuppt\matsuppt)^{-1}\matasuppt\noise}.
    \end{align*}
    
    Finally, using (\ref{eq:RIPinverse}), then (\ref{eq:RIPsetset}), (\ref{eq:RIPkk}), (\ref{eq:RIPtranspose}) and (\ref{eq:usol}), we finish the proof
    \begin{align*}
        \normd{\truncv{(\sigst - \sol)}{\suppt}}
        &\leq \frac{1}{1 - \RIPk}\left(\frac{1}{\gradstep} \normd{\matasuppt\rest{\usolt}{\suppx}} + \normd{\matasuppt\noise}\right) \\
        & \leq \frac{1}{1 - \RIPk}\left(\frac{\RIPdk}{\gradstep} \normd{\truncv{\usolt}{\suppx}} + \sqrt{1+\RIPk}\normd{\noise}\right)  \\
        &= \frac{\RIPdk}{1-\RIPk}\frac{\normd{\usolt}}{\gradstep} + \frac{\sqrt{1+\RIPk}}{1-\RIPk}\normd{\noise}. 
    \end{align*}
\end{proof}

We have the following upper bound on $\normd{\gradnoise^\iter}$. This bound is given in \cref{thm:recovery_RIP:app}.

\begin{lemma}[Bound of $\gradnoise^\iter$ - RIP case]
    \label{lem:bound_gradnoise_RIP}
    If $\mat$ satisfies the $(2\spars+1)$-RIP, for any $\iter \in \NN$,
    \begin{equation*}
        \|\frac{u^t}{\eta} - A^T(Ax^t-y) \|_\infty
        =
        \frac{1}{\gradstep} \norminf{\gradnoise^\iter}
        \leq
        \alphaRIP\normd{\sol} + \gammaRIP\normd{\noise},
    \end{equation*}
    
    where $\alphaRIP$ and $\gammaRIP$ are defined in~\eqref{eq:alpha_gammaRIP}.
\end{lemma}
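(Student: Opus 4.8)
The quantity to control is $b^t = u^t - \eta A^T(Ax^t - y)$ (see \eqref{eq:gradnoise}), since $\|\tfrac{u^t}{\eta} - A^T(Ax^t-y)\|_\infty = \tfrac1\eta\|b^t\|_\infty$. By \eqref{romnrevjnj} we already know $b^t_i = 0$ for every $i \in S^t$, so the plan is to bound $|b^t_i|$ only for $i \notin S^t$. This I would do by (i) rewriting $Ax^t - y$ through the support structure, (ii) splitting $\overline{S^t}$ into the cases $i \in S^*\setminus S^t$ (where $u^t_i \neq 0$) and $i \notin S^*\cup S^t$ (where $u^t_i = 0$) and bounding the relevant inner products with the RIP estimates of \cref{subsec:RIC}, and (iii) closing the loop with \cref{lem:approximation_error_RIP} together with the elementary bound $\|u^t\|_2 \le \eta\|x^*\|_2$.

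For (i), using $y = Ax^* + e$ and that $x^t, x^*$ are supported on $S^t, S^*$ respectively, I would write $Ax^t - y = A_{S^t}(x^t-x^*)|_{S^t} - A_{S^*\setminus S^t}\,x^*_{S^*\setminus S^t} - e$, where $(x^t-x^*)|_{S^t}$ is the restriction to $S^t$ (this is $\suppx = S^*\setminus S^t$ in the paper's notation). Then for $i \notin S^t$,
\[
\big(A^T(Ax^t-y)\big)_i = A_i^T A_{S^t}(x^t-x^*)|_{S^t} - A_i^T A_{S^*\setminus S^t}x^*_{S^*\setminus S^t} - A_i^T e .
\]
When $i \notin S^*\cup S^t$ we have $u^t_i = 0$, so $\tfrac1\eta|b^t_i|$ is the modulus of the right-hand side; each of its first two terms is of the form $|A_i^T A_S z|$ with $\{i\}\cap S = \varnothing$ and $|\{i\}|+|S|\le 2k+1$, hence $\le \delta_{2k+1}\|z\|_2$ by \eqref{eq:RIPsetset}, and $|A_i^T e|\le\|A_i\|_2\|e\|_2 = \|e\|_2$ since the columns of $A$ are normalized. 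When $i \in S^*\setminus S^t$ we have $u^t_i = -\eta x^*_i$, and I would peel $i$ off $S^*\setminus S^t$ to write $A_i^T A_{S^*\setminus S^t}x^*_{S^*\setminus S^t} = \|A_i\|_2^2\, x^*_i + A_i^T A_{(S^*\setminus S^t)\setminus\{i\}}x^*_{(S^*\setminus S^t)\setminus\{i\}} = x^*_i + A_i^T A_{(S^*\setminus S^t)\setminus\{i\}}x^*_{(S^*\setminus S^t)\setminus\{i\}}$; the term $x^*_i$ cancels the $x^*_i$ coming from $u^t_i$, leaving inner products of the same type as before (now with index sets of size $\le k$). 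Either way, $\tfrac1\eta|b^t_i| \le \delta_{2k+1}\big(\|(x^t-x^*)|_{S^t}\|_2 + \|x^*\|_2\big) + \|e\|_2$.

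For (iii), \cref{lem:approximation_error_RIP} gives $\|(x^t-x^*)|_{S^t}\|_2 \le \tfrac{\delta_{2k+1}}{1-\delta_k}\tfrac{\|u^t\|_2}{\eta} + \tfrac{\sqrt{1+\delta_k}}{1-\delta_k}\|e\|_2$ (here $\delta_k\le\delta_{2k+1}<1$ by \eqref{eq:RIPkk}), and since $u^t$ vanishes off $S^*\cap\overline{S^t}$, $\tfrac{\|u^t\|_2}{\eta} = \|x^*_{S^*\cap\overline{S^t}}\|_2 \le \|x^*\|_2$. Substituting, taking the maximum over $i\notin S^t$, and collecting the coefficients of $\|x^*\|_2$ and of $\|e\|_2$ reproduces precisely $\alpha_k^{RIP}\|x^*\|_2 + \gamma_k^{RIP}\|e\|_2$ with $\alpha_k^{RIP},\gamma_k^{RIP}$ from \eqref{eq:alpha_gammaRIP}, which is the claim of \cref{lem:bound_gradnoise_RIP} and simultaneously the bound \eqref{eruneqruvyn} of \cref{thm:recovery_RIP:app}. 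The only step that needs care is the case $i\in S^*\setminus S^t$: one must isolate the diagonal contribution $\|A_i\|_2^2 x^*_i = x^*_i$ inside $A_i^T A_{S^*\setminus S^t}x^*_{S^*\setminus S^t}$ and cancel it against $u^t_i = -\eta x^*_i$, for otherwise the estimate would retain an uncontrolled $|x^*_i|$; everything else reduces to routine applications of the RIP facts of \cref{subsec:RIC} and of \cref{lem:approximation_error_RIP}.
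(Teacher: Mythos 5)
Your proposal is correct and follows essentially the same route as the paper's proof: the same three-way case split on $i$ (with $b^t_i=0$ on $S^t$ via \eqref{romnrevjnj}), the same crucial cancellation of the diagonal term $\|\mat_i\|_2^2 x^*_i$ against $u^t_i=-\eta x^*_i$ when $i\in S^*\setminus S^t$, the same use of \eqref{eq:RIPsetset} for the off-diagonal inner products, and the same final appeal to \cref{lem:approximation_error_RIP} together with $\|u^t\|_2/\eta\leq\|x^*\|_2$. The only (cosmetic) difference is that you split $\mat(x^t-x^*)$ into its $S^t$ and $S^*\setminus S^t$ blocks at the outset and apply the RIP bound to each, whereas the paper bounds the single term over $S^t\cup S^*$ and only splits $\|x^t-x^*\|_2$ at the end; both yield exactly the constants $\alphaRIP$ and $\gammaRIP$ of \eqref{eq:alpha_gammaRIP}.
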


\begin{proof}
Let $\iter \in \NN$ and $\ii \in \intintn$, reminding the definition of $\gradnoise^\iter$ in (\ref{eq:gradnoise}), we have
\begin{align}
\label{eq:gradnoise_eq}
    \abs{\gradnoise_\ii^\iter} &= \abs{\usolt_\ii - \gradstep\matia(\mat\sigst - \obs)} \nonumber\\
    &= \abs{\usolt_\ii - \gradstep\matia\mat(\sigst - \sol) + \gradstep\matia\noise}\\
    &= \abs{\usolt_\ii - \gradstep\matia\rest{(\sigst - \sol)}{\suppcup} + \gradstep\matia\noise}.\nonumber
\end{align}

We distinguish three cases: $\ii  \in \suppt$, $\ii \in \suppz$ and $\ii \in \suppx$. 
We prove that in the three cases 
\begin{equation}
\label{eq:gradnoise_eq_int}
    \abs{\gradnoise_\ii^\iter}
    \leq
    \gradstep \left(\RIPdkp\normd{\sigst - \sol} + \normd{\noise}\right).
\end{equation}

\underline{$1^{st} \, case$}: If $\ii \in \suppt$, using \eqref{romnrevjnj}, $\gradnoise_\ii^\iter = 0$ and \eqref{eq:gradnoise_eq_int} holds.

\underline{$2^{nd} \, case$}: If $\ii \in \suppz$, using the definition of $\usolt$ in (\ref{eq:usol}), (\ref{eq:RIPsetset}), (\ref{eq:RIPkk}) and the fact that $\normd{\mat_\ii} = 1$ we obtain
\begin{align*}
    \abs{\gradnoise_\ii^\iter}
    &= \abs{-\gradstep\matia\rest{(\sigst - \sol)}{\suppcup} + \gradstep\matia\noise}  \\
    &\leq \gradstep\left(\normd{\matia\rest{(\sigst - \sol)}{\suppcup}} + \normd{\matia\noise}\right)  \\
    &\leq \gradstep\left(\RIPdkp\normd{\truncv{(\sigst - \sol)}{\suppcup}} + \normd{\noise}\right)\\
    &=
    \gradstep \left(\RIPdkp\normd{\sigst - \sol} + \normd{\noise}\right)
\end{align*}

\underline{$3^{rd} \, case$}: If $\ii \in \suppx$, reminding that $\iib$ is the complement of $\{\ii\}\subseteq\intintn$, and since $\matia \mat_{\{\ii\}}= \|\mat_\ii\|_2 = 1$ and $x_i^t=0$, \eqref{eq:gradnoise_eq} becomes
\begin{align*}
    \abs{\gradnoise_\ii^\iter}
    &= \abs{-\gradstep\soli - \gradstep\matia\mat(\sigst - \sol) + \gradstep\matia\noise}\\
    &= \gradstep\abs{-\soli- \matia \rest{(\sigst - \sol)}{\{i\}} -   \matia\rest{(\sigst - \sol)}{\iib} + \matia\noise} \\
    &= \gradstep\abs{-\soli- (\sigst_i - \soli)-   \matia\rest{(\sigst - \sol)}{\iib} + \matia\noise} \\
    &= \gradstep\abs{-\matia\rest{(\sigst - \sol)}{\iib} + \matia\noise} \\
    &\leq \gradstep \left(\abs{\matia\rest{(\sigst - \sol)}{\iib}} + \abs{\matia\noise}\right).
\end{align*}
Using (\ref{eq:RIPsetset}), (\ref{eq:RIPkk}) and $\normd{\mat_\ii} = 1$, we obtain
\begin{align*}
    \abs{\gradnoise_\ii^\iter}
    &\leq \gradstep \left(\RIPdk\normd{\sigst - \sol} + \normd{\noise}\right)\\
    &\leq \gradstep \left(\RIPdkp\normd{\sigst - \sol} + \normd{\noise}\right).
\end{align*}
Regrouping the three cases, we conclude that for all $\ii \in \intintn$, (\ref{eq:gradnoise_eq_int}) holds.
We now finish the proof.

Using (\ref{eq:usol}) followed by \cref{lem:approximation_error_RIP}, we find
\begin{align*}
    \abs{\gradnoise_\ii^\iter}
    &\leq \gradstep\left(\RIPdkp\left(\normd{\truncv{(\sigst - \sol)}{\suppt}} + \frac{\normd{\usolt}}{\gradstep}\right) + \normd{\noise}\right)  \\
    &\leq \gradstep \left( \alphaRIPf\frac{\normd{\usolt}}{\gradstep} + \left(\gammaRIPf\right)\normd{\noise} \right)\\
    &\leq \gradstep\left(\alphaRIP\normd{\sol} + \gammaRIP\normd{\noise}\right), 
\end{align*}
where the last inequality holds because $\frac{\normd{\usolt}}{\gradstep} \leq \normd{\sol}$.
\end{proof}

\subsubsection{End of the Proof of \texorpdfstring{\cref{thm:recovery_RIP:app}}{Theorem~\ref{thm:recovery_RIP:app}}}
\label{subsec:recovery_RIP}
We now resume to the proof of \cref{thm:recovery_RIP:app} and assume $\mat$ satisfies the $(2\spars+1)$-RIP and $\sol$ satisfies (\ref{eq:HRIP}). We remind the definitions of $\itermax$ in (\ref{eq:Tmax}) and $\itermaxRIP$ in (\ref{eq:TRIP}).

Using (\ref{eq:gradnoisemax}) and \cref{lem:bound_gradnoise_RIP}, we have 
\begin{equation}
\label{eq:gradnoisemax_RCRIP}
    \gradnoisemax = \gradnoisemaxf \leq \gradstep\left(\alphaRIP\normd{\sol} + \gammaRIP\normd{\noise}\right).
\end{equation}

Combined with (\ref{eq:HRIP}), that is $\gammaRIP \norm{\noise}_2 < \frac{\min_{i\in S^*} |x_i^*|}{2 k} - \alphaRIP \norm{\sol}_2$, this implies that  
\begin{equation*}
    \gradnoisemax 
    < \frac{\gradstep}{2k} \min_{i\in \suppsol} |x_i^*|.
\end{equation*}

Therefore (\ref{eq:Hr}) holds and \cref{thm:recovery} implies that there exists $t_s\leq \itermax$ such that $S^* \subseteq S^{t_s}$, with
\[ \itermax = \frac{2 k \|\explo^0\|_{\infty} + (k+1)\eta \min_{i\in S^*} |x_i^*| }{\eta \min_{i\in S^*} |x_i^*| - 2 k \gradnoisemax }.
\]

Using \eqref{eq:gradnoisemax_RCRIP}, we obtain
\[\itermax \leq \frac{2 k \|\explo^0\|_{\infty} + (k+1)\eta \min_{i\in S^*} |x_i^*| }{\eta \min_{i\in S^*} |x_i^*| - 2k \gradstep\left(\alphaRIP\normd{\sol} + \gammaRIP\normd{\noise}\right)} = T_{RIP}.
\]


We still need to prove that, when $ \min_{i\in S^*} |x^*_i| > \frac{2}{\sqrt{1-\delta_{2k}}} \|e\|_2 $, $\iterb$ satisfies $S^*\subseteq S^\iterb$, as well as the last upper-bound of \cref{thm:recovery_RIP:app} .

Assume by contradiction that 
\begin{equation}\label{ronetbuqerv}
    \min_{i\in S^*} |x^*_i| > \frac{2}{\sqrt{1-\delta_{2k}}} \|e\|_2
\end{equation} 
holds but $S^*\not\subset S^\iterb$. The construction of $\iterb$, in line \ref{line:SEA:best} of \cref{alg:SEA}, and the existence $t_s$ such that $S^*\subseteq S^{\iter_s}$ guarantee that
\[\|A x^\iterb - y \| \leq \|A x^{\iter_s} - y \| \leq \|A x^* - y \| = \|e\|_2.
\]
Therefore, using the left inequality in \eqref{eq:RIP}, we obtain
\begin{eqnarray*}
\sqrt{1-\delta_{2k}}\|x^\iterb - x^*\|_2 &\leq & \|A(x^\iterb - x^*)\|_2 \\
& \leq & \|A x^\iterb - y \|_2+\|A x^* - y \|_2\\
& \leq & 2 \|e\|_2.
\end{eqnarray*}
On the other hand, since we assumed $S^*\not\subset S^\iterb$ we have
\[\|x^\iterb - x^*\|_2 \geq \min_{i\in S^*} |x^*_i|.
\]
We conclude that $\min_{i\in S^*} |x^*_i| \leq \frac{2}{\sqrt{1-\delta_{2k}}} \|e\|_2$ which contradicts \eqref{ronetbuqerv}.

As a conclusion, when $ \min_{i\in S^*} |x^*_i| > \frac{2}{\sqrt{1-\delta_{2k}}} \|e\|_2 $, we have $S^*\subseteq S^\iterb$.

In this case, since the support of $x^\iterb - x^*$ is of size smaller than $k$, we can redo the above calculation and obtain
\[\sqrt{1-\delta_{k}}\|x^\iterb - x^*\|_2 \leq \|A(x^\iterb - x^*)\|_2 \leq 2 \|e\|_2.
\]
This leads to the last inequality of \cref{thm:recovery_RIP:app} and concludes the proof.

\subsection{Proof of \texorpdfstring{\cref{cor:recovery_SRIP}}{Corollary~\ref{cor:recovery_SRIP}}}
\label{app:proof_SRIP}

We assume that $\sol$ satisfies (\ref{eq:HSRIP}) and that $\normd{\noise} = 0$. Let us first prove that $\sol$ satisfies (\ref{eq:HRIP}). Using (\ref{eq:HSRIP}) we have
\begin{align*}
    0 < 1 - \RIPthresh 
    &\leq 1 - \RIPthreshf \nonumber\\
    &= \frac{2\spars}{\minsol} \left( \frac{\minsol}{2\spars} - \alphaRIP\normd{\sol}  \right).
\end{align*}

As a consequence, since $2\spars > 0$ and $\minsol > 0$,
\begin{equation}\label{ervniuon}
    0 < \frac{\minsol}{2\spars} - \alphaRIP\normd{\sol}.
\end{equation}

We conclude that $\sol$ satisfies the (\ref{eq:HRIP}) for $\mat$.

Applying \cref{thm:recovery_RIP} and since $\normd{e} = 0$ and $\explo^0 = \zerosig$, we know that there exists $t\leq \itermaxRIP =\frac{ (k+1) }{ 1 - 2k \alphaRIP\frac{\normd{\sol}}{\min_{i\in S^*} |x_i^*|}}$ such that $S^* \subseteq S^t$. It is not difficult to check that the function $f:\RR\longrightarrow \RR$ defined for all $u\in\RR$ by $f(u) = \frac{\spars+1}{1 - u}$ is increasing on $[0, 1)$. By applying $f$ to
\begin{equation*}
    0 \leq  \RIPthreshf \leq \RIPthresh < 1,
\end{equation*}
 we obtain 
 \begin{equation*}
    \itermaxRIP = f\left(\RIPthreshf\right) \leq f(\RIPthresh) =  \itermaxSRIP,
 \end{equation*}
where $\itermaxSRIP$ is defined in \cref{cor:recovery_SRIP}.
 
Therefore $t\leq \itermaxSRIP$ and we conclude that there exists $t\leq \itermaxSRIP$ such that $S^* \subseteq S^t$.
 
 The last statement of \cref{cor:recovery_SRIP} is a direct consequence of \cref{thm:recovery_RIP} and $x^*$ satisfies \eqref{min_hyp_RIP} with $\|e\|=0$.
 
 This concludes the proof of \cref{cor:recovery_SRIP}.

 \subsection{Comments on \texorpdfstring{\eqref{eq:HRIP} and \eqref{eq:HSRIP}}{recovery conditions}}\label{comments-RIP-sec}

The hypotheses of \cref{thm:recovery_RIP} are on the RIP of $A$ and there are two hypotheses on $x^*$: (\ref{eq:HRIP}) and \eqref{min_hyp_RIP}. The condition (\ref{eq:HRIP}) is described below \cref{cor:recovery_SRIP}. In \cref{cor:recovery_SRIP}, the condition becomes \eqref{eq:HSRIP}. We adapt in the section the analysis of the (\ref{eq:HRIP}) to condition \eqref{eq:HSRIP} and show that it is not vacuous under a similar constraint on $\alphaRIP$. 

If $\alphaRIP$ is too large, there does not exist any $\sol$ satisfying (\ref{eq:HSRIP}). It is for instance the case if $\alphaRIP\geq 0.5$. On the contrary, a sufficient condition of existence of vectors $\sol$ satisfying (\ref{eq:HSRIP}) is that the constant $\alphaRIP$ satisfies $2\spars^{\frac{3}{2}}\alphaRIP \leq \RIPthresh < 1$. In this case, when all non-zero entries of $\sol$ are equal, we have $\|\sol\|_2 = \sqrt{|S^*|} \minsol $ and 
$\RIPthreshf =  2k\alphaRIP \sqrt{|S^*|}  \leq  2\spars^{\frac{3}{2}}\alphaRIP 
 \leq \RIPthresh < 1$.
Summarizing, when $2\spars^{\frac{3}{2}}\alphaRIP  < 1$, there exist vectors $x^*$ satisfying \eqref{eq:HSRIP} and the condition of \cref{cor:recovery_SRIP} is not vacuous.

As a conclusion, when $\alphaRIP < \frac{1}{2} k^{-\frac{3}{2}}$, both \cref{thm:recovery_RIP} and \cref{cor:recovery_SRIP} can be applied for a non-empty set of vectors $x^*$. Moreover, we can prove that the interior of the sets of $\sol$ satisfying respectively \eqref{eq:HRIP} and \eqref{eq:HSRIP} are not empty. The two sets grow as $\alphaRIP$ decreases. When $\|e\|_2=0$, the two sets are conical.

\section{Additional Results for Phase Transition Diagram Experiment}
\label{app:dt}

We consider the same experiment as in \cref{dt-sec} but in the noiseless setting.
The analog of the curves of \cref{fig:hm} are in \cref{fig:hm_001}. Again, an artifact stemming from the discrete values of $(m,n,k)$ is responsible for the smooth and decreasing part observed on the left side of the phase transition curves, in a region where $k=1$.
Without noise, all algorithms exhibit a similar phase transition curve and maintain the same ranking as in the noisy setting. The conclusions that are drawn in the noiseless setting from \cref{fig:hm_001} are analog to those in \cref{dt-sec} in the noisy setting.

\begin{figure}[!htb]
    \centering
    \includegraphics[width=0.6\linewidth]{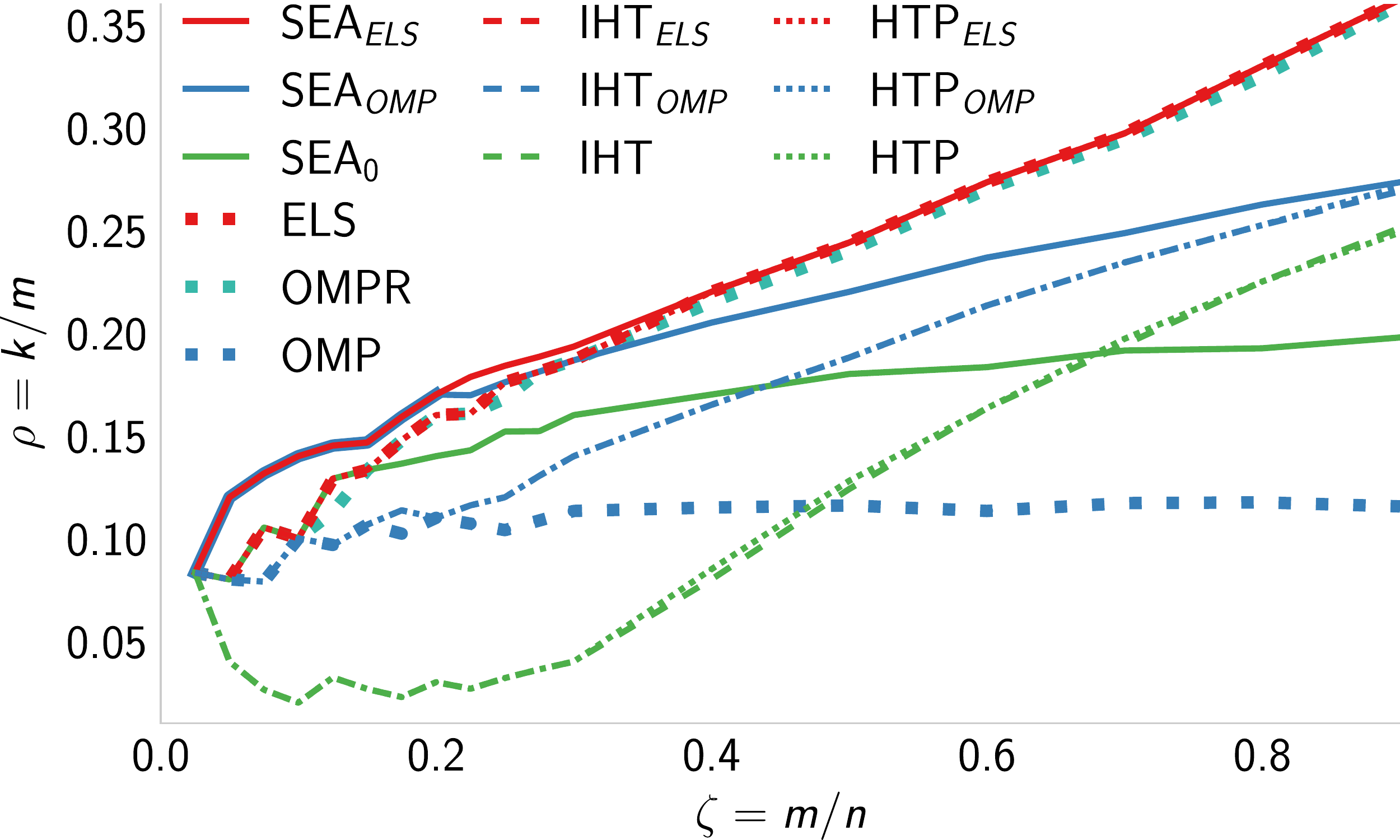}
    \caption{Phase transition diagram (noiseless setting). Problems below each curve are solved by the related algorithm with a success rate larger than $95\%$. $\zeta=m/n$ denotes the ratio between the number of rows and the number of columns in $A$ while $\rho=k/m$ denotes the ratio between the sparsity and the number of rows in $A$. Matrix $A$ have i.i.d. standard Gaussian entries and non-zero entries in $\sol$ are drawn uniformly in $[-2, -1]\cup[1, 2]$. $n=500$ is fixed and results are obtained from $1000$ runs.}
    \label{fig:hm_001}
\end{figure}

\section{Additional Results in Deconvolution}
\label{app:deconv}

To supplement \cref{deconv-sec}, we present additional results for the initial experimental setup. 
In \cref{app:dcv_prec}, we provide the results for the full signal of \cref{fig:dcv_prec}. 
In \cref{app:deconv-precise}, we display the loss along the iterative process for the experiment shown in \cref{fig:dcv_prec}. 
In \cref{app:deconv-n_support} (resp. \cref{app:deconv-global}), we show the average number of explored supports (resp. the average loss and Wasserstein distance) over the $r=200$ problems solved to construct \cref{fig:dcv_mass} as $k$ varies. 
We present results for the noiseless case when $\noise = \zeroobs$ in \cref{app-deconv-noiseless}.
Finally, we present the analog of \cref{fig:dcv_mass} for different configurations.
We depict the case where the sparsity provided to the algorithm is incorrect in \cref{app:deconv:wrong_k}.
The noise robustness is studied in \cref{app:deconv:noise_robustness} and the variant $\obs = \mat(\sol + \noise')$ of the initial problem $\obs = \mat\sol + \noise'$  is studied in \cref{app:deconv:noise_on_x}.
The impact of increasing the magnitude of $\frac{\normd{\sol}}{\min_{\ii\in\suppsol}\abs{\soli}}$, testing the importance of conditions \ref{eq:HRIP} and \ref{eq:HSRIP}, is shown in \cref{app:deconv:u_10}.
The impact of the variation of the step size for IHT and HTP is presented in \cref{app:deconv:step_size}.
Lastly, considerations on a random search are discussed in \cref{app:deconv:random}.

\subsection{Deconvolution: Examining the Specific Instance from \texorpdfstring{\cref{fig:dcv_prec}}{deconvolution experiment}}
\label{app:dcv_prec}

In \cref{fig:deconv:noisy_full}, we present the full signal corresponding to the cropped instance in \cref{fig:dcv_prec}. For clarity, \cref{fig:deconv:noisy_crop} displays the same crop as \cref{fig:dcv_prec}, presenting the results obtained with all studied algorithms.

In this representation, nearly all algorithms can identify isolated spikes. However, challenges arise when spikes are close, leading algorithms to struggle with precise localization. 
Notably, IHT and HTP exhibit false detections in the most energetic part of the signal (around positions 140, 180, 260, and 400), getting trapped in local minima. 
On this experiment (this is not the case in general), initializing SEA with ELS or OMP allows \SEAELS and \SEAOMP to find a better approximation of $\suppsol$ than \SEAZ. 
These two versions of SEA successfully recover the original signal, except for two spikes between positions 410 and 425.
In contrast, other algorithms fail due to the coherence of $A$ and the presence of additive noise.

\begin{figure}[!htb]
    \centering
    \includegraphics[width=\linewidth]{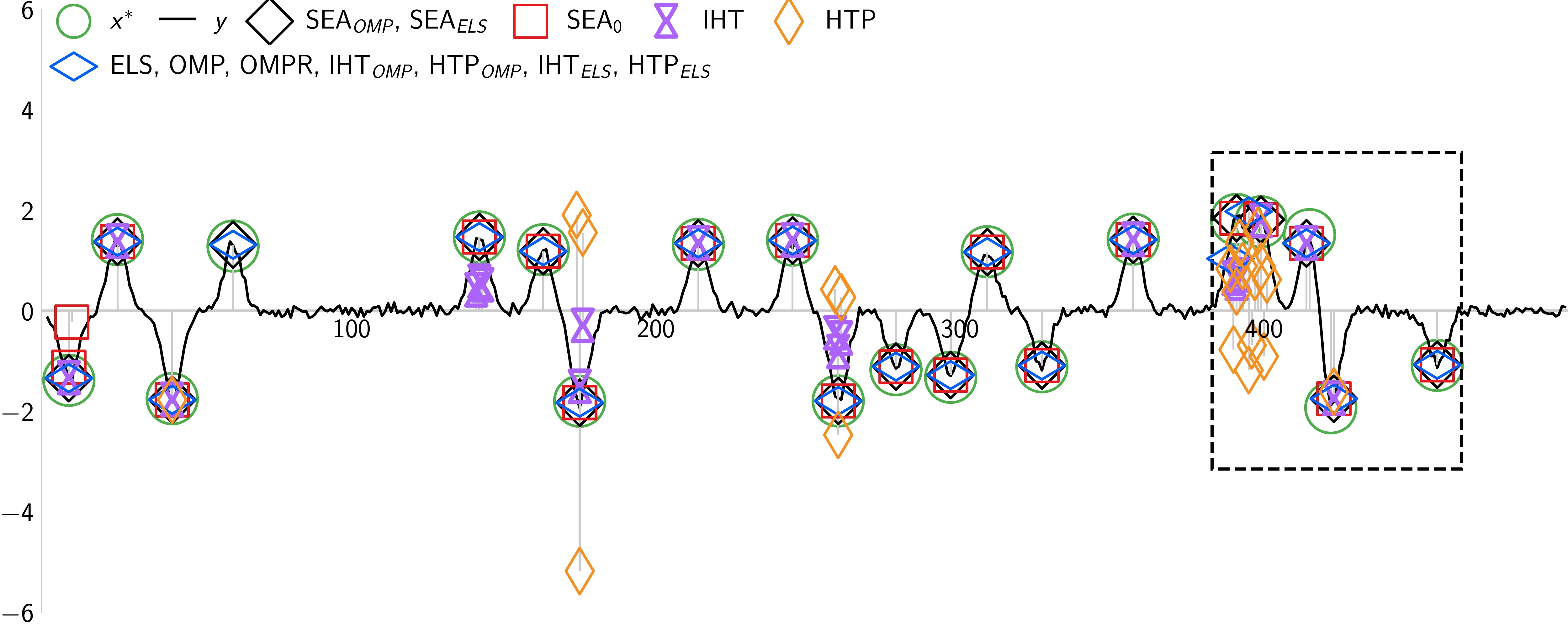}
    \caption{Full version of \cref{fig:dcv_prec}. Representation of an instance of $\sol$, $\obs$ and the solutions provided by the algorithms ($\spars = 20$, $n=500$).}\label{fig:deconv:noisy_full}
\end{figure}

\begin{figure}[!htb]
    \centering
    \includegraphics[width=\linewidth]{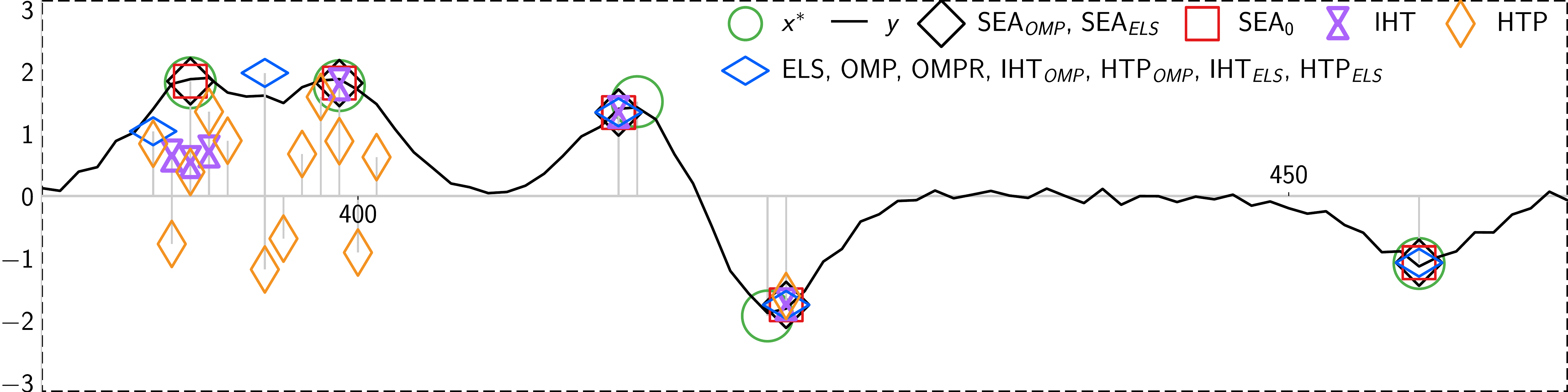}
    \caption{Crop from the dashed area in \cref{fig:deconv:noisy_full}, matching the location of \cref{fig:dcv_prec} with results from all analyzed algorithms. This region corresponds to the most densely populated area within the signal.}
    \label{fig:deconv:noisy_crop}
\end{figure}

\subsection{Deconvolution: Loss along the Iterative Process}
\label{app:deconv-precise}

\cref{fig:sea_0,fig:sea_0_n_sup} illustrate the behavior of HTP, IHT, ELS, \SEAZ, \SEAOMP and \SEAELS, for the same $20$-sparse vector $\sol$ used in \cref{fig:dcv_prec} (\cref{deconv-sec}) and \cref{fig:deconv:noisy_full,fig:deconv:noisy_crop} (\cref{app:dcv_prec}), throughout the iterative process.

\begin{figure}[!htb]
    \centering
    \includegraphics[width=\linewidth]{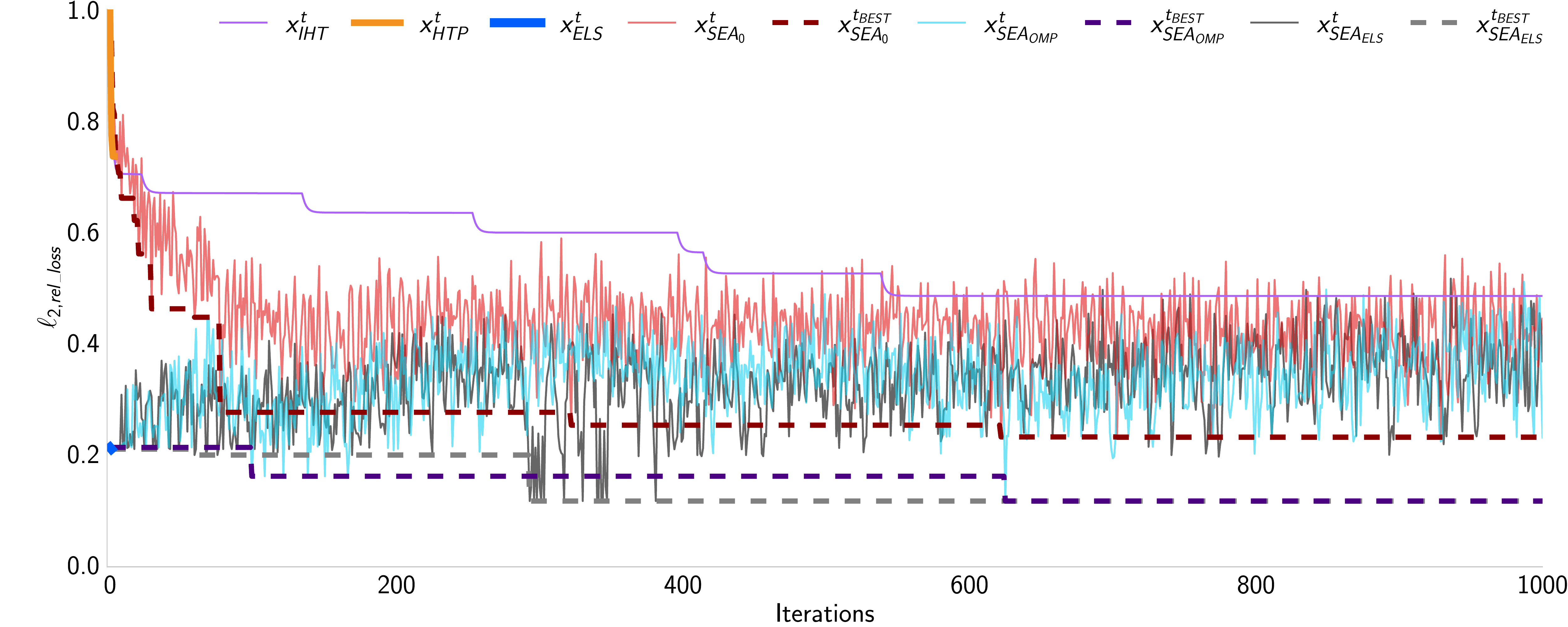}
    \caption{Representation of $\ell_{2, \text{rel}\_\text{loss}}(\sigst)$ (solid lines) and $\ell_{2, \text{rel}\_\text{loss}}(\sigs^{\iterb(t)})$ (dashed lines) for each iteration of several algorithms, for the experiment of \cref{fig:dcv_prec}.}
    \label{fig:sea_0}
\end{figure}

\begin{figure}[!htb]
    \centering
    \includegraphics[width=\linewidth]{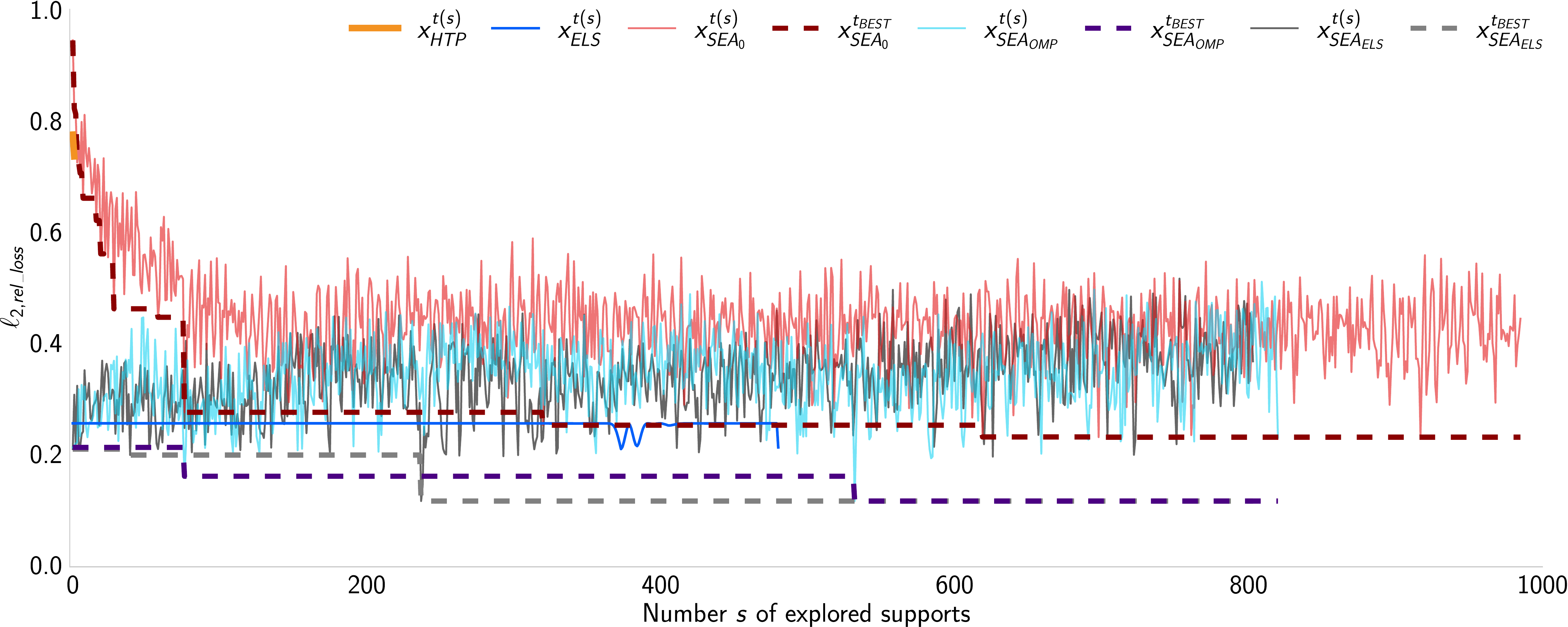}
    \caption{Representation of $\ell_{2, \text{rel}\_\text{loss}}(x^{t(s)})$ (solid lines) and $\ell_{2, \text{rel}\_\text{loss}}(x^{\iterb(\iter(s))})$ (dashed lines) for each new explored support of several algorithms, for the experiment of \cref{fig:dcv_prec}.}
    \label{fig:sea_0_n_sup}
\end{figure}

More precisely, in \cref{fig:sea_0} the solid curves represent $\ell_{2, \text{rel}\_\text{loss}}(\sigst)$ when $\iter$ varies in $\intint{0,1000}$, where $\ell_{2, \text{rel}\_\text{loss}}$ is defined by 
\begin{equation}
    \ell_{2, \text{rel}\_\text{loss}}(x) = \frac{\normd{\mat\sig - \obs}}{\normd{\obs}}.
    \label{eq:l2rel_loss}
\end{equation}
The dashed lines represent $\ell_{2, \text{rel}\_\text{loss}}(\sigs^{\iterb(t)})$ where $\iterb(t) = \underset{\itero \in \intint{0, \iter}}{\argmin} \normdd{\mat\sigs^\itero - \obs}$ and  $\iter$ varies in $\intint{0,1000}$.

Overall, no algorithm succeeds in reaching zero error. 
ELS performs only one iteration before stopping in a local minimum ($\ell_{2, \text{rel}\_\text{loss}}(x^{1}) \approx 0.2$). 
HTP completes a few iterations before stopping. 
IHT outperforms HTP by exploring a bit more. 
One can observe that, due to the exploratory nature of SEA, $\ell_{2, \text{rel}\_\text{loss}}(\sigs^{\iter})$ oscillates for both versions of SEA. 
This exploration enables \SEAELS to refine the ELS estimate within 300 iterations. Despite faster decay around the 100$^{th}$ iteration, \SEAOMP finally reaches \SEAELS after 620 iterations.

We observe that HTP and IHT exhibit poor performance due to the high coherence of $\mat$. 
As demonstrated in \cref{app:dcv_prec}, these algorithms initially make the mistake of erroneously assigning several neighboring atoms to represent the same large bump and fail to correct this error during the iterative process.

\cref{fig:sea_0_n_sup} illustrates the same iterative process as \cref{fig:sea_0}, focusing on support exploration rather than the iteration count for each algorithm.
Here, the solid curves represent $\ell_{2, \text{rel}\_\text{loss}}(x^{t(s)})$ when $s$ varies from $0$ to the number of explored supports, where $t(s)$ is the iteration associated to the $s^{th}$ explored support (without redundancy).
As in the previous figure, the dashed lines represent $\ell_{2, \text{rel}\_\text{loss}}(\sigs^{\iterb(t(s))})$ where $\iterb(t(s)) = \underset{\itero \in \intint{0, \iter(s)}}{\argmin} \normdd{\mat\sigs^\itero - \obs}$.
This is the loss associated to the best estimate found while exploring the $s^{th}$ supports.

We observe that HTP explores very little before stopping in a local minimum.
Despite performing only one iteration, ELS explores $500$ supports within the neighborhood of its OMP initialization for a slight improvement. 
Here, \SEAZ explores one new support at each iteration, while \SEAELS explores fewer, improving upon ELS by exploring less than 250 supports. Again, despite faster decay at the beginning, \SEAOMP finally reaches \SEAELS after exploring around $520$ supports.
This reveals how efficient each algorithm is at finding relevant supports.

\subsection{Deconvolution: Number of Explored Supports}\label{app:deconv-n_support}

As discussed in \cref{complexity-sec}, the overall cost of the algorithms depends on the number of explored supports.
In \cref{fig:n_supports:noisy}, we illustrate the number of explored supports in two different ways. 
First, in \cref{fig:n_supports:noisy} (left), we present the average number of explored supports for the entire problem resolution — representing the overall cost.
This includes supports explored before initialization. For instance, \SEAOMP includes both the supports explored by OMP for its initialization and those explored subsequently in the SEA procedure.
Then, in \cref{fig:n_supports:noisy} (right), we present the average number of explored supports that actually required computation after initialization.
These curves reveal the cost of the algorithms after initialization, where supports seen before the initialization (e.g., those of OMP for \SEAOMP) are not included as they do not incur additional computing time.

\begin{figure}[!htb]
    \centering
    \includegraphics[width=\linewidth]{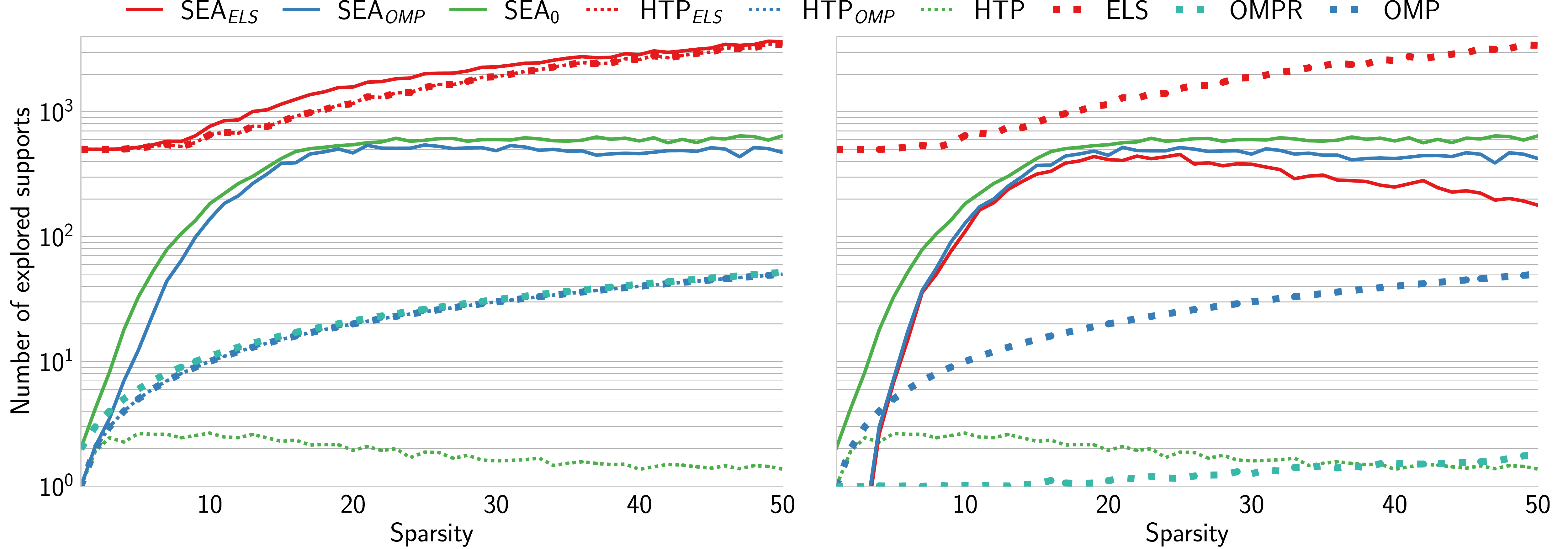}
    \caption{Left: Average number of explored supports by algorithms solving the 200 problems in \cref{deconv-sec}, across sparsity levels $\spars\in\intint{1,50}$.
    Right: Average number of explored supports from algorithms initialization in the same setup.}
    \label{fig:n_supports:noisy}
\end{figure}

Examining the overall cost of the algorithms on the left, we observe three types of exploration profiles.
Some algorithms, such as OMP, OMPR, and HTP, exhibit minimal exploration.
Notably, as $k$ increases, HTP explores fewer supports. 
On the other hand, algorithms like ELS explore extensively.
SEA falls in between, exploring a few supports for small $k$ and more as $k$ increases until reaching a threshold.
Despite exploring at least two times fewer supports than ELS, SEA's more efficient exploration allows it to achieve better results, as demonstrated in \cref{fig:dcv_mass}.

From this figure, we observe that adding SEA to ELS (\SEAELS) does not significantly alter the order of magnitude of the cost.
Turning our attention to the cost after initialization on the right, we do not observe \HTPELS and \HTPOMP because they do not explore after their initialization, as shown in \cref{fig:sea_0_n_sup} for \HTPELS. 
OMP and \SEAZ curves remain unchanged because they do not have any initializing algorithms.

All SEA variants exhibit a similar order of magnitude of explored supports. 
However, we conclude that the stronger the initialization (with 0 $<$ OMP $<$ ELS), the more challenging the exploration becomes due to the high coherence of $\mat$ and the local minima in which OMP and ELS end up.

\subsection{Deconvolution: Average Loss and Wasserstein Distance when \texorpdfstring{$k$}{k} Varies}
\label{app:deconv-global}

In this section, we complement the analysis of the experiment described in \cref{deconv-sec}, the results of which are already depicted in \cref{fig:dcv_mass}.

In \cref{fig:dcv_mass_y}, we present the average -- over the $r=200$ problems -- of the relative $\ell_2$ loss ($\ell_{2, \text{rel}\_\text{loss}}$), defined in \eqref{eq:l2rel_loss}, for the outputs of all algorithms and for
$\spars\in\intint{1,50}$. 
We observe that all versions of SEA achieve the lowest errors for $\spars < 20$. 
The largest gap between SEA and its competitors is observed for $\spars$ between $9$ and $13$. 
For clarity, the curves for HTP and IHT are visible for small $\spars$ only.
Due to the high coherence of $\mat$ and their method of selecting multiple elements of the support estimate at once, both IHT and HTP attempt to reconstruct single peaks with multiple atoms, leading to much larger errors than those of the competitors.

\begin{figure}[!htb]
\centering
\begin{minipage}{.48\linewidth}
  \centering
    \includegraphics[width=\linewidth]{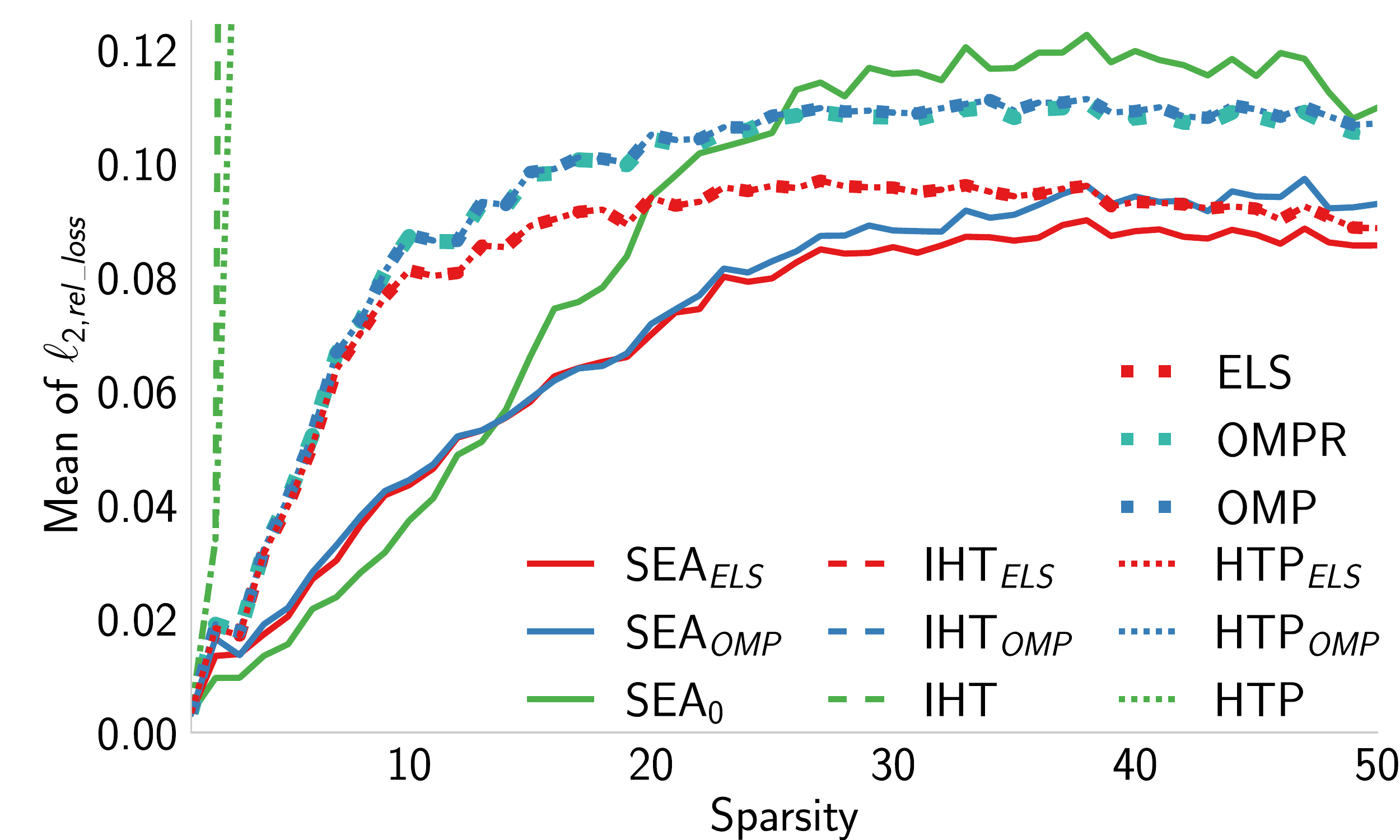}  
    \caption{Mean of $\ell_{2, \text{rel}\_\text{loss}}(x)$ -- defined in \eqref{eq:l2rel_loss} -- for the outputs of the algorithms on the $200$ problems of \cref{deconv-sec}, for each sparsity level $\spars\in\intint{1,50}$.}
    \label{fig:dcv_mass_y}
\end{minipage}%
\hspace{.02\linewidth}
\begin{minipage}{.48\linewidth}
  \centering
    \includegraphics[width=\linewidth]{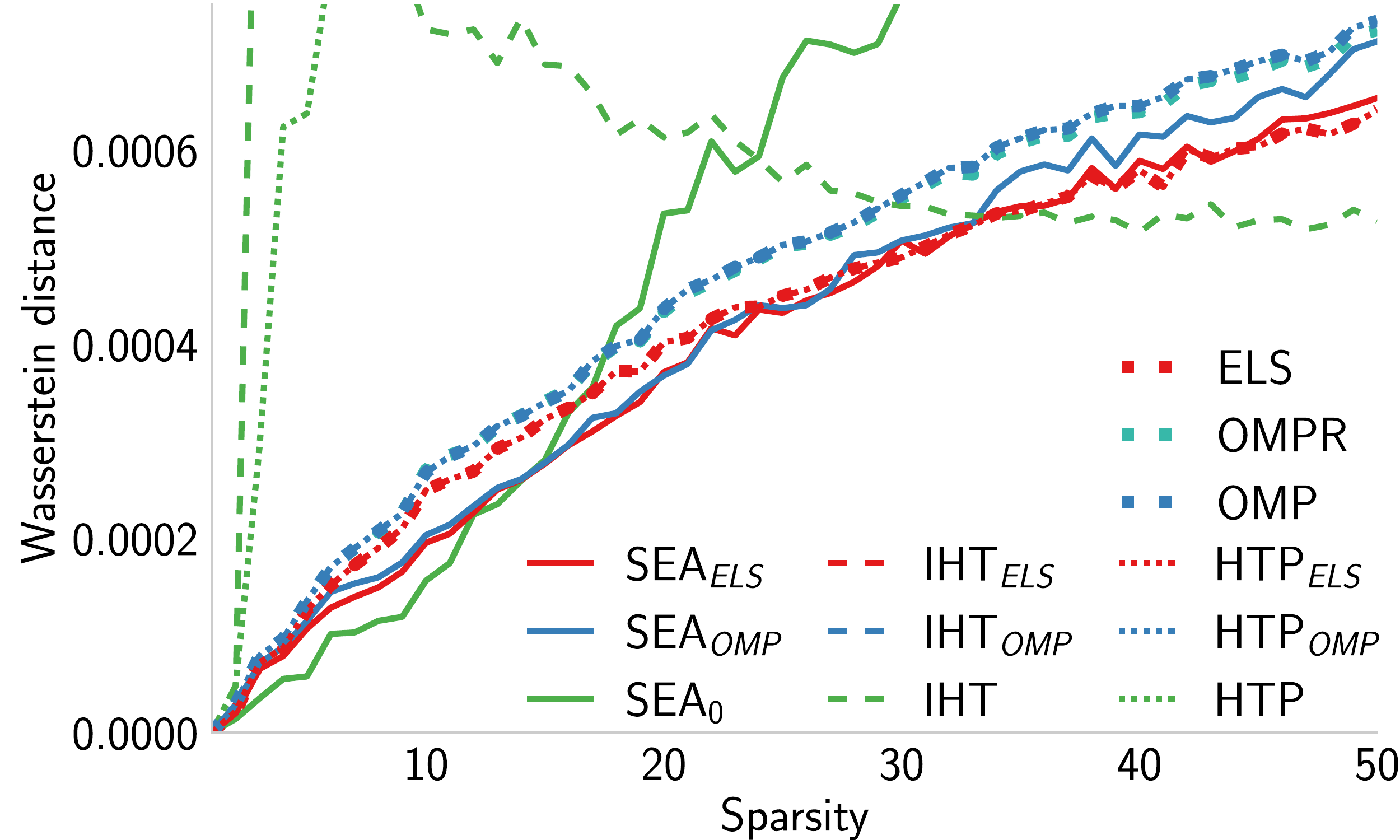}
    \caption{Mean of the Wassertstein distance between the outputs of the algorithms and the solutions $\sol$ of the $200$ problems of \cref{deconv-sec}, for each sparsity level $\spars\in\intint{1,50}$.}
  \label{fig:dcv_mass_y:wasserstein}
\end{minipage}
\end{figure}

In \cref{fig:dcv_mass_y:wasserstein}, we show the mean of the Wasserstein-1-distance (also called Earth mover's distance) over the same problems.
It illustrates how 'far' the chosen spikes are from the true ones.
Again, all the versions of SEA achieve the smallest distances for $k < 18$.
As $k$ increases, despite being the best at finding the exact position of the spikes (see \cref{fig:dcv_mass}), \SEAZ and, to a lesser extent, \SEAOMP and \SEAELS, choose spikes 'far' from the true ones when they are mistaken, while IHT improves for the highest $k$.

\subsection{Deconvolution: Results in the Noiseless Setup} \label{app-deconv-noiseless}
We consider the same experiment as in \cref{deconv-sec} but in a noiseless setting ($\noise = 0$).
Thus, we set again n = 500, a convolution matrix $\mat$ corresponding to
a Gaussian filter with a standard deviation equal to 3.
We tested every algorithm on $r=200$ noiseless problems, for different $k$-sparse $\sol$, with $k \in \intint{1, 50}$.

\subsubsection{Visualization of a Specific Instance}
\label{app:deconv:precise}

The counterparts of the curves in \cref{fig:deconv:noisy_full,fig:deconv:noisy_crop} from \cref{app:deconv-precise} in the noiseless case are shown in \cref{fig:deconv:noiseless_full,fig:deconv:noiseless_crop}. 
The algorithms behave in a similar way to the noisy case. 
However, with no perturbation in the signal, all versions of SEA successfully recover the exact positions of the spikes, whereas no other algorithm achieves such a performance.

\begin{figure}[!htb]
    \centering
    \includegraphics[width=\linewidth]{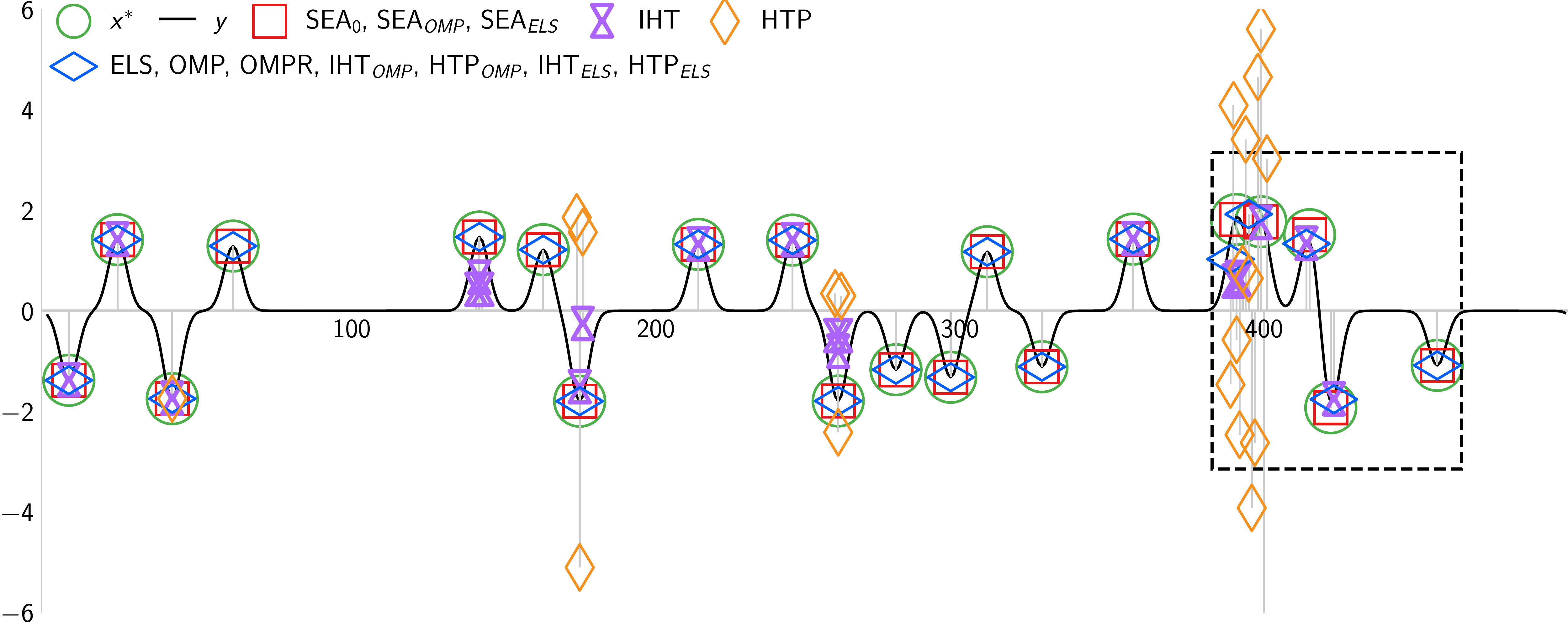}
    \caption{Representation of an instance of $\sol$ and $\obs$ with the solutions provided by the algorithms when $\spars = 20$ and $n=500$ in the noiseless case: Full signal.}
    \label{fig:deconv:noiseless_full}
\end{figure}

\begin{figure}[!htb]
    \centering
    \includegraphics[width=\linewidth]{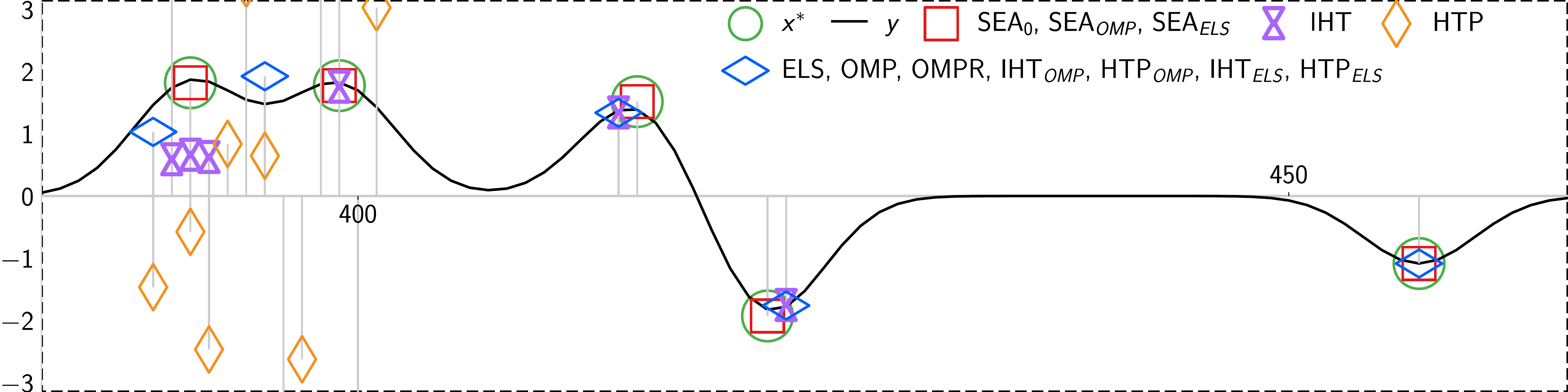}
    \caption{Crop from the dashed area in \cref{fig:deconv:noiseless_full}. This region corresponds to the most densely populated area within the signal.}
    \label{fig:deconv:noiseless_crop}
\end{figure}

\subsubsection{Loss along the Iterative Process}
\label{app:deconv:loss}

The analogs of \cref{fig:sea_0,fig:sea_0_n_sup} from \cref{app:deconv-precise} are respectively shown in \cref{fig:sea_0_noiseless,fig:sea_0_n_sup_noiseless}. 
The conclusions drawn here in the noiseless setting are similar to those in \cref{app:deconv-precise}.

Similarly to the noisy case, it can be observed in \cref{fig:sea_0_noiseless} that due to the exploratory nature of SEA,  $\ell_{2, \text{rel}\_\text{loss}}(\sigs^{\iter})$ oscillates for all versions of SEA. 
However, this does not prevent \SEAZ from finding a better approximation of $\suppsol$ than ELS in the first $80$ iterations and eventually recovering $\suppsol$ despite the high coherence of $\mat$.
Indeed, for all SEA versions, once $\suppsol$ is recovered in the noiseless setting, for $t$ sufficiently large, $x^t=x^*$, and therefore $Ax^t-y=0$.
Using the update rule of $\explo^t$ in line \ref{line:SEA:update_explo} of \cref{alg:SEA}, we observe that $\explo^t$ should no longer evolve, and no new support is explored. 
This behavior is evident not only from \cref{fig:sea_0_noiseless} but also from \cref{fig:sea_0_n_sup_noiseless}.
Furthermore, as can be seen in \cref{fig:deconv:noiseless_full}, ELS does not improve OMP thus leading to an identical initialization for \SEAOMP and \SEAELS. Consequently, these last two algorithms thus follow the same trajectory.
From \cref{fig:sea_0_n_sup_noiseless}, we however observe that\SEAELS and \SEAOMP must explore twice as many supports as \SEAZ.

\begin{figure}[tbhp]
    \centering
    \includegraphics[width=1\linewidth]{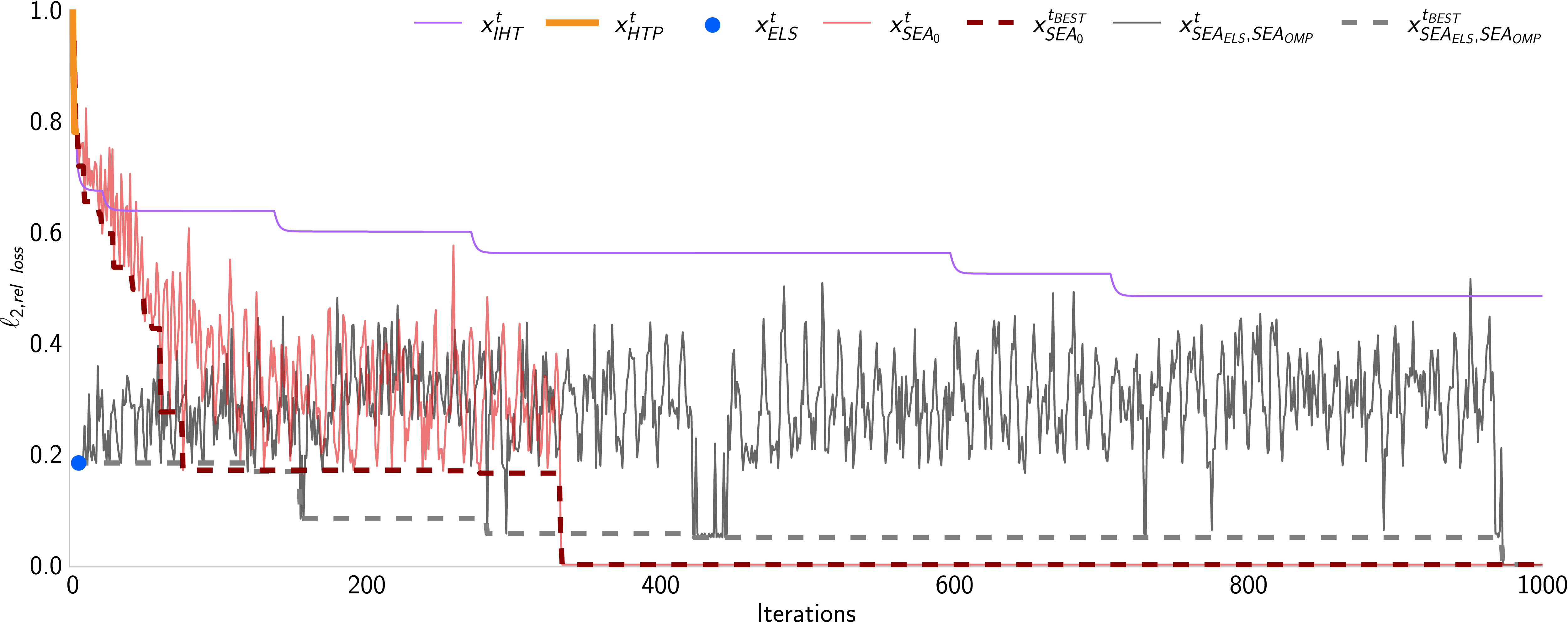}
    \caption{Representation of $\ell_{2, \text{rel}\_\text{loss}}(\sigst)$ (solid lines) and $\ell_{2, \text{rel}\_\text{loss}}(\sigs^{\iterb(t)})$ (dashed lines) for each iteration of several algorithms, for the noiseless experiment.}
    \label{fig:sea_0_noiseless}
\end{figure}

\begin{figure}[tbhp]
    \centering
    \includegraphics[width=1\linewidth]{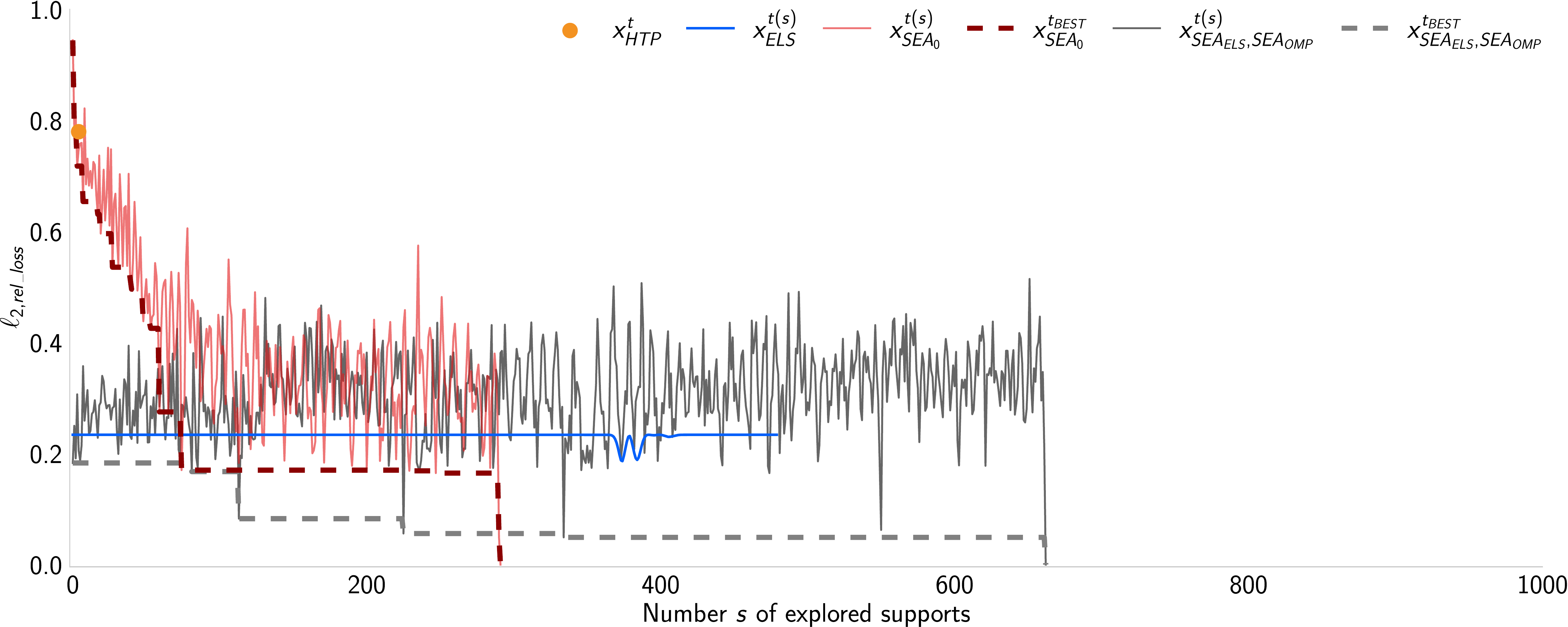}
    \caption{Representation of $\ell_{2, \text{rel}\_\text{loss}}(x^{t(s)})$ (solid lines) and $\ell_{2, \text{rel}\_\text{loss}}(x^{\iterb(\iter(s))})$ (dashed lines) for each new explored support of several algorithms, for the noiseless experiment.}
    \label{fig:sea_0_n_sup_noiseless}
\end{figure}

\subsubsection{Number of Explored Supports}

The analog of \cref{fig:n_supports:noisy} from \cref{app:deconv-n_support} is shown in \cref{fig:n_supports:noiseless}.
The conclusions drawn here in the noiseless case are similar to those in \cref{app:deconv-n_support}.

All the algorithms behave in the same way as in the noisy experiment. 

\begin{figure}[!htb]
    \centering
    \includegraphics[width=\linewidth]{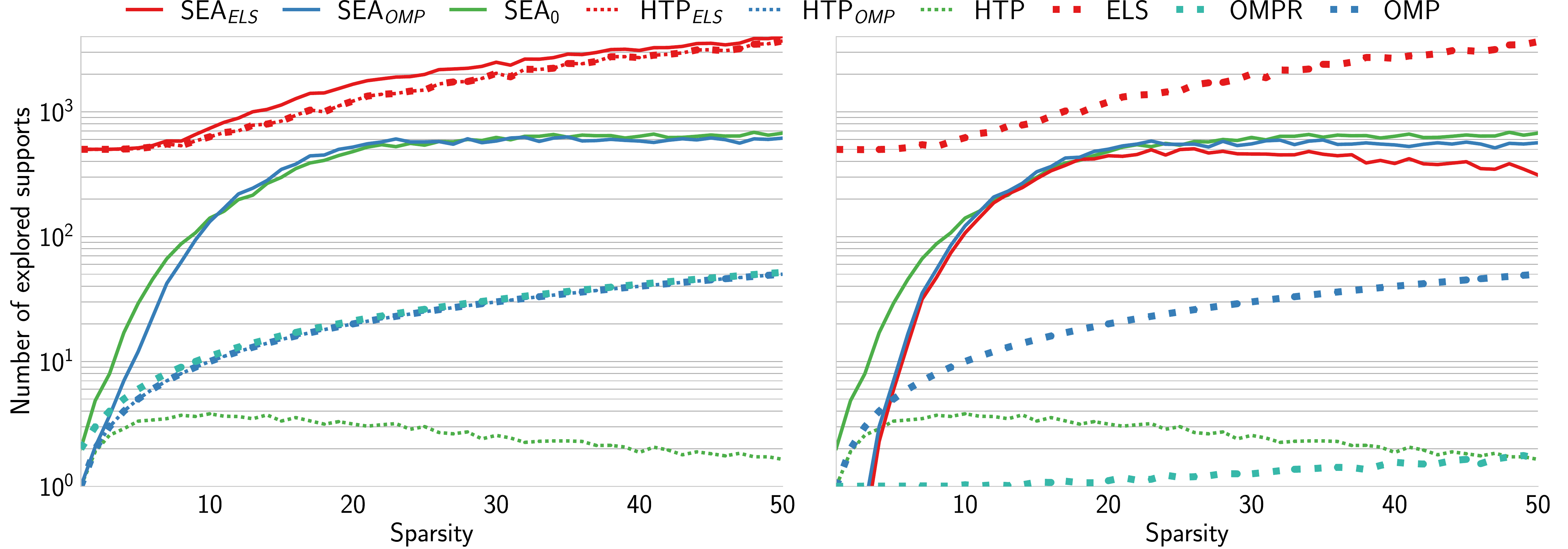}
    \caption{Left: Mean of the number of the explored supports by algorithms solving the 200 problems in the noiseless case, across sparsity levels $\spars\in\intint{1,50}$.
    Right: Mean of the number of the explored supports from algorithms initialization in the same setup.}
    \label{fig:n_supports:noiseless}
\end{figure}

\subsubsection{Average \texorpdfstring{$\text{dist}_{\text{supp}}$}{support distance}, Loss, and  Wasserstein Distance when \texorpdfstring{$k$}{k} Varies}

The analogs of \cref{fig:dcv_mass,fig:dcv_mass_y,fig:dcv_mass_y:wasserstein} are respectively displayed in \cref{fig:dcv_mass_noiseless,fig:dcv_mass_y_noiseless,fig:dcv_mass_y_noiseless:wasserstein}. 
The results in the noiseless setting closely mirror those in \cref{deconv-sec} and \cref{app:deconv-global} in the noisy setting.

In \cref{fig:dcv_mass_noiseless}, for sparsity levels $\spars < 30$, \SEAZ, \SEAOMP, and \SEAELS outperform the other algorithms. 
Across all studied sparsity levels, \SEAZ is reaching the best performances.

Moving to \cref{fig:dcv_mass_y_noiseless}, the absence of noise makes the problems easier to solve.
Despite overall improvement, \SEAZ still attains the lowest error for the smallest $k$, followed by \SEAELS.
Once again, HTP and IHT exhibit much larger errors than their competitors.

In \cref{fig:dcv_mass_y_noiseless:wasserstein}, as $k$ increases, \SEAZ, followed by \SEAELS and eventually IHT, exhibits the lowest Wasserstein distance.

\begin{figure}[tbhp]
    \centering
    \includegraphics[width=0.6\linewidth]{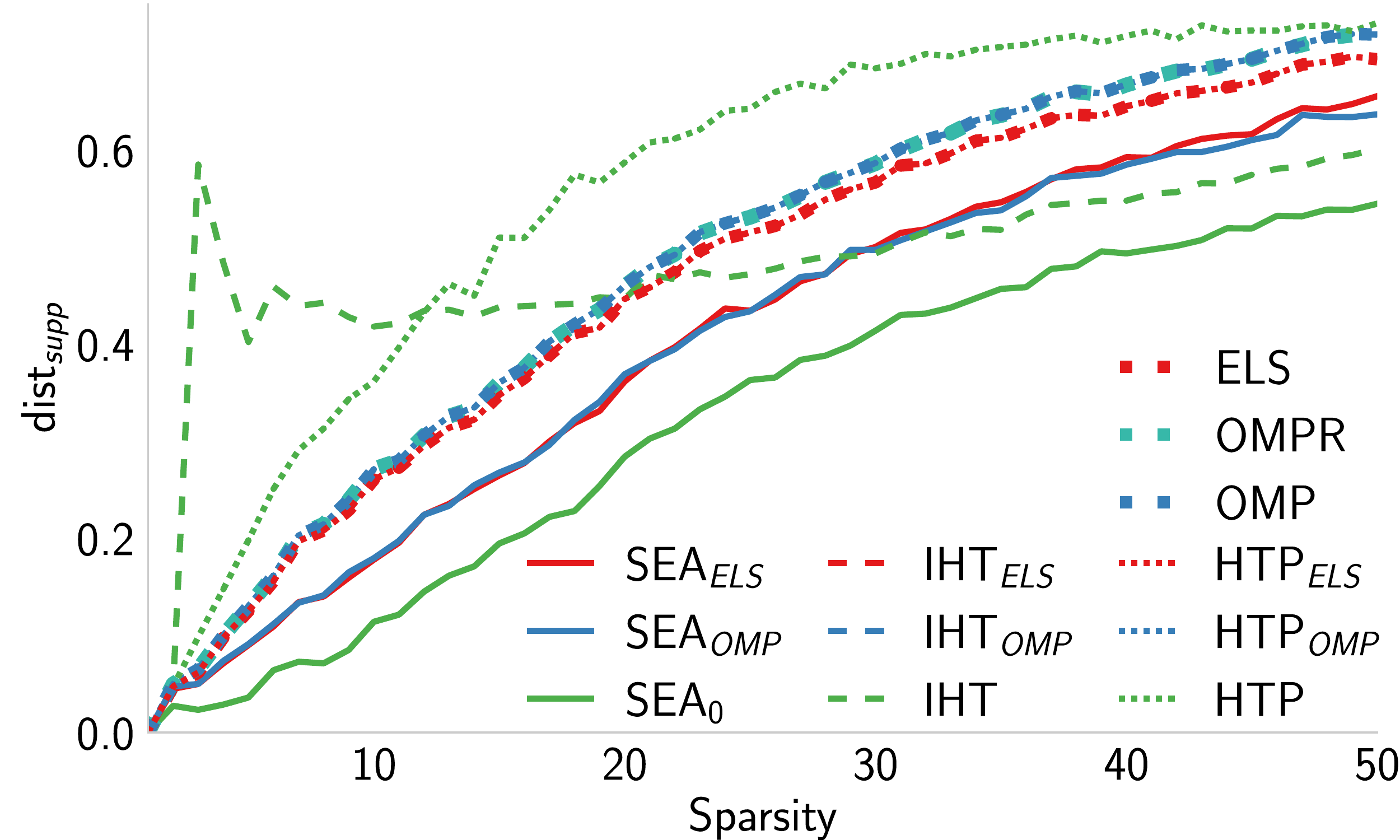}
    \caption{Mean of support distance $\text{dist}_{\text{supp}}$ (defined in \eqref{eq:supp_dist}) between $\suppsol$ and the support of the solutions provided by several algorithms as a function of the sparsity level $\spars$ in the noiseless setup.}
    \label{fig:dcv_mass_noiseless}
\end{figure}

\begin{figure}[!htb]
\centering
\begin{minipage}{.48\linewidth}
  \centering
    \includegraphics[width=\linewidth]{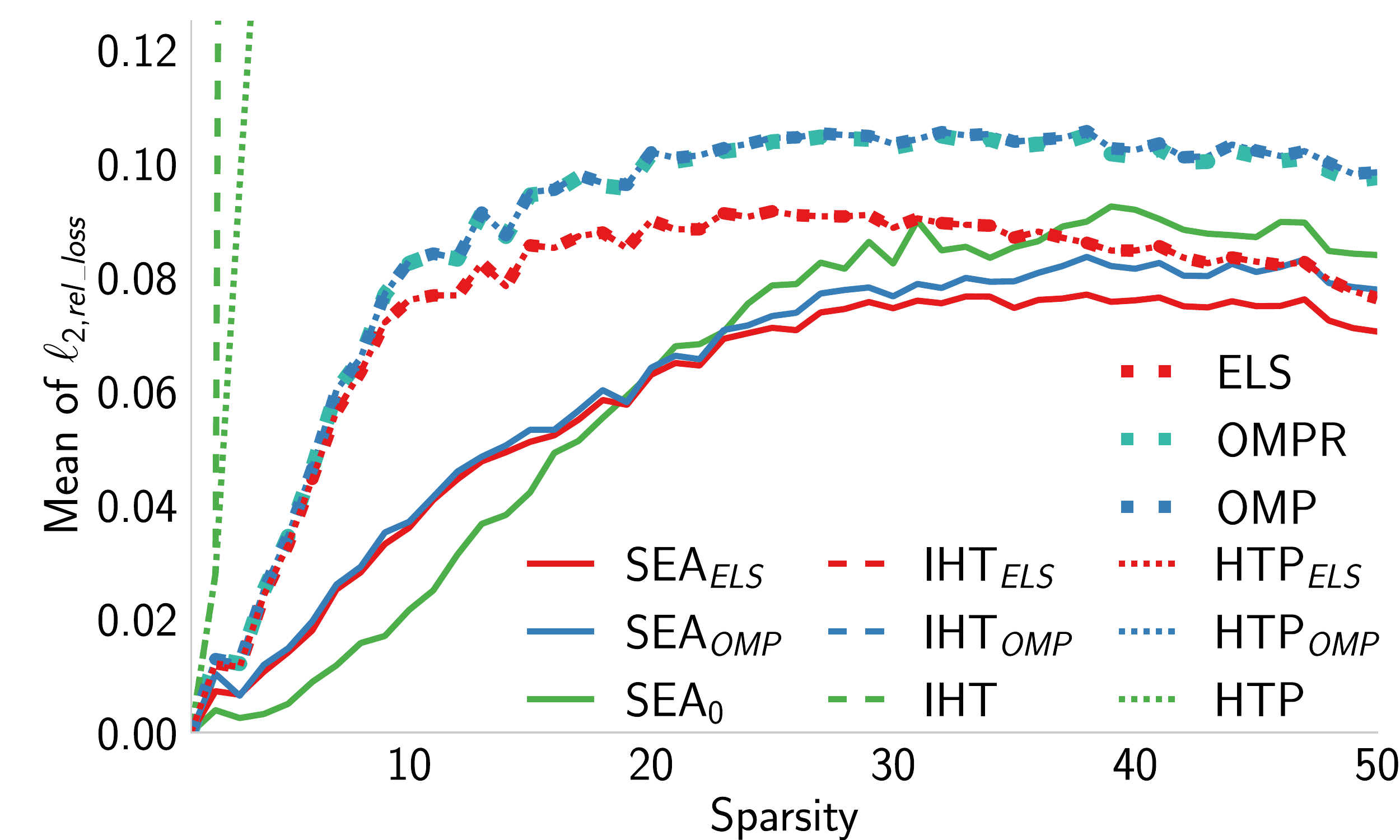}  \caption{Mean of $\ell_{2, \text{rel}\_\text{loss}}(x)$ -- defined in \eqref{eq:l2rel_loss} -- for the outputs of the algorithms on the $200$ problems of the noiseless setup.}
    \label{fig:dcv_mass_y_noiseless}
\end{minipage}%
\hspace{.02\linewidth}
\begin{minipage}{.48\linewidth}
  \centering
    \includegraphics[width=\linewidth]{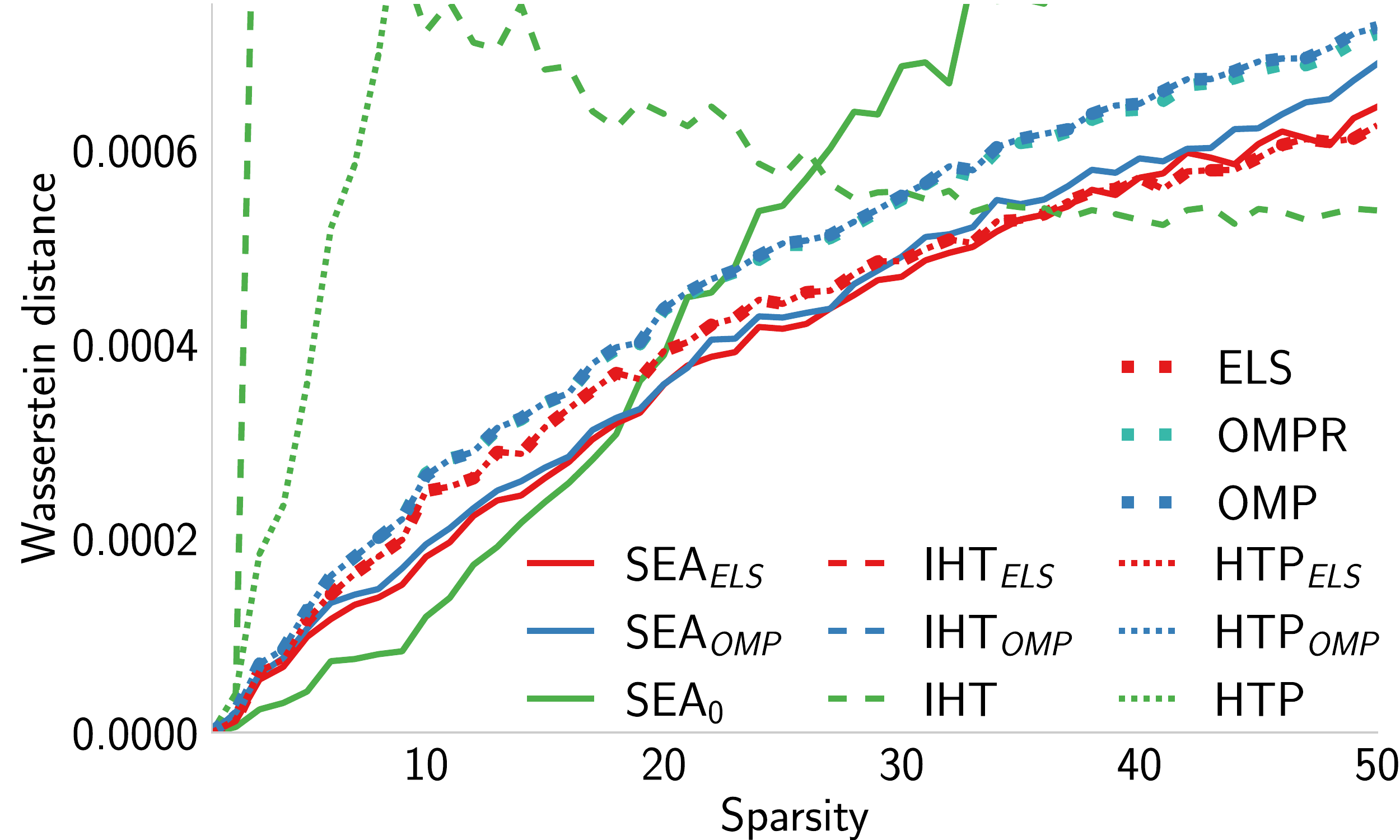}
    \caption{Mean of the Wassertstein distance between the outputs of the algorithms and the solutions $\sol$ of the $200$ problems of the noiseless setup.}
  \label{fig:dcv_mass_y_noiseless:wasserstein}
\end{minipage}
\end{figure}

\subsection{Deconvolution: Impact of an Erroneous Input Sparsity \texorpdfstring{$\spars$}{k}}
\label{app:deconv:wrong_k}

We consider the same experiment as in \cref{deconv-sec} but in the case where the sparsity $\spars$ is wrongly estimated.
Thus, we set $n = 500$ and use a convolution matrix $\mat$ corresponding to a Gaussian filter with a standard deviation of 3. Every algorithm is tested on $r = 200$ noisy problems, for different $k$-sparse $\sol$, with $\spars\in \{ 4p, \ p\in \intint{1, 12}\}$.
The difference here is that the algorithms are asked to recover a signal of sparsity $\spars' \neq \spars$.

The analogs of \cref{fig:dcv_mass} when $\spars'= 0.75\spars$ and $\spars'=1.25\spars$ are depicted in \cref{fig:dcv_mass:k075,fig:dcv_mass:k125}.
Thus, not all values $\spars\in\intint{1, 50}$ are tested because we wanted to keep a constant ratio $\frac{\spars'}{\spars}$ equal to $0.75$ or $1.25$.
To ensure fairness in the underestimated case, we introduce a slightly different metric from dist$_\text{supp}$, which allows algorithms to achieve a distance equal to $0$:

\begin{equation*}
    \text{dist}_{\text{supp},\spars'}(x)
    =
    \frac
    {\spars' - \abs{\suppsol \cap \ \SUPP{\sig}}}
    {\spars'}.
\end{equation*}

This metric shows how good is each algorithm at recovering only elements of $\suppsol$.

In \cref{fig:dcv_mass:k075}, we see that improving OMP and ELS is more difficult for \SEAOMP and \SEAELS, with SEA keeping the lowest $\text{dist}_{\text{supp}}$ as in \cref{fig:dcv_mass}.
In \cref{fig:dcv_mass:k075,fig:dcv_mass:k125}, we see that all algorithms reach a lower $\text{dist}_{\text{supp}}$ than in \cref{fig:dcv_mass}.
Thus, in these configurations, \SEAZ is still the algorithm reaching the lowest $\text{dist}_{\text{supp}}$. 
The better performance for $\frac{\spars'}{\spars}>1$ is expected according to the definition of the mean support distance. 
Additionally, when $\frac{\spars'}{\spars}<1$, the algorithms can focus on the largest entries of $\sol$ which are easier to recover.

\cref{fig:dcv_mass:k075_topk,fig:dcv_mass:k125_topk} complement the results presented in \cref{fig:dcv_mass:k075,fig:dcv_mass:k125} with another metric.
Here we illustrate how the performance degrades when evaluating the mean support distance based on the largest estimated entries.
Thus we introduce $K = \min \{\spars, \spars'\}$, and for any $\sig \in \RR^n$ and $\ii \in \intint{1, n}$

\begin{equation*}
    \text{dist}_\text{supp,largest}(\sig)
    =
    \frac
    {K - \abs{\SUPP{\sol_{\text{largest}_K}} \cap \SUPP{\sig_{\text{largest}_K}}}}
    {K}
    \quad \text{with} \
        (\sig_{\text{largest}_K})_\ii = 
    \begin{cases}
        \sig_\ii & \text{if } \ii \in \text{largest}_K(\sig) \\
        0 & \text{if } \ii \notin \text{largest}_K(\sig).
    \end{cases}
\end{equation*}

When $\spars$ is underestimated (\cref{fig:dcv_mass:k075_topk}), $\text{dist}_\text{supp,largest}$ depicts the capacity of each algorithm to recover $\SUPP{\sol_{\text{largest}_K}}$, the support of the largest entries of $\sol$.
When $\spars$ is overestimated (\cref{fig:dcv_mass:k125_topk}), $\text{dist}_\text{supp,largest}$ depicts the capacity of each algorithm to recover $\suppsol$ in the largest entries of the provided solution.
With this new metric, we observe in \cref{fig:dcv_mass:k075_topk} that the performance of all algorithms degrades.
We also see the same phenomenon in \cref{fig:dcv_mass:k125_topk} with \SEAZ being less affected than the other algorithms and OMPR showing the best performance for $\spars < 11$.

\begin{figure}[!htb]
\centering
\begin{minipage}{.48\linewidth}
  \centering
    \includegraphics[width=\linewidth]{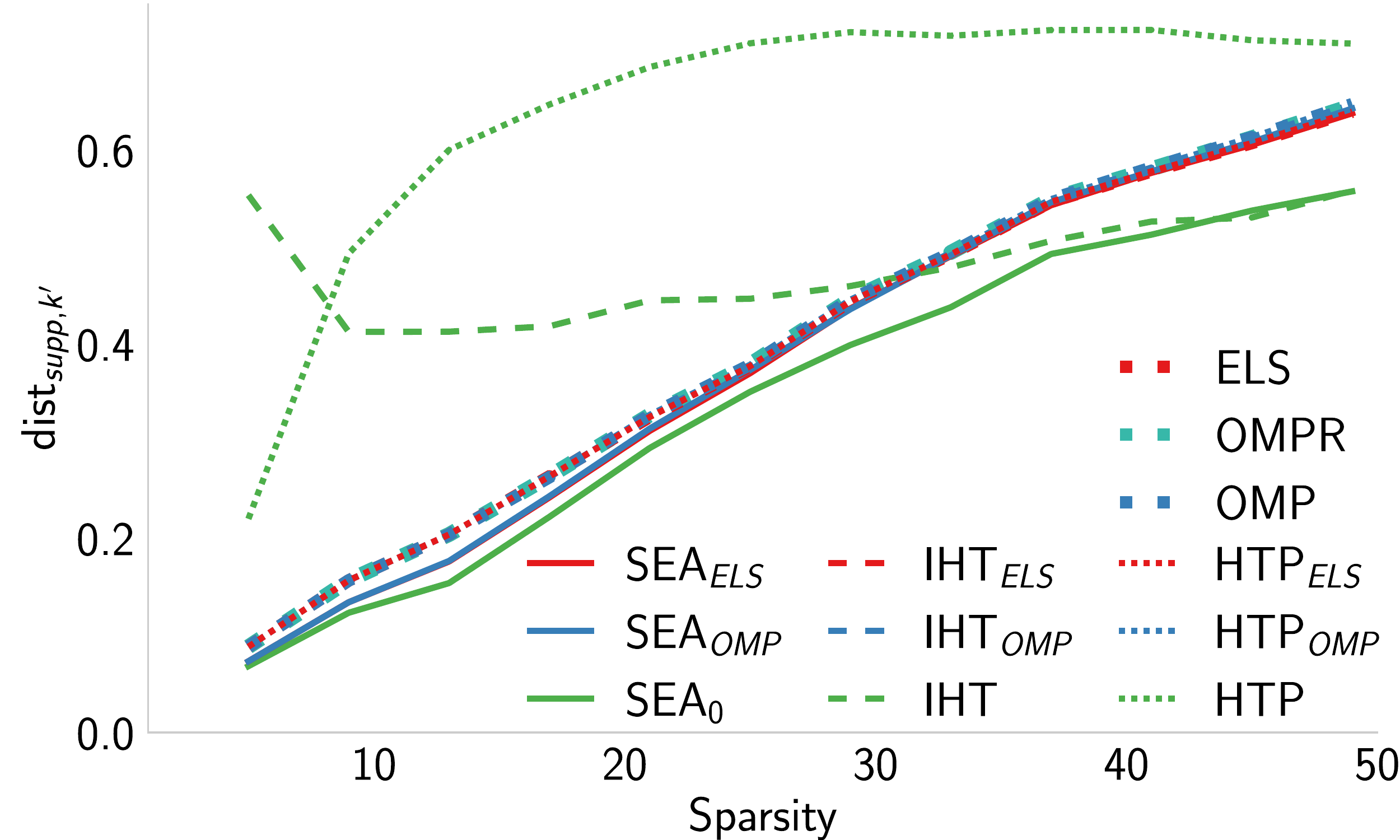}  
    \caption{Mean of support distance $\text{dist}_{\text{supp},\spars'}$ between $\suppsol$ and the support of the solutions provided by several algorithms as a function of the sparsity level $\spars$ when the sparsity is underestimated ($\spars' = 0.75\spars$).}
    \label{fig:dcv_mass:k075}
\end{minipage}%
\hspace{.02\linewidth}
\begin{minipage}{.48\linewidth}
  \centering
    \includegraphics[width=\linewidth]{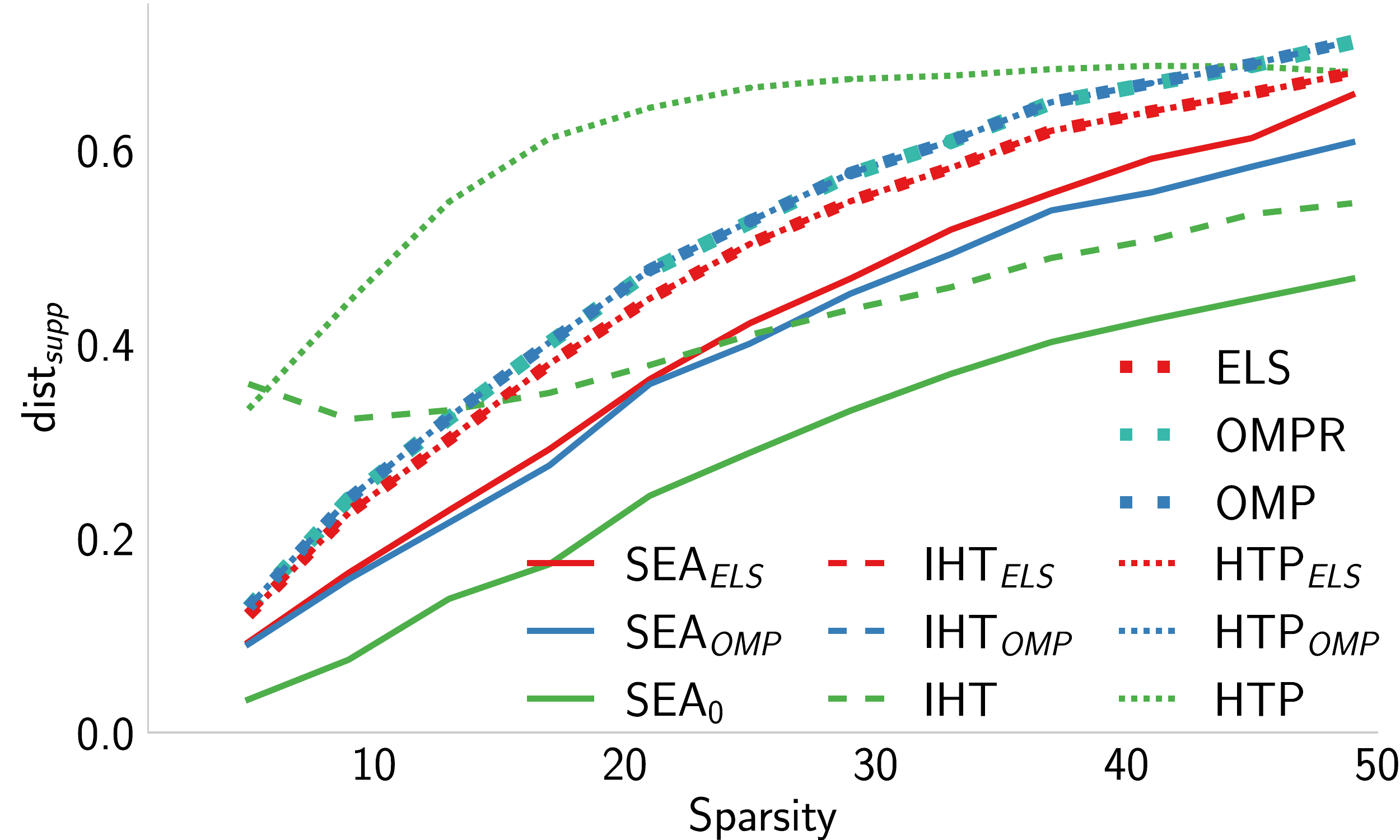}
    \caption{Mean of support distance $\text{dist}_{\text{supp}}$ (defined in \eqref{eq:supp_dist}) between $\suppsol$ and the support of the solutions provided by several algorithms as a function of the sparsity level $\spars$ when the sparsity is overestimated ($\spars' = 1.25\spars$).}
    \label{fig:dcv_mass:k125}
\end{minipage}
\end{figure}

\begin{figure}[!htb]
\centering
\begin{minipage}{.48\linewidth}
  \centering
    \includegraphics[width=\linewidth]{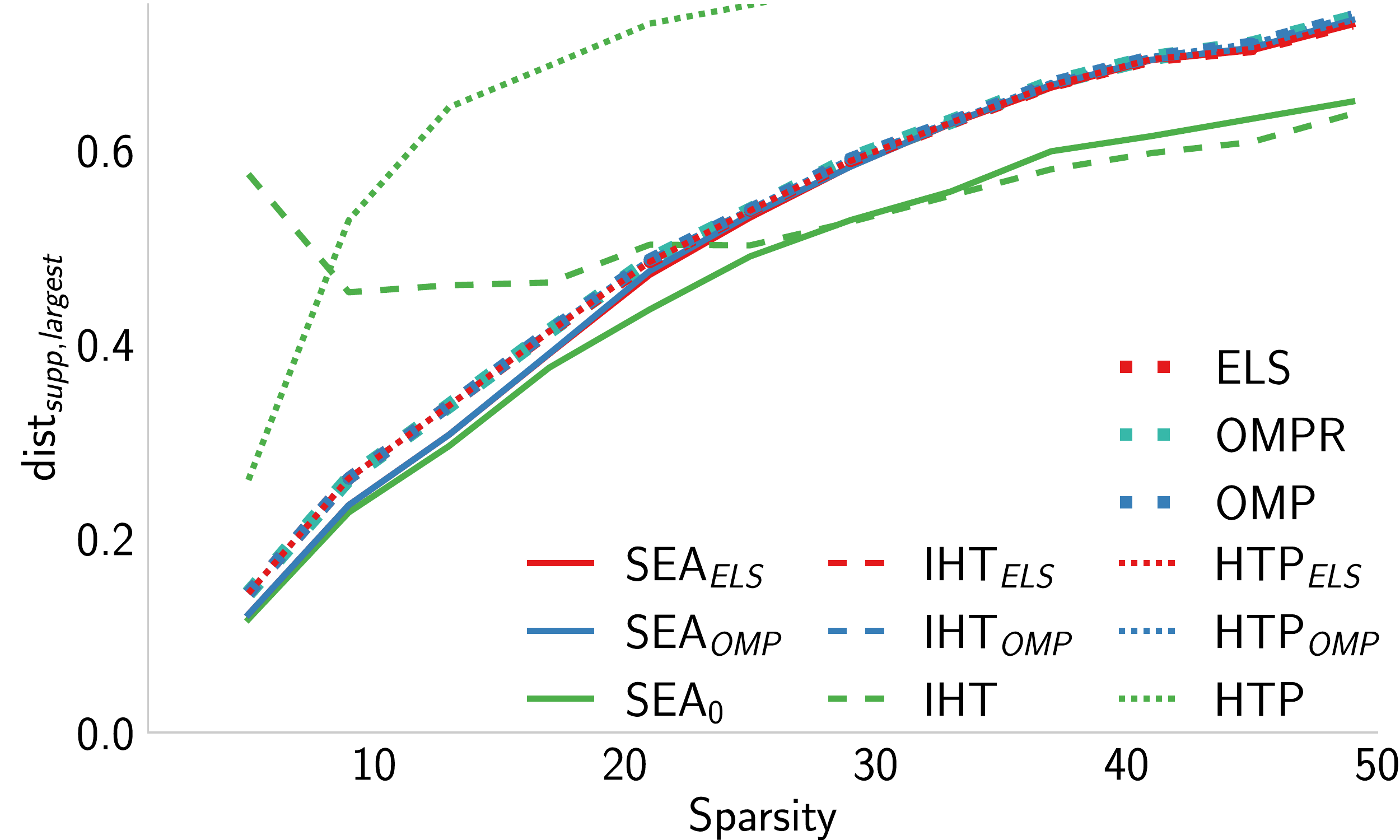}  
    \caption{Mean of dist$_\text{supp,largest}$ between $\suppsol$ and the support of the solutions provided by several algorithms as a function of the sparsity level $\spars$ when the sparsity is underestimated ($\spars' = 0.75\spars$).}
    \label{fig:dcv_mass:k075_topk}
\end{minipage}%
\hspace{.02\linewidth}
\begin{minipage}{.48\linewidth}
  \centering
    \includegraphics[width=\linewidth]{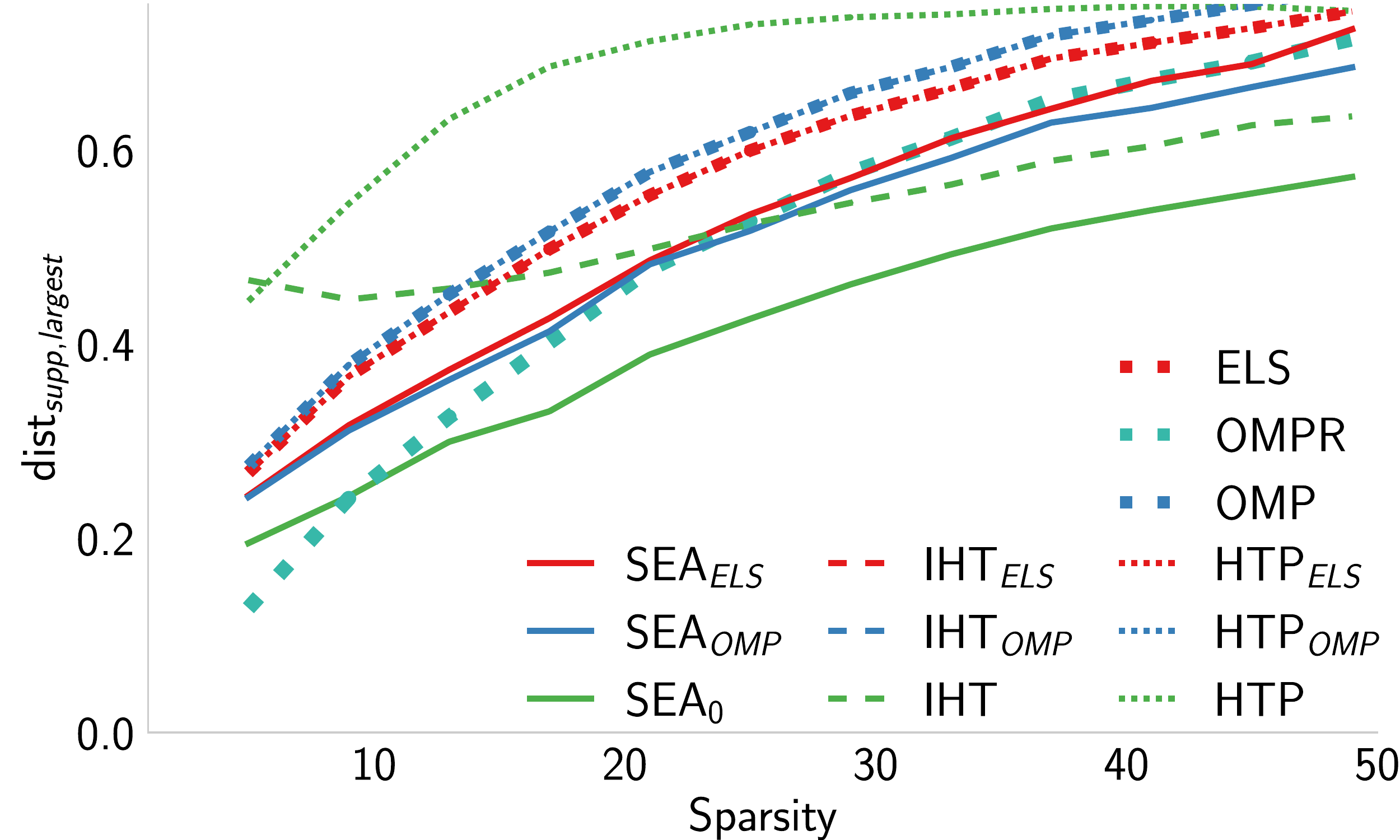}
    \caption{Mean of dist$_\text{supp,largest}$ between $\suppsol$ and the support of the solutions provided by several algorithms as a function of the sparsity level $\spars$ when the sparsity is overestimated ($\spars' = 1.25\spars$).}
    \label{fig:dcv_mass:k125_topk}
\end{minipage}
\end{figure}

\subsection{Deconvolution: Noise Robustness}
\label{app:deconv:noise_robustness}

We consider the same experiment as in \cref{deconv-sec} but using a noise $\noise$ uniformly drawn from the sphere of radius $\alpha\normd{\mat\sol}$ with $\alpha\in\{0.2, 0.3\}$ instead of $\alpha=0.1$.
Again, we set $n = 500$ and use a convolution matrix $\mat$ corresponding to a Gaussian filter with a standard deviation of 3. Every algorithm is tested on $r = 200$ noisy problems, for different $k$-sparse $\sol$, with $\spars\in\intint{1, 50}$.
The difference here is that we change the magnitude of the noise of the noisy problems.

The analogs of \cref{fig:dcv_mass} when $\alpha=0.2$ and $\alpha=0.3$ are depicted in \cref{fig:dcv_mass:noise:02,fig:dcv_mass:noise:03}.
As the noise increases, the performance of all algorithms degrades, without changing their ranking or the conclusions of the experiment.

\begin{figure}[!htb]
\centering
\begin{minipage}{.48\linewidth}
  \centering
    \includegraphics[width=\linewidth]{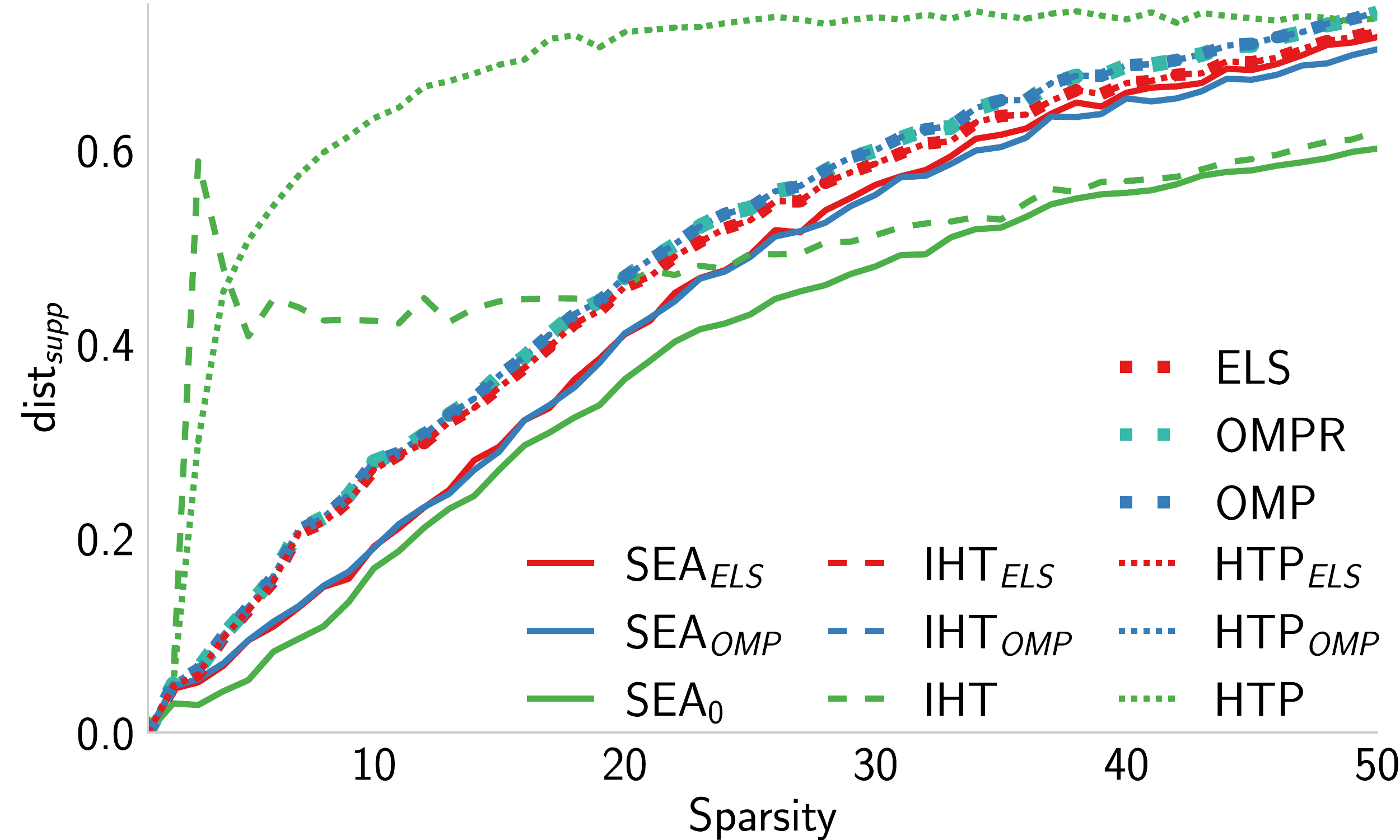}  
    \caption{Mean of support distance $\text{dist}_{\text{supp}}$ (defined in \eqref{eq:supp_dist}) between $\suppsol$ and the support of the solutions provided by several algorithms as a function of the sparsity level $\spars$ when the noise $e$ is uniformly drawn from the sphere of radius $0.2\normd{\mat\sol}$.}
    \label{fig:dcv_mass:noise:02}
\end{minipage}%
\hspace{.02\linewidth}
\begin{minipage}{.48\linewidth}
  \centering
    \includegraphics[width=\linewidth]{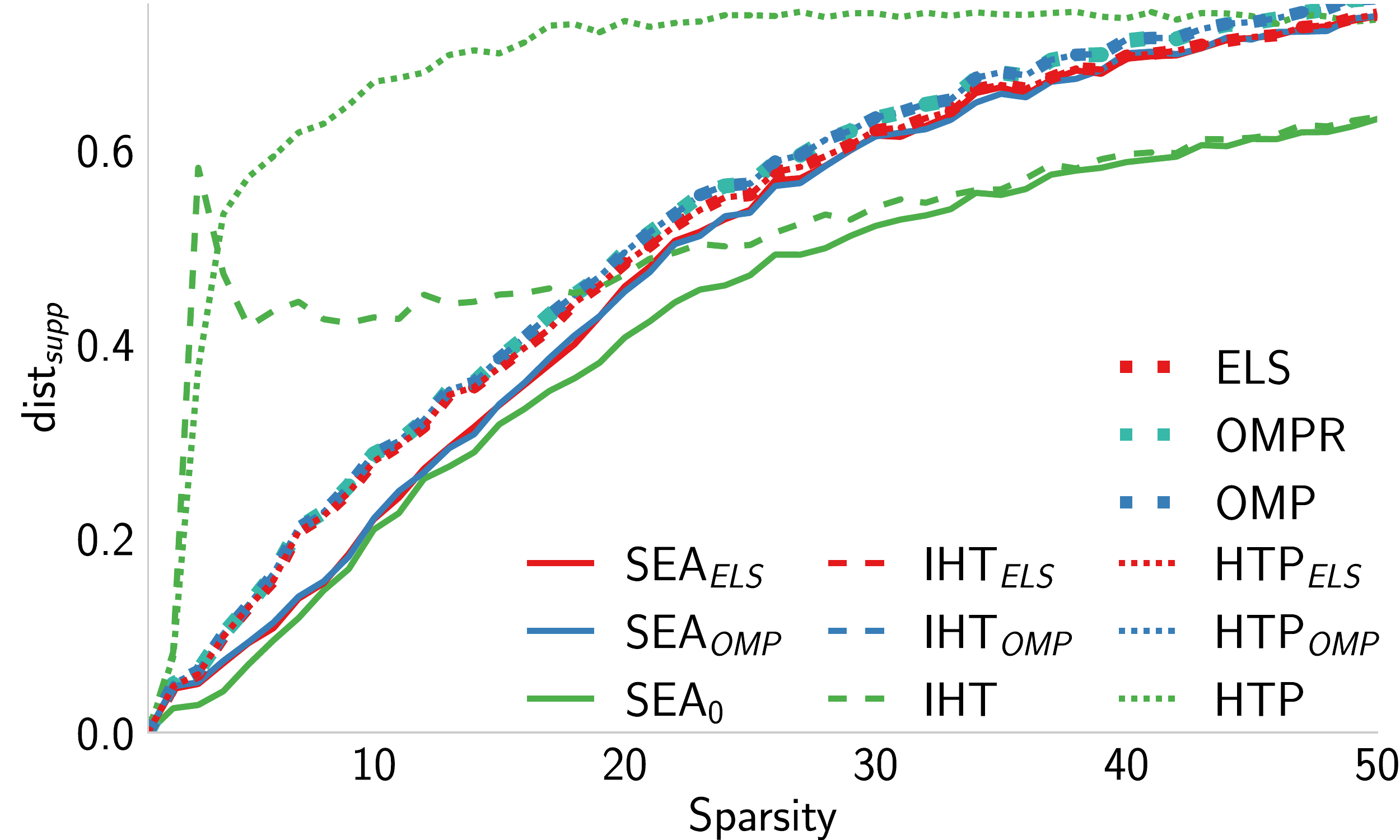}
    \caption{Mean of support distance $\text{dist}_{\text{supp}}$ (defined in \eqref{eq:supp_dist}) between $\suppsol$ and the support of the solutions provided by several algorithms as a function of the sparsity level $\spars$ when the noise $e$ is uniformly drawn from the sphere of radius $0.3\normd{\mat\sol}$.}
    \label{fig:dcv_mass:noise:03}
\end{minipage}
\end{figure}

\subsection{Deconvolution: Noise Before the Linear Transformation}
\label{app:deconv:noise_on_x}

We consider the same experiment as in \cref{deconv-sec} but add the noise $\noise$ differently.
Instead, of generating $\obs = \mat\sol+\noise$ with $\noise$ uniformly drawn from the sphere of radius $0.1\normd{\mat\sol}$, we generate $\obs=\mat(\sol+\noise)$, with $\noise$ uniformly drawn from the sphere of radius $0.1\normd{\sol}$.
Again, we set $n = 500$ and use a convolution matrix $\mat$ corresponding to a Gaussian filter with a standard deviation of 3. Every algorithm is tested on $r = 200$ noisy problems, for different $k$-sparse $\sol$, with $\spars\in\intint{1, 50}$.
The difference here is that we changed the way of adding the noise in the observations.

The analog of \cref{fig:dcv_mass} is depicted in \cref{fig:dcv_mass:noise:x}. 
The differences between these two figures are not significant.
Thus, in this configuration, \SEAZ is the algorithm reaching the lowest $\text{dist}_{\text{supp}}$ and we have the same conclusions as in \cref{deconv-sec}.

\subsection{Deconvolution: Increasing the Ratio \texorpdfstring{$\frac{\normd{\sol}}{\min_{\ii\in\suppsol}\abs{\soli}}$}{||x*||/min(|x*|)}}
\label{app:deconv:u_10}

The ratio $\frac{\normd{\sol}}{\min_{\ii\in\suppsol}\abs{\soli}}$ is of importance in conditions \ref{eq:HRIP} and \ref{eq:HSRIP}. We experimentally study whether it actually impacts the performance or whether it is an artifact of the proof.
We consider the same experiment as in \cref{deconv-sec} but with the non-zero entries of $\sol$ drawn uniformly in $\intint{-10, -1} \cup \intint{1, 10}$ instead of $\intint{-2, -1} \cup \intint{1, 2}$.
Again, we set $n = 500$ and use a convolution matrix $\mat$ corresponding to a Gaussian filter with a standard deviation of 3. Every algorithm is tested on $r = 200$ noisy problems, for different $k$-sparse $\sol$, with $\spars\in\intint{1, 50}$.

The analog of \cref{fig:dcv_mass} is depicted in \cref{fig:dcv_mass:noise:u10}. 
Here, OMP, OMPR, and ELS perform slightly better, while all the versions of SEA perform slightly worse. IHT and HTP also perform worse. However, the ranking of the algorithms remains the same. Thus, we can experimentally see that increasing the ratio$\frac{\normd{\sol}}{\min_{\ii\in\suppsol}\abs{\soli}}$ reduces SEA performance.

\begin{figure}[!htb]
\centering
\begin{minipage}{.48\linewidth}
  \centering
    \includegraphics[width=\linewidth]{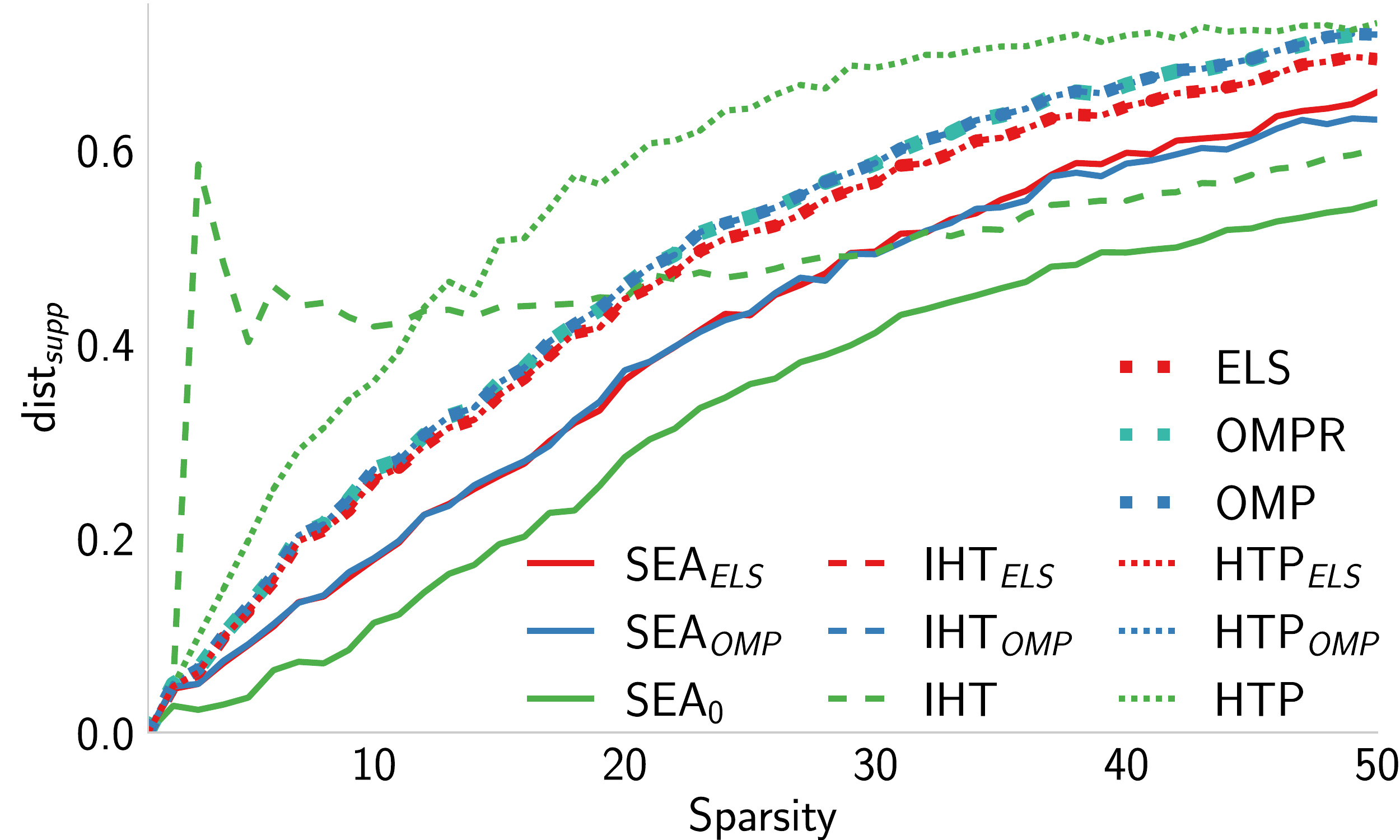}
    \caption{Mean of support distance $\text{dist}_{\text{supp}}$ (defined in \eqref{eq:supp_dist}) between $\suppsol$ and the support of the solutions provided by several algorithms as a function of the sparsity level $\spars$ when $\obs=\mat(\sol+\noise)$ with $e$ uniformly drawn from the sphere of radius $0.1\normd{\sol}$.}
    \label{fig:dcv_mass:noise:x}
\end{minipage}%
\hspace{.02\linewidth}
\begin{minipage}{.48\linewidth}
  \centering
      \includegraphics[width=\linewidth]{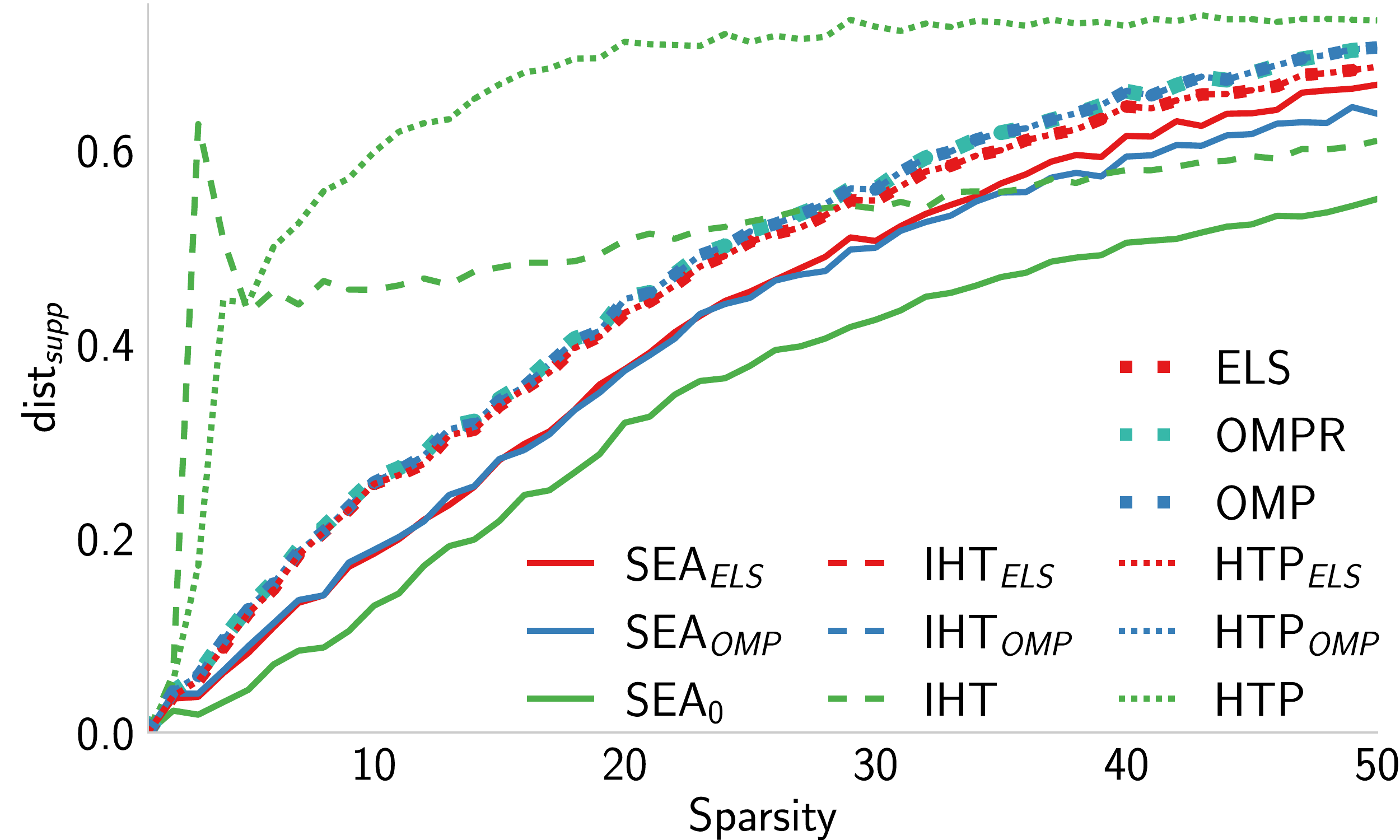}
    \caption{Mean of support distance $\text{dist}_{\text{supp}}$ (defined in \eqref{eq:supp_dist}) between $\suppsol$ and the support of the solutions provided by several algorithms as a function of the sparsity level $\spars$ when the non-zero entries of $\sol$ are drawn uniformly in $\intint{-10, -1} \cup \intint{1, 10}$.}
    \label{fig:dcv_mass:noise:u10}
\end{minipage}
\end{figure}

\subsection{Deconvolution: Step Size of IHT and HTP}
\label{app:deconv:step_size}

In \cref{expe-sec}, we arbitrarily fixed the step size $\gradstep=\frac{1.8}{L}$ where $L$ is the spectral radius of $\mat$.
In this section, we study the influence of the step size on HTP and IHT. 
We also consider the Normalized Iterative Thresholding (NIHT) \cite{blumensath2010normalized} algorithm which is based on IHT but includes an adaptive step size $\gradstep^\iter$ which depends on the iteration $\iter$.
The step size is chosen as:
\begin{equation*}
    \gradstep^\iter 
    = 
    \frac
    {\normdd{(\mat^T(\obs - \mat \sigst))_{\suppt}}}
    {\normdd{\mat((\mat^T(\obs - \mat \sigst))_{\suppt})}}.
\end{equation*}

If $\gradstep^t > 0.99\normdd{\sigs^\iter - \sig^{\iter+1}} / \normdd{\mat(\sigs^\iter - \sig^{\iter+1})}$, $\gradstep^t$ is halved until this inequality becomes unsatisfied, as explained in \cite{foucart2011hard}.

We consider the same experiment as in \cref{deconv-sec} for NIHT, IHT, and HTP with different step sizes. 
Again, we set $n = 500$ and use a convolution matrix $\mat$ corresponding to a Gaussian filter with a standard deviation of 3. Every algorithm is tested on $r = 200$ noisy problems, for different $k$-sparse $\sol$, with $\spars\in\intint{1, 50}$.
IHT and HTP were tested with a step size $\eta\in\{\frac{2^l}{L}~|~ \mbox{for }l=\intint{-3,+3}\}$.

The analog of \cref{fig:dcv_mass} is depicted in \cref{fig:dcv_mass:step_size}. 
For all considered step sizes, IHT and HTP cannot improve the solution provided by ELS and OMP when we consider the $\text{dist}_{\text{supp}}$ metric.
We reach the same conclusion for NIHT.
The ranking of HTP does not depend on the selected step size. 
The lowest $\text{dist}_{\text{supp}}$ is reached with a step size $\gradstep = \frac{8}{L}$.
Increasing the step size further did not improve the results and made HTP diverge.
IHT is more sensitive to step size variations than HTP.
The lowest $\text{dist}_{\text{supp}}$ is reached with a step size $\gradstep = \frac{2}{L}$.
Increasing the step size further made IHT diverge.
The dynamic step size of NIHT made NIHT better than \SEAOMP and \SEAELS for $\spars > 25$.
However, IHT with $\gradstep = \frac{2}{L}$ becomes better than NIHT for $\spars > 36$.
Thus, even by considering these variabilities, \SEAZ remains the best algorithm in this setting.

\begin{figure}[tbhp]
    \centering
    \includegraphics[width=0.6\linewidth]{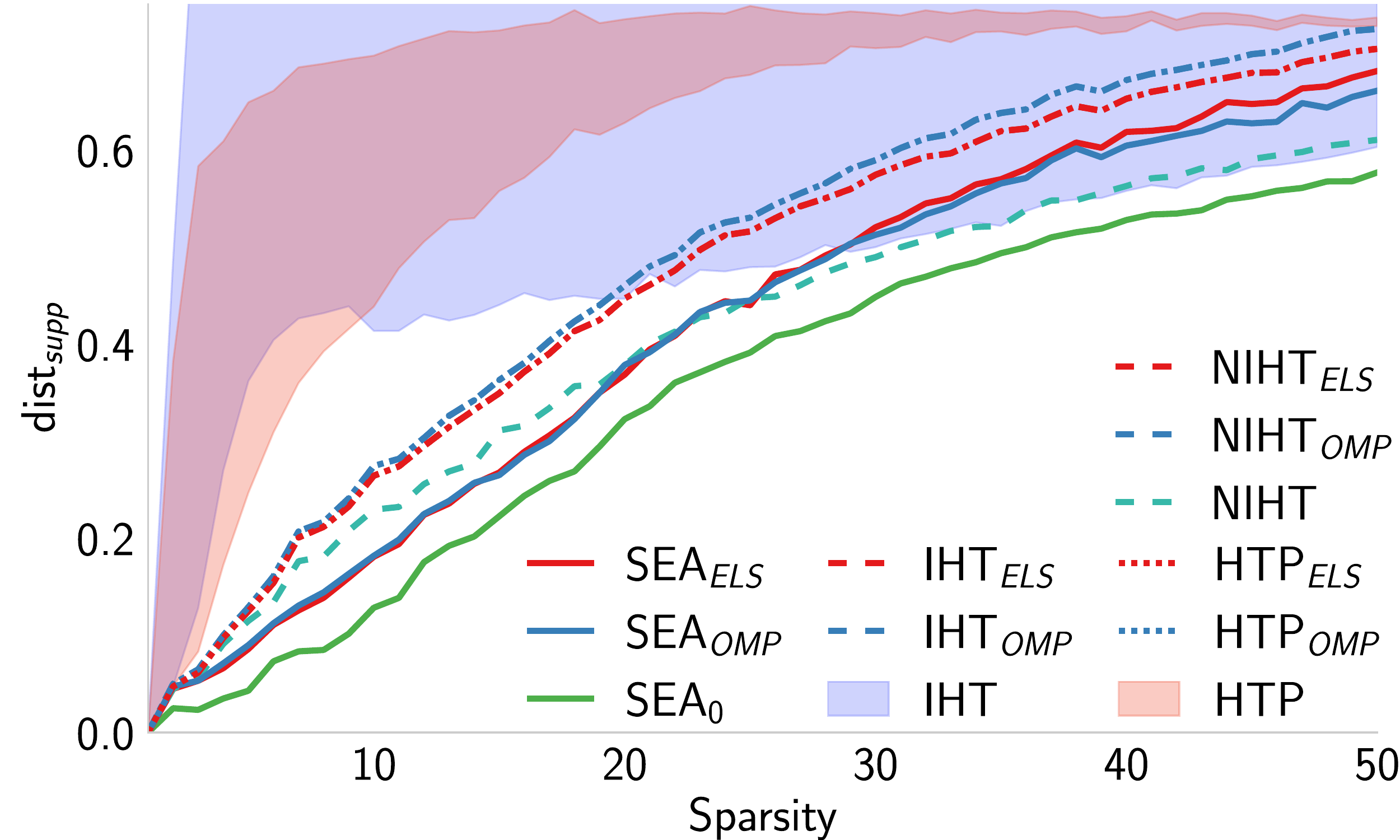}
    \caption{Mean of support distance $\text{dist}_{\text{supp}}$ (defined in \eqref{eq:supp_dist}) between $\suppsol$ and the support of the solutions provided by several algorithms as a function of the sparsity level $\spars$ with different step size values for IHT and HTP.
    The area between the highest and the lowest curve for different step sizes is displayed in blue and red for IHT and HTP.
    NIHT$_{OMP}$ and IHT$_{OMP}$ are superimposed.
    NIHT$_{ELS}$ and IHT$_{ELS}$ are superimposed.}
    \label{fig:dcv_mass:step_size}
\end{figure}

\subsection{Deconvolution: Random Search as a Pure Exploration Baseline.}
\label{app:deconv:random}

A random search to recover the correct support could be used as a baseline to assess the performance of SEA. 
However, we discarded this option because the probability of recovery is too small. 
For instance, if we draw $r=100000$ independent random supports from a uniform distribution, the probability of recovering the correct support of size $k=20$ in a signal of size $n=500$ is $1-\left(1-\frac{1}{\binom{n}{k}}\right)^r \sim 3.75 \times 10^{-31}.$ 
To establish a lower bound on support recovery performance, we conducted a random search for a number of supports equal to the maximal number of iterations ($1000$). 
The obtained results in this context were significantly  inferior to any other tested algorithm.

\clearpage

\section{Supervised Machine Learning Experiments}
\label{app:MLexp}

We describe here supervised learning experiments: they confirm that \SEAZ performs well in small dimensions, and performs better in high dimensions when combined with OMP or ELS. They also give evidence that SEA can perform very well in the presence of error/noise and when no perfect sparse vector fits the data.

\subsection{Context}\label{ann-intro-sec}

In a supervised learning setting, the rows of matrix $\mat \in \matspace$ (often denoted by $X$) are the $\sigsize$-dimensional feature vectors associated with the $\obssize$ training examples, while the related labels are in vector $\obs \in \obsspace$.
In the training phase, a sparse vector $\sigs$ (often denoted $\beta$ or $w$) is optimized to fit $\obs \approx \mat \sigs$ using an appropriate loss function.
In this context, support recovery is called model selection.

Based on the experimental setup of~\cite{pmlr-v119-axiotis20a}, we compare the training loss for different levels of sparsity, for all the algorithms, on linear regression and logistic regression tasks.
We use the preprocessed public datasets\footnote{\href{https://drive.google.com/file/d/1RDu2d46qGLI77AzliBQleSsB5WwF83TF/view}{https://drive.google.com/file/d/1RDu2d46qGLI77AzliBQleSsB5WwF83TF/view}} provided by~\cite{pmlr-v119-axiotis20a},
following the same preprocessing pipeline: we augment $\mat$ with an extra column equal to $1$ to allow a bias and normalize the columns of $\mat$.

We present results for regression problems in \cref{ann-reg-sec} and for classification problems in \cref{ann-class-sec}.

\subsection{Regression Datasets}\label{ann-reg-sec}

As we are working with real datasets without ground truth, we use the $\ell_2$ regression loss $\ell_2\_\text{loss}(\sig) = \frac{1}{2}\normdd{\mat\sig - \obs}$ for $\sig \in \sigspace$ for regression problems.

As shown in \cref{fig:cal_year}, \SEAZ, \SEAOMP and \SEAELS are at the same level as ELS on a regression dataset with $\sigsize$ small as in \texttt{comp-activ-harder}.
For the higher dimensional regression dataset as \texttt{year} (see \cref{fig:year}), \SEAZ performs poorly as $\spars$ increases, but \SEAELS can improve ELS performances and outperforms the other algorithms.

\begin{figure}[tbh]%
    \centering
    \includegraphics[width=\textwidth]{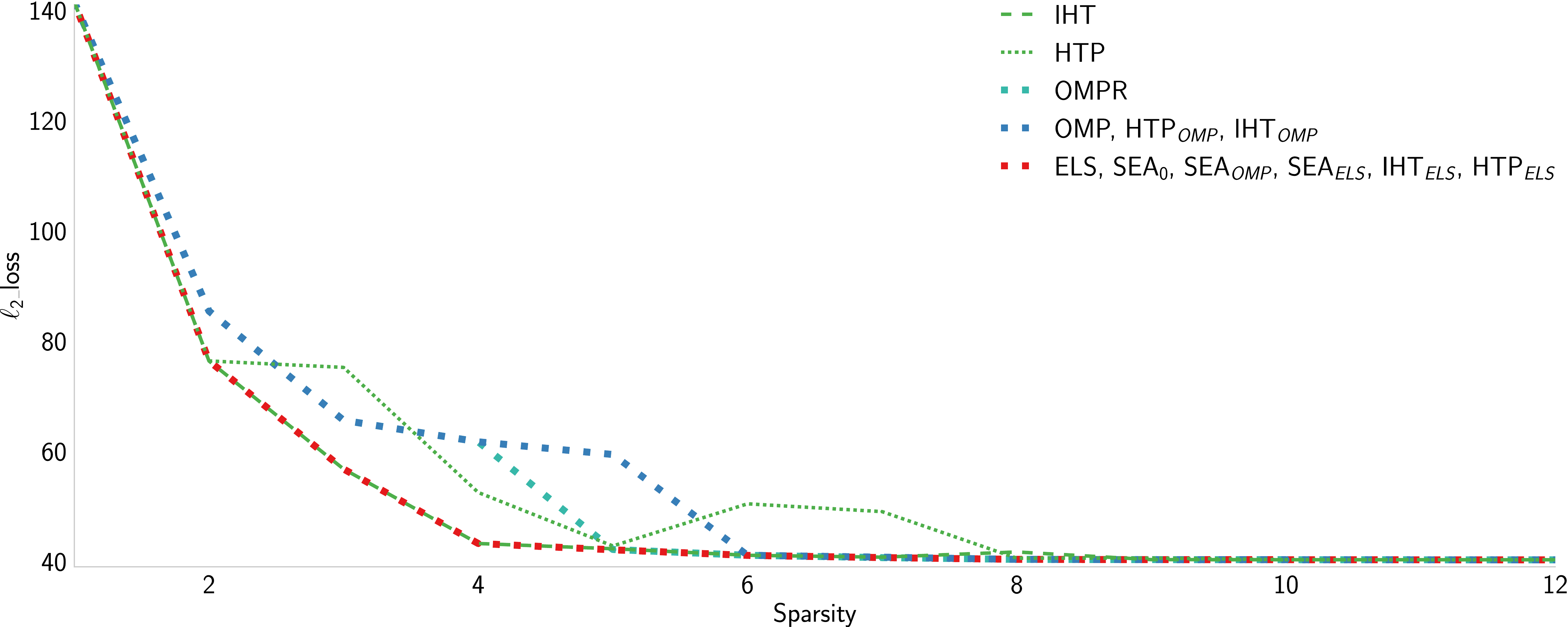} 
    \caption{Performance on regression dataset \texttt{comp-activ-harder} ($\obssize=8191$ examples, $\sigsize=12$ features).}%
    \label{fig:cal_year}%
\end{figure}

\begin{figure}[tbh]%
    \centering
    \includegraphics[width=\textwidth]{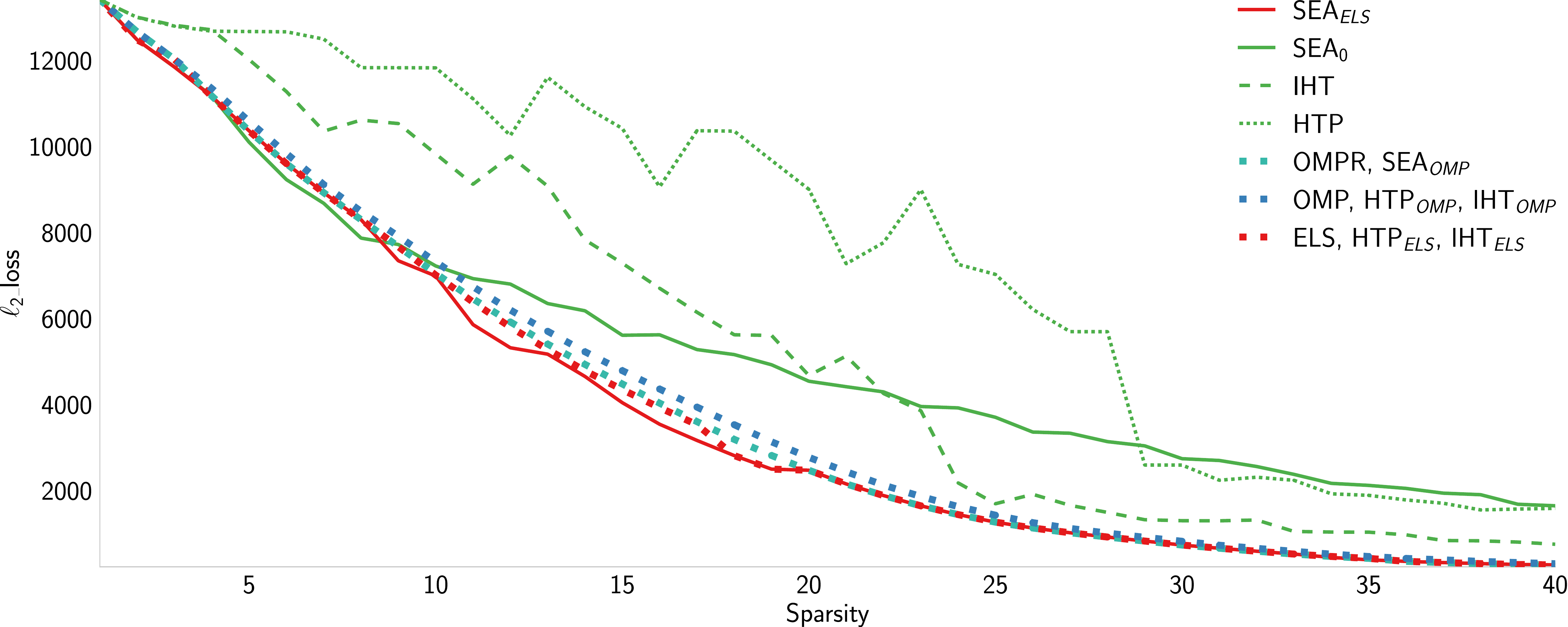} 
    \caption{ Performance on regression dataset \texttt{year} ($\obssize=463715$ examples, $\sigsize=90$ features).}%
    \label{fig:year}%
\end{figure}

In a low-dimensional problem ($n$ is small) as \texttt{cal\_housing} dataset ($\obssize=20639$ examples, $\sigsize=8$ features) in \cref{fig:comp}, we see that \SEAOMP and \SEAELS perform better than ELS, and \SEAZ outperform them for a sparsity $k \in \intint{5, 8}$.
It is worth mentioning that HTP obtains good performances for this dataset.

The same experiment is reported on \cref{fig:slice}, but for the dataset \texttt{slice} ($\obssize=53500$ examples, $\sigsize=384$ features).  This is an intermediate-dimensional problem. \cref{fig:slice} shows that \SEAZ obtains slightly worse results than \SEAELS, \SEAOMP and ELS.
The non-decreasing curve of HTP comes from its support estimation technique.
Since the coherence of the \texttt{slice} dataset is 1, HTP selects highly correlated features and fails to correct this mistake.

\begin{figure}[ht!]
    \centering
    \includegraphics[width=\linewidth]{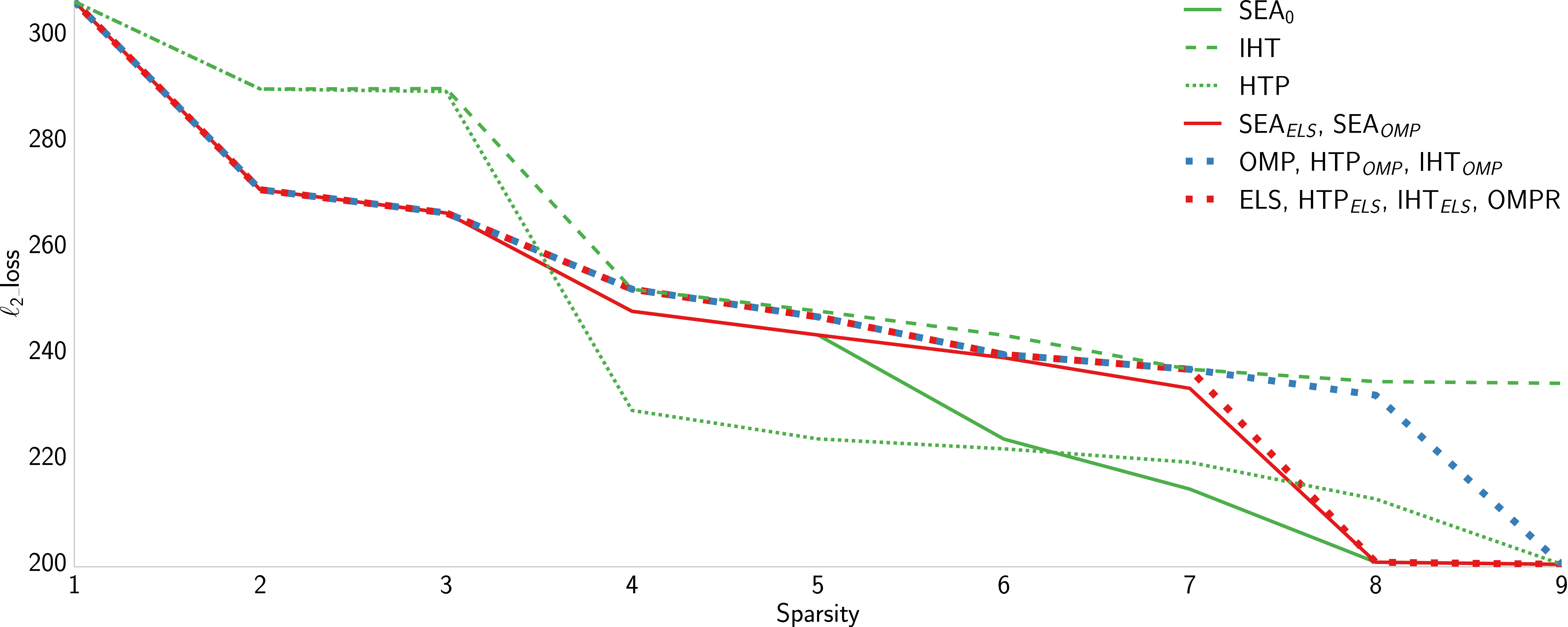}
    \caption{Performance on the regression dataset \texttt{cal\_housing} ($\obssize=20639$ examples, $\sigsize=8$ features).}
    \label{fig:comp}
\end{figure}

\begin{figure}[ht!]
    \centering
    \includegraphics[width=\linewidth]{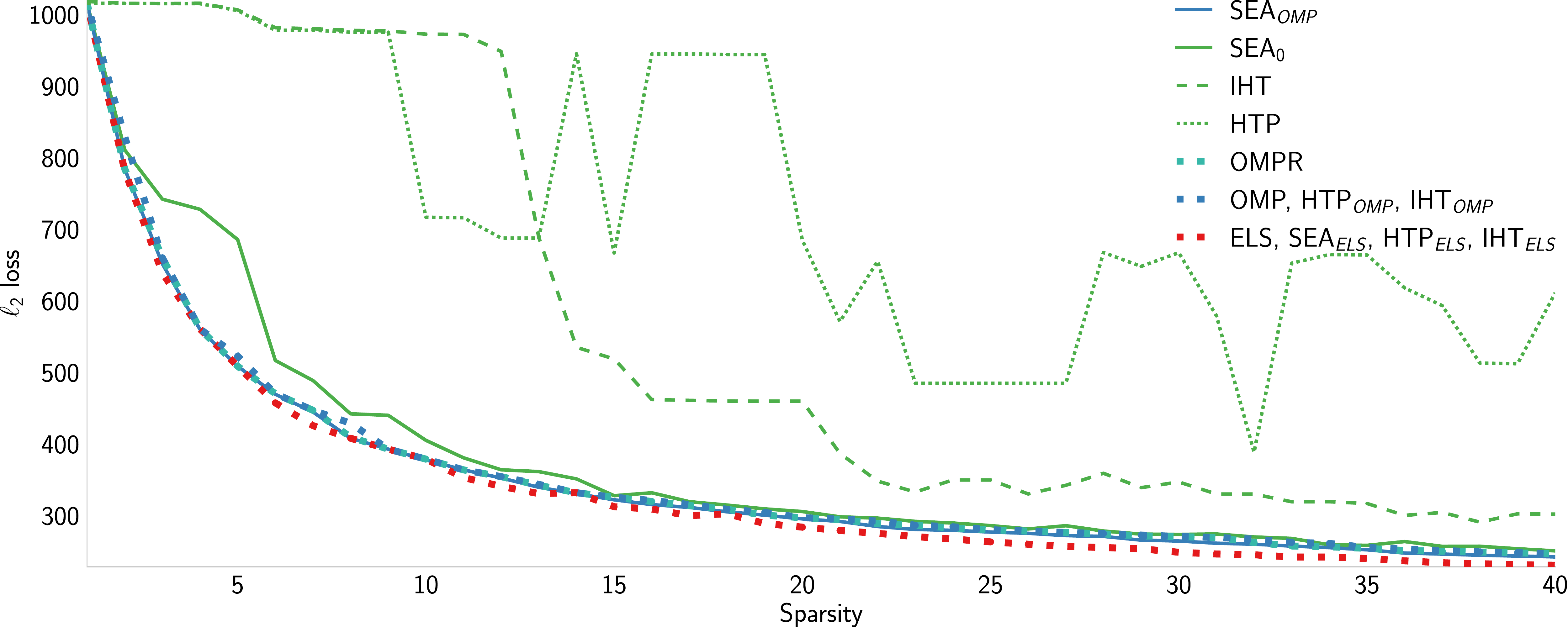}
    \caption{Performance on the regression dataset \texttt{slice} ($\obssize=53500$ examples, $\sigsize=384$ features).}
    \label{fig:slice}
\end{figure}

\subsection{Classification Datasets}\label{ann-class-sec}

In these experiments, we consider the logistic regression loss defined by $$\text{log}\_\text{loss}(\sig) = \sum_{\ii = 1}^{\obssize}\left(-\obs_\ii\log(\sigmoid((\mat\sig)_\ii)) - (1 - \obs_\ii)\log(1 - \sigmoid((\mat\sig)_\ii))\right),$$ 
where $\sigmoid(t) = \frac{1}{1 + e^{-t}}$ is the sigmoid function.

We need to adapt SEA to this new loss. In \cref{alg:SEA}, line \ref{line:SEA:update_x} is replaced by $\sigs^\iter = \underset{\substack{\sig \in \sigspace \\ \SUPP{\sig} \subseteq \supp^\iter }}{\argmin} \text{log}\_\text{loss}(\sig)$ and line \ref{line:SEA:update_explo} is replaced by $\explo^\iterp = \explo^\iter - \eta \nabla \text{log}\_\text{loss}(\sig^t)$. Similar adaptations are performed on the other algorithms.

The loss $\text{log}\_\text{loss}(\sig)$, for the \texttt{letter} dataset ($\obssize=20000$ examples, $\sigsize=16$ features), for all $k\in\intint{1,12}$ and for all algorithms is depicted in \cref{fig:letter}. We depict the same curves obtained for the \texttt{ijcnn1} dataset ($\obssize=24995$ examples,  $\sigsize=22$ features) in \cref{fig:ijcnn}.
These two last figures show that \SEAZ, \SEAOMP and \SEAELS achieve similar performances to ELS.

\begin{figure}[ht!]
    \centering
    \includegraphics[width=\linewidth]{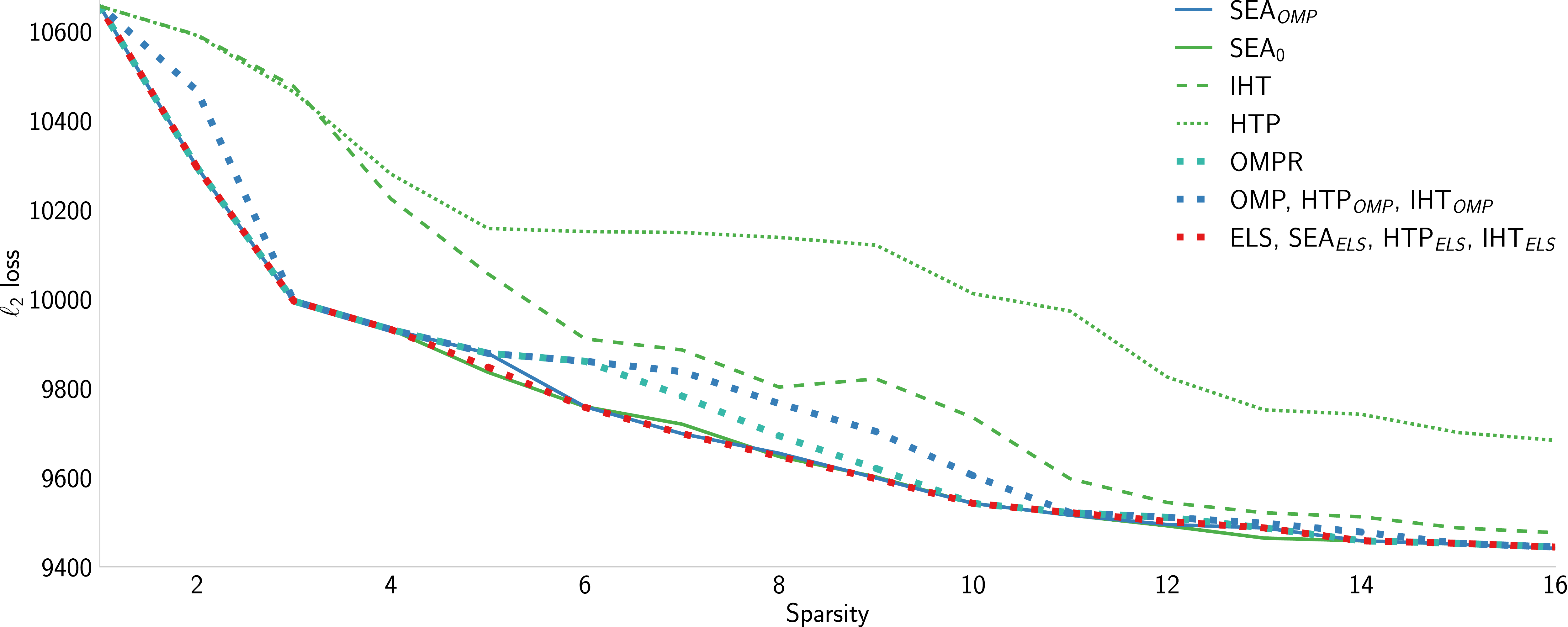}
    \caption{Performance on the classification dataset \texttt{letter} ($\obssize=20000$ examples, $\sigsize=16$ features).}
    \label{fig:letter}
\end{figure}
\vspace*{-0.4cm}
\begin{figure}[ht!]
    \centering
    \includegraphics[width=\linewidth]{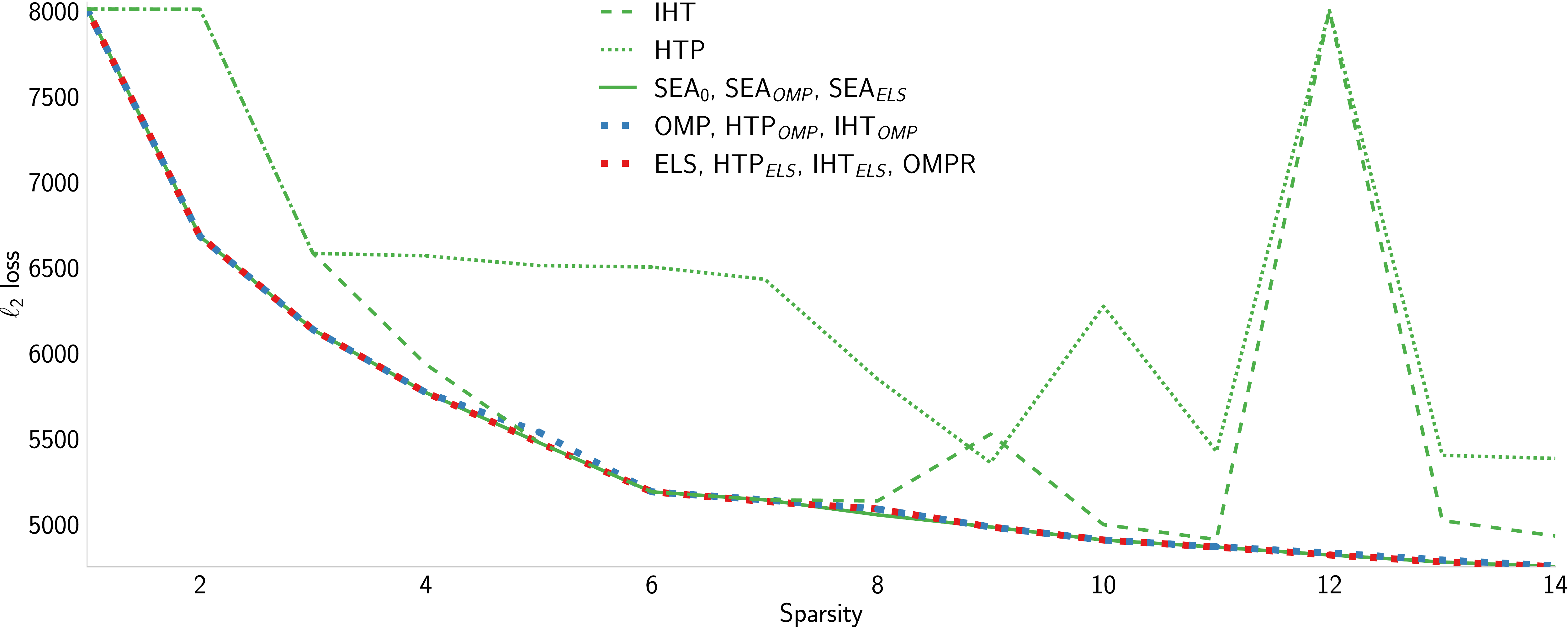}
    \caption{Performance on the classification dataset \texttt{ijcnn1} ($\obssize=24995$ examples,  $\sigsize=22$ features).}
    \label{fig:ijcnn}
\end{figure}

\clearpage


\end{document}